\def\1{\bm{1}}
\DeclareMathAlphabet{\mathsfit}{\encodingdefault}{\sfdefault}{m}{sl}
\SetMathAlphabet{\mathsfit}{bold}{\encodingdefault}{\sfdefault}{bx}{n}
\def\gC{{\mathcal{C}}}
\def\gF{{\mathcal{F}}}
\def\gG{{\mathcal{G}}}
\def\gI{{\mathcal{I}}}
\def\gJ{{\mathcal{J}}}
\def\gL{{\mathcal{L}}}
\def\gM{{\mathcal{M}}}
\def\gN{{\mathcal{N}}}
\def\gS{{\mathcal{S}}}
\def\gT{{\mathcal{T}}}
\def\gW{{\mathcal{W}}}
\def\gX{{\mathcal{X}}}
\newcommand{\E}{\mathbb{E}}
\newcommand{\R}{\mathbb{R}}
\DeclareMathOperator*{\argmax}{arg\,max}
\newcolumntype{C}[1]{>{\centering\let\newline\\\arraybackslash\hspace{0pt}}m{#1}}
\renewcommand{\hat}{\widehat}
\def\shownotes{1} 
\newcommand{\authnote}[2]{{[#1: #2]}}
\newcommand{\authnote}[2]{{}}
\newtheorem{theorem}{Theorem}[section]
\newtheorem{condition}[theorem]{Condition}
\newtheorem{proposition}[theorem]{Proposition}
\newtheorem{lemma}[theorem]{Lemma}
\newtheorem{claim}[theorem]{Claim}
\newtheorem{definition}[theorem]{Definition}
\newtheorem{example}[theorem]{Example}
\newtheorem{assumption}[theorem]{Assumption}
\numberwithin{equation}{section}
\newcommand{\poly}{\textup{poly}}
\newcommand{\perturb}{\mathcal{B}} \newcommand{\neighbors}[1]{\gN(#1)} \newcommand{\neighstar}[1]{\gN^\star(#1)}
\newcommand{\dist}{P} \newcommand{\edist}{\widehat{P}} \newcommand{\clsfr}{F} \newcommand{\lblr}{G} \newcommand{\plblr}{\lblr_{\textup{pl}}}
\newcommand{\gt}{\lblr^\star} \newcommand{\mistakes}[1]{\gM(#1)}
\newcommand{\mistakesi}[2]{\gM_{#1}(#2)}
\newcommand{\Ellzo}{L_{\textup{0-1}}}
\newcommand{\Ellrob}{R_{\perturb}}
\newcommand{\Ell}{R}
\newcommand{\Ellobj}{\gL}
\newcommand{\poprloss}{\Ellrob} \newcommand{\rob}[1]{\gS_{\perturb}(#1)} \newcommand{\cmplmt}[1]{\overline{#1}}
\newcommand{\domain}{\gX} \newcommand{\classes}{K} \newcommand{\cluster}{\gC}
\newcommand{\minclass}{\rho}
\newcommand{\err}{\textup{Err}} 
\newcommand{\errunsup}{\textup{Err}_{\textup{unsup}}}
\newcommand{\sep}{\mu} \newcommand{\expmult}{c} \newcommand{\expub}{a}
\newcommand{\ot}{\leftarrow}
\newcommand{\size}{s}
\newcommand{\lip}{\nu}
\newcommand{\smooth}{\overline{\lip}}
\newcommand{\nnlip}{\kappa}
 \newcommand{\nndepth}{p}
\newcommand{\fclass}{\gF}
\newcommand{\pmarg}{m}
 \newcommand{\pmargrob}{{\pmarg_{\perturb}}} \newcommand{\gnorm}[1]{{\vert\kern-0.25ex\vert\kern-0.25ex\vert #1 
		\vert\kern-0.25ex\vert\kern-0.25ex\vert}}
\newcommand{\opnorm}[1]{\|#1 \|_{\textup{op}}}
\newcommand{\cover}{\mathcal{N}}
\newcommand{\empd}{L_2(P_n)}
\newcommand{\comp}{\gC}
\newcommand{\radavg}{\textup{Rad}_n}
\newcommand{\ramp}{h}
\newcommand{\rampclass}{\gL} \newcommand{\margdim}{q}
\title{Theoretical Analysis of Self-Training with Deep Networks on Unlabeled Data}
\author{Colin Wei \& Kendrick Shen \& Yining Chen \& Tengyu Ma \\
Department of Computer Science\\
Stanford University\\
Stanford, CA 94305, USA \\
\texttt{\{colinwei,kshen6,cynnjjs,tengyuma\}@stanford.edu}
}
\begin{document}

\maketitle

\begin{abstract}
Self-training algorithms, which train a model to fit pseudolabels predicted by another previously-learned model, have been very successful for learning with unlabeled data using neural networks. However, the current theoretical understanding of self-training only applies to linear models. This work provides a unified theoretical analysis of self-training with deep networks for semi-supervised learning, unsupervised domain adaptation, and unsupervised learning. 
At the core of our analysis is a simple but realistic ``expansion'' assumption, which states that a low-probability subset of the data must expand to a neighborhood with large probability relative to the subset. 
We also assume that neighborhoods of examples in different classes have minimal overlap. 
We prove that under these assumptions, the minimizers of population objectives based on self-training and input-consistency regularization will achieve high accuracy with respect to ground-truth labels. By using off-the-shelf generalization bounds, we immediately convert this result to sample complexity guarantees for neural nets that are polynomial in the margin and Lipschitzness. 
Our results help explain the empirical successes of recently proposed self-training algorithms which use input consistency regularization.
\end{abstract}

\section{Introduction}	
Though supervised learning with neural networks has become standard and reliable, it still often requires massive \textit{labeled} datasets. 
As labels can be expensive or difficult to obtain, leveraging unlabeled data in deep learning has become an active research area. 
Recent works in semi-supervised learning~\citep{chapelle2010semi,kingma2014semi,kipf2016semi,laine2016temporal,sohn2020fixmatch,xie2020self} and unsupervised domain adaptation~\citep{ben2010theory,ganin2015unsupervised,ganin2016domain,tzeng2017adversarial,hoffman2018cycada,shu2018dirt,zhang2019bridging} leverage lots of unlabeled data as well as labeled data from the same distribution or a related distribution.
Recent progress in unsupervised learning or representation learning ~\citep{hinton1999unsupervised,doersch2015unsupervised,gidaris2018unsupervised,misra2020self,chen2020simple,chen2020improved,grill2020bootstrap} learns high-quality representations without using any labels.

Self-training is a common algorithmic paradigm for leveraging unlabeled data with deep networks. Self-training methods train a model to fit pseudolabels, that is, \sloppy predictions on unlabeled data made by a previously-learned model~\citep{yarowsky1995unsupervised,grandvalet2005semi,lee2013pseudo}.
Recent work also extends these methods to enforce stability of predictions under input transformations such as adversarial perturbations~\citep{miyato2018virtual} and data augmentation~\citep{xie2019unsupervised}. These approaches, known as input consistency regularization, have been successful in semi-supervised learning~\citep{sohn2020fixmatch,xie2020self}, unsupervised domain adaptation~\citep{french2017self,shu2018dirt}, and unsupervised learning~\citep{hu2017learning,grill2020bootstrap}.

Despite the empirical successes, theoretical progress in understanding how to use unlabeled data has lagged. Whereas supervised learning is relatively well-understood, statistical tools for reasoning about unlabeled data are not as readily available. Around 25 years ago,~\citet{vapnik1995nature} proposed the transductive SVM for unlabeled data, which can be viewed as an early version of self-training, yet there is little work showing that this  method improves sample complexity~\citep{derbeko2004error}.
 Working with unlabeled data requires proper assumptions on the input distribution~\citep{ben2008does}. 
 Recent papers~\citep{carmon2019unlabeled,raghunathan2020understanding,chen2020self,kumar2020understanding,oymak2020statistical} analyze self-training in various settings, but mainly for linear models and often require that the data is Gaussian or near-Gaussian.~\citet{kumar2020understanding} also analyze self-training in a setting where gradual domain shift occurs over multiple timesteps but assume a small Wasserstein distance bound on the shift between consecutive timesteps. Another line of work leverages unlabeled data using non-parametric methods, requiring unlabeled sample complexity that is \textit{exponential} in dimension~\citep{rigollet2007generalization,singh2009unlabeled,urner2013probabilistic}. 

This paper provides a unified theoretical analysis of self-training \textit{with deep networks} for semi-supervised learning, unsupervised domain adaptation, and unsupervised learning.
Under a simple and realistic expansion assumption on the data distribution, we show that self-training with input consistency regularization using a deep network can achieve high accuracy on true labels, using unlabeled sample size that is polynomial in the margin and Lipschitzness of the model.
Our analysis provides theoretical intuition for recent empirically successful  self-training algorithms which rely on input consistency regularization~\citep{berthelot2019mixmatch,sohn2020fixmatch,xie2020self}.

Our expansion assumption intuitively states that the data distribution has good continuity within each class. Concretely, letting $\dist_i$ be the distribution of data conditioned on class $i$, expansion states that for small subset $S$ of examples with class $i$, 
\begin{align}\label{eq:intro_exp}
	\dist_i(\textup{neighborhood of }S) \ge \expmult \dist_i(S)
\end{align}
where  and $\expmult > 1$ is the expansion factor. The neighborhood will be defined to incorporate data augmentation, but for now can be thought of as a collection of points with a small $\ell_2$ distance to $S$. This notion is an extension of the Cheeger constant (or isoperimetric or expansion constant)~\citep{cheeger1969lower} which has been studied extensively in graph theory~\citep{chung1997spectral}, combinatorial optimization~\citep{mohar1993eigenvalues,raghavendra2010graph}, sampling~\citep{kannan1995isoperimetric,lovasz2007geometry,zhang2017hitting}, and even in early versions of self-training~\citep{balcan2005co} for the co-training setting~\citep{blum1998combining}. Expansion says that the manifold of each class has sufficient connectivity, as every subset $S$ has a neighborhood larger than $S$. We give examples of distributions satisfying expansion in Section~\ref{sec:expansion}.
We also require a separation condition stating that there are few neighboring pairs from different classes. 

Our algorithms leverage expansion by using input consistency regularization~\citep{miyato2018virtual,xie2019unsupervised} to encourage predictions of a classifier $G$ to be consistent on neighboring examples:
\begin{align}
\Ell(G) = \E_{x}[\max_{\textup{neighbor } x'}\1(\lblr(x) \ne \lblr(x'))]
\end{align}
For unsupervised domain adaptation and semi-supervised learning, we analyze an algorithm which fits $\lblr$ to pseudolabels on unlabeled data while regularizing input consistency. 
Assuming expansion and separation, we prove that the fitted model will denoise the pseudolabels and achieve high accuracy on the true labels (Theorem~\ref{thm:pop_denoise_main}). This explains the empirical phenomenon that self-training on pseudolabels often improves over the pseudolabeler, despite no access to true labels. 

For unsupervised learning, we consider finding a classifier $G$ that minimizes the input consistency regularizer with the constraint that enough examples are assigned each label. In Theorem~\ref{thm:pop_unsup_main}, we show that assuming expansion and separation, the learned classifier will have high accuracy in predicting true classes, up to a permutation of the labels (which can't be recovered without true labels).

The main intuition of the theorems is as follows: input consistency regularization ensures that the model is locally consistent, and the expansion property magnifies the local consistency to global consistency within the same class. In the unsupervised domain adaptation setting, as shown in Figure~\ref{fig:intro} (right), the incorrectly pseudolabeled examples (the red area) are gradually denoised by their correctly pseudolabeled neighbors (the green area), whose probability mass is non-trivial (at least $c-1$ times the mass of the mistaken set by expansion). We note that expansion is only required on the \textit{population} distribution, but self-training is performed on the empirical samples. Due to the extrapolation power of parametric methods, the local-to-global consistency effect of expansion occurs \textit{implicitly} on the population. In contrast, nearest-neighbor methods would require expansion to occur \textit{explicitly} on empirical samples, suffering the curse of dimensionality as a result. We provide more details below, and visualize this effect in Figure~\ref{fig:intro} (left).

\begin{figure}
\centering
\includegraphics[width=0.30\textwidth]{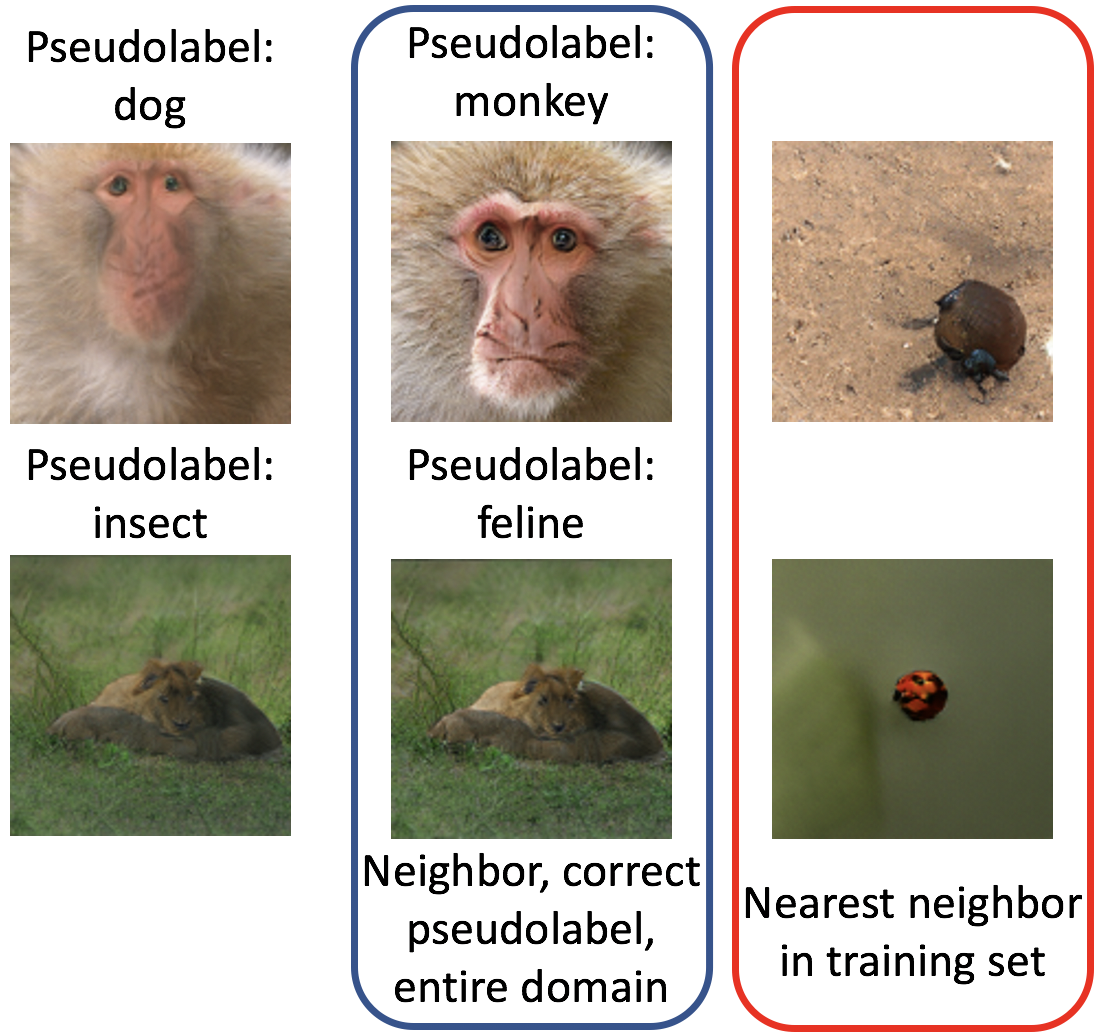}
\includegraphics[width=0.58\textwidth]{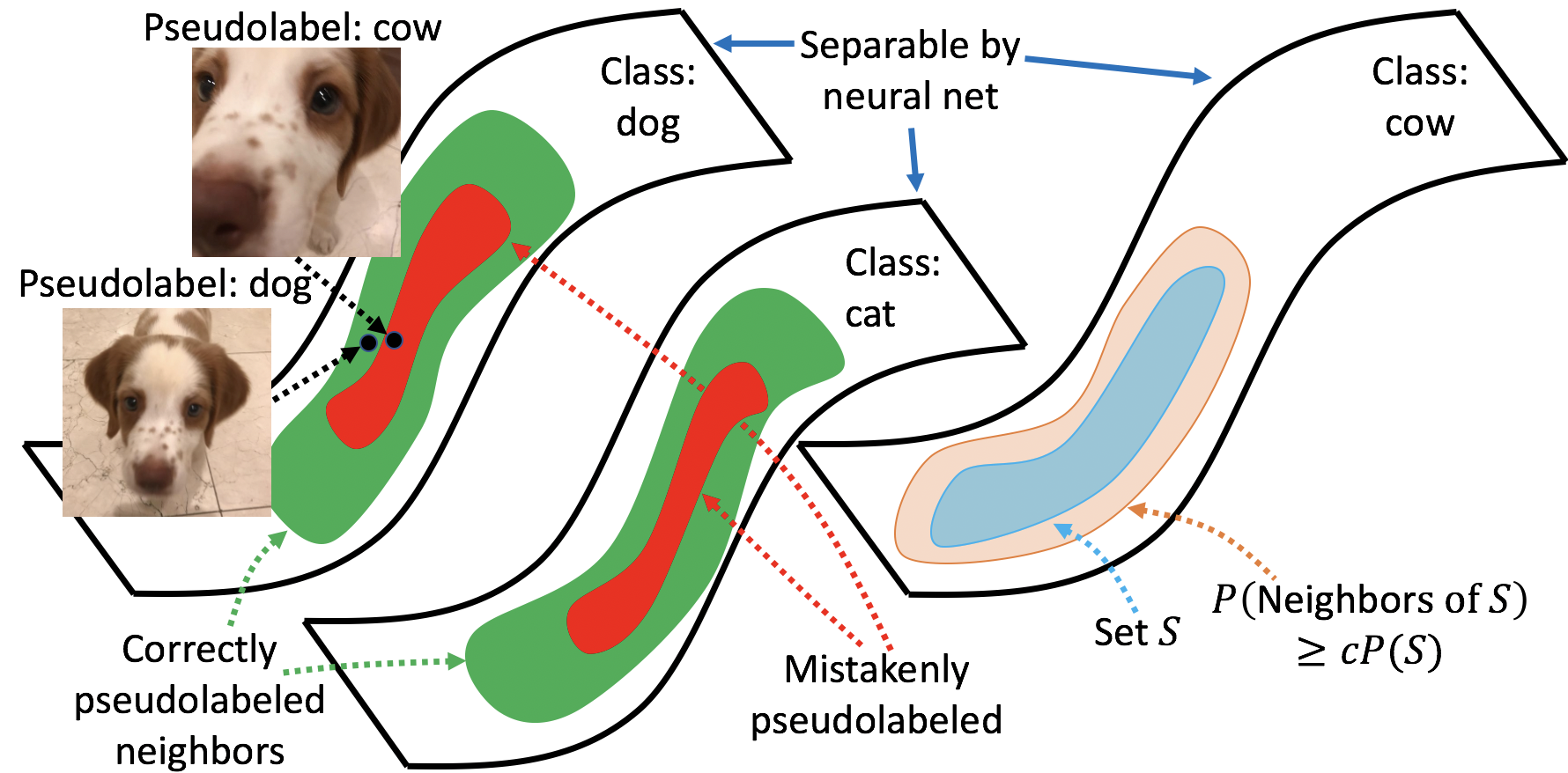}
\caption{\textbf{Left: demonstrating expansion assumption.} Verifying the expansion assumption requires access to the population distribution and therefore we use the distribution generated by BigGAN~\citep{brock2018large}. 
	We
	display typical examples of mistakenly classified images and their correctly classified neighbors, found by searching the \textit{entire} GAN manifold (not just the training set). 
For contrast, we also display their nearest neighbors in the training set of 100K GAN images, which are much further away. This supports the intuition and assumption that expansion holds for the population set but not the empirical set. (More details are in Section~\ref{sec:gan_exp}.)
	\textbf{Right: assumptions and setting for pseudolabeling.} For self-training with pseudolabels, the region of correctly pseudolabeled examples (in green) will be used to denoise examples with incorrect pseudolabels (in red), because by expansion, the green area will have a large mass which is at least $\expmult - 1$ times the mass of the red area. As explained in the introduction, this ensures that a classifier which fits the pseudolabels and is consistent w.r.t. input transformations will achieve high accuracy on true labels.
} \label{fig:intro}
\end{figure}
To our best knowledge,
this paper gives the first analysis with polynomial sample complexity guarantees for deep neural net models for unsupervised learning, semi-supervised learning, and unsupervised domain adaptation. 
Prior works~\citep{rigollet2007generalization,singh2009unlabeled,urner2013probabilistic} analyzed nonparametric methods that essentially recover the data distribution exactly with unlabeled data, but require sample complexity exponential in dimension. 
Our approach optimizes parametric loss functions and regularizers, so guarantees involving the population loss can be converted to finite sample results using off-the-shelf generalization bounds (Theorem~\ref{thm:gen_rob_pl}). When a neural net can separate ground-truth classes with large margin, the sample complexities from these bounds can be small, that is, polynomial in dimension.

Finally, we note that our regularizer $R(\cdot)$ corresponds to enforcing consistency w.r.t. adversarial examples, which was shown to be empirically helpful for semi-supervised learning~\citep{miyato2018virtual,qiao2018deep} and unsupervised domain adaptation~\citep{shu2018dirt}. Moreover, we can extend the notion of neighborhood in~\eqref{eq:intro_exp} to include data augmentations of examples, which will increase the neighborhood size and therefore improve the expansion. Thus, our theory can help explain empirical observations that consistency regularization based on aggressive data augmentation or adversarial training can improve performance with unlabeled data~\citep{shu2018dirt,xie2019unsupervised,berthelot2019mixmatch,sohn2020fixmatch,xie2020self,chen2020simple}.

In summary, our contributions include: 1) we propose a simple and realistic expansion assumption which states that the data distribution has connectivity within the manifold of a ground-truth class 2) using this expansion assumption, we provide ground-truth accuracy guarantees for self-training algorithms which regularize input consistency on unlabeled data, and 3) our analysis is easily applicable to deep networks with polynomial unlabeled samples via off-the-shelf generalization bounds.

\subsection{Additional related work}

Self-training via pseudolabeling~\citep{lee2013pseudo} or min-entropy objectives~\citep{grandvalet2005semi} has been widely used in both semi-supervised learning~\citep{laine2016temporal,tarvainen2017mean,iscen2019label,yalniz2019billion,berthelot2019mixmatch,xie2020self,sohn2020fixmatch} and unsupervised domain adaptation~\citep{long2013transfer,french2017self,saito2017asymmetric,shu2018dirt,zou2019confidence}. Our paper studies input consistency regularization, which enforces stability of the prediction w.r.t transformations of the unlabeled data. In practice, these transformations include adversarial perturbations, which was proposed as the VAT objective~\citep{miyato2018virtual}, as well as data augmentations~\citep{xie2019unsupervised}.

For unsupervised learning, our self-training objective is closely related to BYOL~\citep{grill2020bootstrap}, a recent state-of-the-art method which trains a student model to match the representations predicted by a teacher model on strongly augmented versions of the input. 
Contrastive learning is another popular method for unsupervised representation learning which encourages representations of ``positive pairs'', ideally consisting of examples from the same class, to be close, while pushing negative pairs far apart~\citep{mikolov2013distributed,oord2018representation,arora2019theoretical}. Recent works in contrastive learning achieve state-of-the-art representation quality by using strong data augmentation to form positive pairs~\citep{chen2020simple,chen2020improved}. The role of data augmentation here is in spirit similar to our use of input consistency regularization.
Less related to our setting are algorithms which learn representations by solving self-supervised pretext tasks, such as inpainting and predicting rotations~\citep{pathak2016context,noroozi2016unsupervised,gidaris2018unsupervised}.~\citet{lee2020predicting} theoretically analyze self-supervised learning, but their analysis applies to a different class of algorithms than ours.

Prior theoretical works analyze contrastive learning by assuming access to document data distributed according to a particular topic modeling setup~\citep{tosh2020contrastive} or pairs of independent samples within the same class~\citep{arora2019theoretical}. However, the assumptions required for these analyses do not necessarily apply to vision, where positive pairs apply different data augmentations to the same image, and are therefore strongly correlated.
Other papers analyze information-theoretic properties of representation learning~\citep{tian2020makes,tsai2020demystifying}.

Prior works analyze continuity or ``cluster'' assumptions for semi-supervised learning which are related to our notion of expansion~\citep{seeger2000learning,rigollet2007generalization,singh2009unlabeled,urner2013probabilistic}. However, these papers leverage unlabeled data using non-parametric methods, requiring unlabeled sample complexity that is exponential in the dimension. On the other hand, our analysis is for parametric methods, and therefore the unlabeled sample complexity can be low when a neural net can separate the ground-truth classes with large margin.

Co-training is a classical version of self-training which requires two distinct ``views'' (i.e., feature subsets) of the data, each of which can be used to predict the true label on its own~\citep{blum1998combining,dasgupta2002pac,balcan2005co}. For example, to predict the topic of a webpage, one view could be the incoming links and another view could be the words in the page.
The original co-training algorithms~\citep{blum1998combining,dasgupta2002pac} assume that the two views are independent conditioned on the true label and leverage this independence to obtain accurate pseudolabels for the unlabeled data. 
By contrast, if we cast our setting into the co-training framework by treating an example and a randomly sampled neighbor as the two views of the data, the two views are highly correlated. 
~\citet{balcan2005co} relax the requirement on independent views of co-training, also by using an ``expansion'' assumption. Our assumption is closely related to theirs and conceptually equivalent if we cast our setting into the co-training framework by treating neighboring examples are two views. 
However, their analysis requires confident pseudolabels to all be accurate and does not rigorously account for potential propagation of errors from their algorithm. 
In contrast, our contribution is to propose and analyze an objective function involving input consistency regularization whose minimizer \textit{denoises} errors from potentially incorrect pseudolabels. 
We also provide finite sample complexity bounds for the neural network hypothesis class and analyze unsupervised learning algorithms. 
 
Alternative theoretical analyses of unsupervised domain adaptation assume bounded measures of discrepancy between source and target domains~\citep{ben2010theory,zhang2019bridging}.~\citet{balcan2010discriminative} propose a PAC-style framework for analyzing semi-supervised learning, but their bounds require the user to specify a notion of compatability which incorporates prior knowledge about the data, and do not apply to domain adaptation.
~\citet{globerson2017effective} demonstrate semi-supervised learning can outperform supervised learning in labeled sample complexity but assume full knowledge of the unlabeled distribution.~\citep{mobahi2020self} show that for kernel methods, self-distillation, a variant of self-training, can effectively amplify regularization. Their analysis is for kernel methods, whereas our analysis applies to deep networks under data assumptions.

\section{Preliminaries and notations}

We let $\dist$ denote a distribution of unlabeled examples over input space $\domain$. For unsupervised learning, $\dist$ is the only relevant distribution. For unsupervised domain adaptation, we also define a source distribution $\dist_{\textup{src}}$ and let $\plblr$ denote a source classifier trained on a labeled dataset sampled from $\dist_{\textup{src}}$. 
To translate these definitions to semi-supervised learning, we set $\dist_{\textup{src}}$ and $\dist$ to be the same, except $\dist_{\textup{src}}$ gives access to labels. 
We analyze algorithms which only depend on $\dist_{\textup{src}}$ through $\plblr$. 

We consider classification and assume the data is partitioned into $\classes$ classes, where the class of $x \in \domain$ is given by the ground-truth $\gt(x)$ for $\gt : \domain \rightarrow [\classes]$. We let $\dist_i$ denote the class-conditional distribution of $x$ conditioned on $\gt(x) = i$. We assume that each example $x$ has a unique label, so $\dist_i$, $\dist_j$ have disjoint support for $i \ne j$. Let $\edist \triangleq \{x_1, \ldots, x_n\} \subset \domain$ denote $n$ i.i.d. unlabeled training examples from $\dist$. 
We also use $\edist$ to refer to the uniform distribution over these examples. We let $\clsfr : \domain \rightarrow \R^{\classes}$ denote a learned scoring function (e.g. the continuous logits output by a neural network), and $\lblr : \domain \rightarrow [\classes]$ the discrete labels induced by $\clsfr$: $\lblr(x) \triangleq \argmax_i \clsfr(x)_i$ (where ties are broken lexicographically).

\noindent{\bf Pseudolabels.} Pseudolabeling methods are a form of self-training for semi-supervised learning and domain adaptation where the source classifier $\plblr : \domain \rightarrow [\classes]$ is used to predict pseudolabels on the unlabeled target data~\citep{lee2013pseudo}. These methods then train a fresh classifier to fit these pseudolabels, for example, using the standard cross entropy loss:
$
	L_{\textup{pl}}(\clsfr) \triangleq  \E_{\edist}[\ell_{\textup{cross-ent}}(\clsfr(x), \plblr(x))]
$.
Our theoretical analysis applies to a pseudolabel-based objective. Other forms of self-training include entropy minimization, which is closely related, and in certain settings, equivalent to pseudolabeling where the pseudolabels are updated every iteration~\citep{lee2013pseudo,chen2020self}.

\section{Expansion property and guarantees for unsupervised learning}
In this section we will first introduce our key assumption on expansion.
 We then study the implications of expansion for unsupervised learning.
 We show that if a classifier is consistent w.r.t. input transformations and predicts each class with decent probability, the learned labels will align with ground-truth classes up to permutation of the class indices (Theorem~\ref{thm:pop_unsup_main}). 

\subsection{Expansion property}
\label{sec:expansion}
We introduce the notion of expansion. As our theory studies objectives which enforce stability to input transformations, we will first model allowable transformations of the input $x$ by the set $\perturb(x)$, defined below. We let $\gT$ denote some set of transformations obtained via data augmentation, and define $\perturb(x) \triangleq \{x' : \exists T \in \gT \textup{ such that } \|x' - T(x)\| \le r\}$ to be the set of points with distance $r$ from some data augmentation of $x$. We can think of $r$ as a value much smaller than the typical norm of $x$, so the probability $\dist(\perturb(x))$ is exponentially small in dimension. Our theory easily applies to other choices of $\perturb$, though we set this definition as default for simplicity. Now we define the neighborhood of $x$, denoted by $\neighbors{x}$, as the set of points whose transformation sets overlap with that of $x$: 
\begin{align}
\neighbors{x} = \{x' : \perturb(x) \cap \perturb(x') \ne \emptyset\}
\end{align}
For $S \subseteq \domain$, we define the neighborhood of $S$ as the union of neighborhoods of its elements: 
$
\neighbors{S} \triangleq \cup_{x \in S} \neighbors{x}
$. 
We now define the expansion property of the distribution $\dist$, which lower bounds the neighborhood size of low probability sets and captures connectivity of the distribution in input space.
\begin{definition}[$(\expub, \expmult)$-expansion] \label{def:mult_expansion}
We say that the class-conditional distribution $\dist_i$ satisfies $(\expub,  \expmult)$-expansion if for all $V \subseteq \domain$ with $\dist_i(V) \le \expub$, the following holds:
\begin{align}
	\dist_i(\neighbors{V}) \ge \min\{\expmult \dist_i(V), 1\}
\end{align}
If $\dist_i$ satisfies $(\expub, \expmult)$-expansion for all $\forall i \in [\classes]$, then we say $\dist$ satisfies $(\expub, \expmult)$-expansion.
\end{definition}

We note that this definition considers the \textit{population} distribution, and expansion is not expected to hold on the training set, because all empirical examples are far away from each other, and thus the neighborhoods of training examples do not overlap. 
The notion is closely related to the \textit{Cheeger constant}, which is used to bound mixing times and hitting times for sampling from continuous distributions~\citep{lovasz2007geometry,zhang2017hitting}, and \textit{small-set expansion}, which quantifies connectivity of graphs~\citep{hoory2006expander,raghavendra2010graph}. In particular, when the neighborhood is defined to be the collection of points with $\ell_2$ distance at most $r$ from the set, then the expansion factor $\expmult$ is bounded below by $\exp(\eta r)$, where $\eta$ is the Cheeger constant~\citep{zhang2017hitting}.
In Section~\ref{sec:gan_exp}, we use GANs to demonstrate that expansion is a realistic property in vision.
For unsupervised learning, we require expansion with $a= 1/2$ and $c > 1$:
\begin{assumption} [Expansion requirement for unsupervised learning] \label{ass:unsup_exp}
We assume that $\dist$ satisfies $(1/2, \expmult)$-expansion on $\domain$ for $\expmult > 1$.
\end{assumption}

We also assume that ground-truth classes are separated in input space. We define the population consistency loss $\Ellrob(\lblr)$ as the fraction of examples where $\lblr$ is not robust to input transformations:
\begin{align}
	\Ellrob(\lblr) \triangleq \E_{\dist} [\1(\exists x' \in \perturb(x) \textup{ such that } \lblr(x') \ne \lblr(x))]
\end{align} 
We state our assumption that ground-truth classes are far in input space below:
\begin{assumption} [Separation] \label{ass:pl_separation}
	We assume $\dist$ is $\perturb$-separated with probability $1 - \sep$ by ground-truth classifier $\gt$, as follows:
$
		\Ellrob(\gt) \le \sep
$.
\end{assumption}
Our accuracy guarantees in Theorems~\ref{thm:pop_denoise_main} and~\ref{thm:pop_unsup_main} will depend on $\sep$. We expect $\sep$ to be small or negligible (e.g. inverse polynomial in dimension). The separation requirement requires the distance between two classes to be larger than $2r$, the $\ell_2$ radius in the definition of $\perturb(\cdot)$. However, $r$ can be much smaller than the norm of a typical example, so our expansion requirement can be weaker than a typical notion of ``clustering'' which requires intra-class distances to be smaller than inter-class distances. We demonstrate this quantitatively, starting with a mixture of Gaussians.
\begin{example} [Mixture of isotropic Gaussians]
\label{ex:gaussian} 
Suppose $\dist$ is a mixture of $\classes$ Gaussians $\dist_i \triangleq \gN(\tau_i, \frac{1}{d} I_{d \times d})$ with isotropic covariance and $\classes < d$, corresponding to $\classes$ separate classes.\footnote{The classes are not disjoint, as is assumed by our theory for simplicity. However, they are approximately disjoint, and it is easy to modify our analysis to accomodate this. We provide details in Section~\ref{sec:isotropic_ex}.}
Suppose the transformation set $\perturb(x)$ is an $\ell_2$-ball with radius $\frac{1}{2\sqrt{d}}$ around $x$, so there is no data augmentation and $r=\frac{1}{2\sqrt{d}}$.
Then $\dist$ satisfies $(0.5, 1.5)$-expansion. Furthermore, if the minimum distance between means satisfies $\min_{i,j}\|\tau_i - \tau_j\|_2 \gtrsim \frac{\sqrt{\log d }}{\sqrt{d}}$, then $\dist$ is $\perturb$-separated with probability $1 - 1/\poly(d)$. \end{example}
In the example above, the population distribution satisfies expansion, but the empirical distribution \textit{does not}. The minimum distance between any two empirical examples is $\Omega(1)$ with high probability, so they cannot be neighbors of each other when $r=\frac{1}{2\sqrt{d}}$.
Furthermore, the intra-class distance, which is $\Omega(1)$, is much larger than the distance between the means, which is assumed to be $\gtrsim 1/\sqrt{d}$. 
Therefore, trivial distanced-based clustering algorithms on empirical samples do not apply. Our unsupervised learning algorithm in Section~\ref{sec:unsup} can approximately recover the mixture components with polynomial samples, up to $O(1/\poly(d))$ error. Furthermore, this is almost information-theoretically optimal: by total variation distance, $\Omega(\frac{1}{\sqrt{d}})$ distance between the means is required to recover the mixture components. 

The example extends to log-concave distributions via more general isoperimetric inequalities~\citep{bobkov1999isoperimetric}.  Thus, our analysis applies to the setting of prior work~\citep{chen2020self}, which studied self-training with linear models on mixtures of Gaussian or log-concave distributions. 

The main benefit of our analysis, however, is that it holds for much richer family of distributions than Gaussians, compared to prior work on self-training which only considered Gaussian or near-Gaussian distributions~\citep{raghunathan2020understanding,chen2020self,kumar2020understanding}. We demonstrate this in the following mixture of manifolds example:
\begin{example} [Mixture of manifolds]
	\label{ex:manifold}
	Suppose each class-conditional distribution $P_i$ over an ambient space $\R^{d'}$, where $d' > d$, is generated by some $\kappa$-bi-Lipschitz\footnote{A $\kappa$-bi-Lipschitz function $f$ satisfies that $\frac{1}{\kappa}\|x-y\|\le |f(x)-f(y)| \le \kappa \|x-y\|$.} generator $Q_i: \R^d \rightarrow \R^{d'}$ on latent variable $z\in \R^d$: 
	\vspace{-0.1in}
	\begin{align}
	x \sim P_i \Leftrightarrow x = Q_i(z), z\sim \gN(0, \frac{1}{d} \cdot I_{d \times d}) \nonumber
	\end{align}
	We set the transformation set $\perturb(x)$ to be an $\ell_2$-ball with radius $\frac{\kappa}{2\sqrt{d}}$ around $x$, so there is no data augmentation and $r=\frac{\kappa}{2\sqrt{d}}$.
			Then, $P$ satisfies $(0.5, 1.5)$-expansion. \end{example}
Figure~\ref{fig:intro} (right) provides a illustration of expansion on manifolds. Note that as long as $\kappa \ll d^{1/4}$,  the radius $\kappa/(2\sqrt{d})$ is much smaller than the norm of the data points (which is at least on the order of $1/\kappa$). This suggests that the generator can non-trivially scramble the space and still maintain meaningful expansion with small radius. 
In Section~\ref{sec:isotropic_ex}, we prove the claims made in our examples. 

\subsection{Population guarantees for unsupervised learning}
\label{sec:unsup}
We design an unsupervised learning objective which leverages the expansion and separation properties. Our objective is on the population distribution, but it is parametric, so we can extend it to the finite sample case in Section~\ref{sec:finite_sample}. 
We wish to learn a classifier $\lblr : \domain \rightarrow [\classes]$ using only unlabeled data, such that predicted classes align with ground-truth classes. Note that without observing any labels, we can only learn ground-truth classes up to permutation, leading to the following permutation-invariant error defined for a classifier $\lblr$:
$$
\errunsup(\lblr) \triangleq \min_{\textup{permutation } \pi : [\classes] \to [\classes]} \E[\1(\pi(\lblr(x)) \ne \gt(x))]
$$
We study the following unsupervised population objective over classifiers $\lblr : \domain \rightarrow [\classes]$, which encourages input consistency while ensuring that predicted classes have sufficient probability. 
\begin{align}
	\min_{\lblr} \ \Ellrob(\lblr) ~~~	\textup{subject to} ~~\min_{y \in [\classes]} \E_\dist[\1(\lblr(x) = y)] > \max \left\{\frac{2}{\expmult - 1}, 2\right\} \Ellrob(\lblr) 
\label{eq:unsup_obj}
\end{align}
Here $\expmult$ is the expansion coefficient in Assumption~\ref{ass:unsup_exp}. 
The constraint ensures that the probability of any predicted class is larger than the input consistency loss. Let $\minclass \triangleq \min_{y \in [\classes]} \dist(\{x : \gt(x) = y\})$ denote the probability of the smallest ground-truth class. The following theorem shows that when $\dist$ satisfies expansion and separation, the global minimizer of the objective~\eqref{eq:unsup_obj} will have low error. 
\begin{theorem}
	\label{thm:pop_unsup_main}
	Suppose that Assumptions~\ref{ass:unsup_exp} and~\ref{ass:pl_separation} hold for some $\expmult, \sep$ such that $\minclass > \max\{\frac{2}{\expmult - 1}, 2\} \sep$. Then any minimizer $\widehat{\lblr}$ of~\eqref{eq:unsup_obj} satisfies
	\begin{align}\label{eq:pop_unsup_main:1}
		\errunsup(\widehat{\lblr}) \le \max\left \{\frac{c}{c - 1}, 2\right\} \sep
		\end{align}  
\end{theorem}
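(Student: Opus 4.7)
Let $M_{ij} = \{x : \gt(x) = j,\ \widehat{\lblr}(x) = i\}$ and let $R = \{x : \exists\,x' \in \perturb(x),\ \widehat{\lblr}(x') \ne \widehat{\lblr}(x)\}$ be the non-robust set, so $\Ellrob(\widehat{\lblr}) = \dist(R)$. First, $\gt$ is feasible for~\eqref{eq:unsup_obj}: Assumption~\ref{ass:pl_separation} gives $\Ellrob(\gt) \le \sep$, and $\minclass > \max\{\frac{2}{\expmult-1}, 2\}\sep$ yields the probability constraint. Hence $\Ellrob(\widehat{\lblr}) \le \sep$. Let $\sigma(j) = \argmax_i \dist_j(M_{ij})$ be the plurality assignment. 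The plan is to show (A) $\sigma$ is a bijection, (B) $\dist_j(M_{\sigma(j), j}) \ge 1/2$ for every $j$, and then use expansion on the mistake set under the permutation $\pi = \sigma^{-1}$.

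The central technical tool is the following denoising lemma: for any $j$ and any $V = \bigcup_{i \in T} M_{ij}$ with $\dist_j(V) \le 1/2$, one has $\dist_j(V) \le \frac{\expmult}{\expmult-1}\dist_j(R)$. To prove it, set $V_{\mathrm{rob}} = V \setminus R$. The key observation is that if $x, x'$ are both robust and $x' \in \neighbors{x}$, then any $y \in \perturb(x) \cap \perturb(x')$ satisfies $\widehat{\lblr}(y) = \widehat{\lblr}(x) = \widehat{\lblr}(x')$, forcing $\widehat{\lblr}(x) = \widehat{\lblr}(x')$. Combined with the closure of $V$ under the ``same $\widehat{\lblr}$-label within class $j$'' relation, this gives $\neighbors{V_{\mathrm{rob}}} \cap \{x : \gt(x)=j\} \subseteq V_{\mathrm{rob}} \cup R$. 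Applying Definition~\ref{def:mult_expansion} to $V_{\mathrm{rob}}$ yields $(\expmult-1)\dist_j(V_{\mathrm{rob}}) \le \dist_j(R)$ in the unsaturated regime $\expmult\dist_j(V_{\mathrm{rob}}) \le 1$, while the saturated case collapses directly since it forces $\dist_j(R) \ge 1 - \dist_j(V_{\mathrm{rob}}) \ge 1/2 \ge \dist_j(V)$. Adding $\dist_j(V \cap R) \le \dist_j(R)$ to the unsaturated bound finishes the lemma.

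The lemma combined with the constraint in~\eqref{eq:unsup_obj} establishes (A) and (B). For (A), if some $i^* \notin \mathrm{image}(\sigma)$, then $i^*$ is never a plurality, so $\dist_j(M_{i^*, j}) \le 1/2$ for every $j$; the lemma then gives $\dist(\widehat{\lblr}^{-1}(i^*)) = \sum_j \dist(\{\gt=j\})\dist_j(M_{i^*,j}) \le \frac{\expmult}{\expmult-1}\Ellrob(\widehat{\lblr})$, contradicting the constraint because an elementary check shows $\frac{\expmult}{\expmult-1} \le \max\{\frac{2}{\expmult-1}, 2\}$ for every $\expmult > 1$. For (B), assuming (A), if $\dist_{j^*}(M_{\sigma(j^*), j^*}) < 1/2$ for some $j^*$ and $i = \sigma(j^*)$, then $\dist_j(M_{i,j}) \le 1/2$ for every $j$ (for $j \ne j^*$ because bijectivity forces $i$ to not be the plurality of $j$, and for $j^*$ by assumption), and the identical chain produces the same contradiction.

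Finally, granted (A) and (B), set $V^-_j = \{x : \gt(x) = j\} \setminus M_{\sigma(j), j} = \bigcup_{i \ne \sigma(j)} M_{ij}$; by (B), $\dist_j(V^-_j) \le 1/2$, so the lemma applied to $V^-_j$ gives $\dist_j(V^-_j) \le \frac{\expmult}{\expmult-1}\dist_j(R)$. Summing over $j$ weighted by $\dist(\{\gt=j\})$ and using the permutation $\pi = \sigma^{-1}$ gives $\errunsup(\widehat{\lblr}) \le \sum_j \dist(V^-_j) \le \frac{\expmult}{\expmult-1}\Ellrob(\widehat{\lblr}) \le \frac{\expmult}{\expmult-1}\sep \le \max\{\frac{\expmult}{\expmult-1}, 2\}\sep$, which is~\eqref{eq:pop_unsup_main:1}. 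The main obstacle is jointly achieving (A) and (B): both failure modes are ruled out by the single numerical inequality $\frac{\expmult}{\expmult-1} \le \max\{\frac{2}{\expmult-1}, 2\}$, and matching this to the probability constraint is why the hypothesis on $\minclass$ uses $\max\{\frac{2}{\expmult-1}, 2\}$ while the conclusion uses $\max\{\frac{\expmult}{\expmult-1}, 2\}$.
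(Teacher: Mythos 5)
Your proof is correct, but it is organized differently from the paper's. The paper first reduces multiplicative expansion to an intermediate ``$(q,\xi)$-constant-expansion'' notion (Lemma~\ref{lem:mult_to_cons}), builds the permutation by showing that each \emph{predicted} cluster occupies a majority of $\rob{\widehat{\lblr}} \cap \cluster_i$ for a unique ground-truth class $i$ (Lemmas~\ref{lem:all_majority} and~\ref{lem:pi_permutation}), then applies expansion once to the \emph{global} mistake set $U = \{x: \pi(\widehat{\lblr}(x)) \ne \gt(x)\} \cap \rob{\widehat{\lblr}}$ and adds $\Ellrob(\widehat{\lblr})$ at the end to account for non-robust points --- which is where the $\max\{\frac{\expmult}{\expmult-1},2\}$ constant comes from. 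You instead work class-by-class directly with multiplicative expansion: your per-class denoising lemma (any union of label cells of conditional mass at most $1/2$ has mass at most $\frac{\expmult}{\expmult-1}\dist_j(R)$) absorbs the non-robust mass $\dist_j(V \cap R)$ inside the bound rather than adding it afterwards, and your permutation is the plurality label within each true class, with bijectivity and the majority property both enforced by the same lemma plus the class-probability constraint. The shared core is the observation that two robust points with overlapping perturbation sets must agree in label (the paper's Lemma~\ref{claim:not_robust}), so a robust label cell's neighborhood can only escape through the non-robust set. What your route buys: it avoids the auxiliary constant-expansion definition, the argument is self-contained per class, and the final constant is $\frac{\expmult}{\expmult-1}$ rather than $\max\{\frac{\expmult}{\expmult-1},2\}$ --- strictly tighter when $\expmult > 2$. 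What the paper's route buys: the constant-expansion abstraction lets the same machinery serve the stated ``weaker additive notion of expansion'' variant mentioned after Theorem~\ref{thm:pop_unsup_main}, which your argument, being tied to the multiplicative definition, does not immediately cover.
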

In Section~\ref{sec:unsup_proof}, we provide the proof of Theorem~\ref{thm:pop_unsup_main} as well as a variant of the theorem which holds for a weaker additive notion of expansion. 
By applying the generalization bounds of Section~\ref{sec:finite_sample}, we can convert Theorem~\ref{thm:pop_unsup_main} into a finite-sample guarantees that are polynomial in margin and Lipschitzness of the model (see Theorem~\ref{thm:end_to_end_unsup}).

Our objective is reminiscent of recent methods which achieve state-of-the-art results in unsupervised representation learning: SimCLR~\citep{chen2020simple}, MoCov2~\citep{he2020momentum,chen2020improved}, and BYOL~\citep{grill2020bootstrap}. 
Unlike our algorithm, these methods do not predict discrete labels, but rather, directly predict a representation which is consistent under input transformations, 
 However, our analysis still suggests an explanation for why input consistency regularization is so vital for these methods: assuming the data satisfies expansion, it encourages representations to be similar over the \textit{entire} class, so the representations will capture ground-truth class structure. 

\citet{chen2020simple} also observe that using more aggressive data augmentation for regularizing input stability results in significant improvements in representation quality. We remark that our theory offers a potential explanation: in our framework, strengthening augmentation increases the size of the neighborhood, resulting in a larger expansion factor $\expmult$ and improving the accuracy bound~\eqref{eq:pop_unsup_main:1}.

\subsection{Finite sample guarantees for deep learning models}\label{sec:finite_sample}
In this section, we show that if the ground-truth classes are separable by a neural net with large robust margin, 
then generalization can be good.
The main advantage of Theorem~\ref{thm:pop_unsup_main} and Theorem~\ref{thm:pop_denoise_main} over prior work is that they analyze parametric objectives, so finite sample guarantees immediately hold via off-the-shelf generalization bounds. Prior work on continuity or ``cluster'' assumptions related to expansion require nonparametric techniques 
 with a sample complexity that is exponential in dimension~\citep{seeger2000learning,rigollet2007generalization,singh2009unlabeled,urner2013probabilistic}. 

We apply the generalization bound of~\citep{wei2019improved} based on a notion of all-layer margin, though any other bound would work. 
The all-layer margin measures the stability of the neural net to simultaneous perturbations to each hidden layer. Formally, suppose that $\lblr(x) \triangleq \argmax_i \clsfr(x)_i$ is the prediction of some feedforward neural network $\clsfr: \domain \rightarrow \R^\classes$ which computes the following function: $\clsfr(x) = W_{\nndepth} \phi( \cdots \phi(W_1x)\cdots)$ with weight matrices $\{W_i\}_{i = 1}^{\nndepth}$. 
Let $\margdim$ denote the maximum dimension of any hidden layer.
Let $\pmarg(\clsfr, x, y) \ge 0$ denote the all-layer margin at example $x$ for label $y$, defined formally in Section~\ref{sec:all_layer_proof}. For now, we simply note that $\pmarg$ has the property that if $\lblr(x) \ne y$, then $\pmarg(\clsfr, x, y) = 0$, so we can upper bound the 0-1 loss by thresholding the all-layer margin: $\1(\lblr(x) \ne y) \le \1(\pmarg(\clsfr, x, y) \ge t)$ for any $t > 0$. We can also define a variant that measures robustness to input transformations:  
$
	\pmargrob(\clsfr, x) \triangleq \min_{x' \in \perturb(x)} \pmarg\left(\clsfr, x', \argmax_i \clsfr(x)_i\right)
$.
The following result states that large all-layer margin implies good generalization for the input consistency loss, which appears in the objective~\eqref{eq:unsup_obj}.
\begin{theorem}[Extension of Theorem 3.1 of~\citep{wei2019improved}] \label{thm:gen_rob_pl}
With probability $1 - \delta$ over the draw of the training set $\edist$, all neural networks $\lblr = \argmax_i \clsfr_i$ of the form $\clsfr(x) \triangleq W_{\nndepth} \phi( \cdots \phi(W_1x))$ will satisfy
			\begin{align}
			\Ellrob(\lblr) &\le \E_{\edist}[\1(\pmargrob(\clsfr, x) \le t)] + \widetilde{O}\left(\frac{\sum_i \sqrt{\margdim} \|W_i\|_{F}}{t \sqrt{n}}\right) + \zeta \label{eq:gen_rob_pl:1}
			\end{align}
			for all choices of $t > 0$, where $\zeta \triangleq O\left(\sqrt{(\log (1/\delta) + \nndepth \log n)/n}\right)$ is a low-order term, and $\widetilde{O}(\cdot)$ hides poly-logarithmic factors in $n$ and $d$.
\end{theorem}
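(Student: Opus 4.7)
The plan is to adapt the all-layer-margin generalization framework of~\citet{wei2019improved} from the standard label-based margin $\pmarg$ to the robust, self-prediction-based margin $\pmargrob$. The argument proceeds in three steps: a pointwise reduction from the population robust loss to a thresholded margin quantity, a Rademacher-complexity bound for that quantity, and verification that the Lipschitz-in-weights structure needed for the complexity estimate survives the min over $x' \in \perturb(x)$.

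\textbf{Step 1 (pointwise reduction).} First I would show that $\1(\exists x' \in \perturb(x): \lblr(x') \ne \lblr(x)) \le \1(\pmargrob(\clsfr, x) \le t)$ for every $t > 0$. If the left-hand side equals $1$, then some $x' \in \perturb(x)$ satisfies $\lblr(x') \ne \lblr(x) = \argmax_i \clsfr(x)_i$, so by the defining property of the all-layer margin (a nonzero value requires the prediction at the query point to match the target label), $\pmarg(\clsfr, x', \lblr(x)) = 0$, and therefore $\pmargrob(\clsfr, x) = \min_{x'' \in \perturb(x)} \pmarg(\clsfr, x'', \lblr(x)) \le 0 \le t$. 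Taking expectations under $\dist$ yields $\Ellrob(\lblr) \le \E_\dist[\1(\pmargrob(\clsfr, x) \le t)]$, and it suffices to replace $\dist$ by $\edist$ on the right-hand side up to a complexity penalty.

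\textbf{Step 2 (Rademacher complexity for a label-fixed margin).} For each $y \in [\classes]$, define the label-fixed robust margin $\bar m(\clsfr, x, y) \triangleq \min_{x' \in \perturb(x)} \pmarg(\clsfr, x', y)$.~\citet{wei2019improved} show that for every fixed $(x, y)$, the map $W = (W_1, \ldots, W_\nndepth) \mapsto \pmarg(\clsfr, x, y)$ is $1$-Lipschitz in a weighted Frobenius norm on parameter space; combined with a Dudley-style covering argument, this produces the $\widetilde O(\sum_i \sqrt{\margdim} \|W_i\|_F / \sqrt n)$ Rademacher complexity bound for the class $\{x \mapsto \pmarg(\clsfr, x, y) : \clsfr \in \gF\}$. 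Since a pointwise minimum over $x' \in \perturb(x)$ preserves the $1$-Lipschitz-in-$W$ property uniformly in $x$, the same bound transfers to $\{x \mapsto \bar m(\clsfr, x, y)\}$ for any fixed $y$. Passing through a $1/t$-Lipschitz ramp surrogate of the indicator, together with a standard symmetrization and McDiarmid argument, produces the $\widetilde O(\cdot)/t$ complexity term and the $\zeta$ low-order term, bounding $\E_\dist[\1(\bar m(\clsfr, x, y) \le t)]$ by its empirical counterpart plus these penalties.

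\textbf{Step 3 (assembly and the main obstacle).} The remaining subtlety is that $\pmargrob$ depends on $\clsfr$ through the self-prediction $\lblr(x)$, which is discontinuous in the weights and breaks Step 2 directly. I would dispatch this via the identity $\pmargrob(\clsfr, x) = \bar m(\clsfr, x, \lblr(x))$ to deduce the union bound $\1(\pmargrob(\clsfr, x) \le t) \le \sum_{y=1}^{\classes} \1(\bar m(\clsfr, x, y) \le t)$, apply Step 2 to each of the $\classes$ summands, and combine with Step 1 to conclude~\eqref{eq:gen_rob_pl:1}, with the factor $\classes$ absorbed into the $\widetilde O(\cdot)$ notation. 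The principal technical obstacle is exactly this self-prediction dependence; the per-label decomposition above is a clean workaround, and avoiding the factor $\classes$ would require formalizing that every discontinuity of $\lblr$ in $W$ occurs at a configuration where $\pmargrob = 0$, so that the ramped quantity is Lipschitz in $W$ in a sense strong enough to feed back into the Dudley argument of~\citet{wei2019improved}.
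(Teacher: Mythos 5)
Your Steps 1 and 2 are correct and match the paper's argument: the pointwise reduction $\1(x\notin\rob{\lblr})\le\1(\pmargrob(\clsfr,x)\le t)$ is exactly how the paper passes from $\Ellrob$ to the thresholded margin, and the observation that a pointwise minimum over $x'\in\perturb(x)$ preserves Lipschitzness of the margin in the weights is Case 1 of the paper's Claim~\ref{claim:marg_lipschitz}.

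The gap is in Step 3. The union bound $\1(\pmargrob(\clsfr,x)\le t)\le\sum_{y}\1(\bar m(\clsfr,x,y)\le t)$ is valid, but after applying Step 2 to each summand the empirical quantity you are left with is $\sum_{y}\E_{\edist}[\1(\bar m(\clsfr,x,y)\le t)]$, and this is \emph{not} the theorem's empirical term $\E_{\edist}[\1(\pmargrob(\clsfr,x)\le t)]$ up to a factor of $\classes$ --- it is vacuously large. Indeed, $\bar m(\clsfr,x,y)>0$ forces $\argmax_i\clsfr(x')_i=y$ for every $x'\in\perturb(x)$, so for each fixed $x$ at most one label $y$ can have $\bar m(\clsfr,x,y)>0$; hence $\sum_y\1(\bar m(\clsfr,x,y)\le t)\ge \classes-1$ pointwise and the resulting bound says nothing. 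The factor $\classes$ can be absorbed into $\widetilde O(\cdot)$ only for the complexity term, not for the empirical indicator, and there is no way to convert $\sum_y\E_{\edist}[\1(\bar m(\clsfr,x,y)\le t)]$ back into $\E_{\edist}[\1(\pmargrob(\clsfr,x)\le t)]$.

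The fix is the route you mention and then set aside: show directly that $\clsfr\mapsto\pmargrob(\clsfr,x)$ is $1$-Lipschitz with respect to $\gnorm{\clsfr-\hat{\clsfr}}=\|(\opnorm{W_1-\hat W_1},\ldots,\opnorm{W_\nndepth-\hat W_\nndepth})\|_2$ \emph{despite} the self-prediction. This is the paper's Claim~\ref{claim:marg_lipschitz}: when $\argmax_i\clsfr(x)_i=\argmax_i\hat{\clsfr}(x)_i$ the label is common and the standard Lipschitz bound for the label-fixed margin applies; when the argmaxes disagree, one can perturb the layers of $\clsfr$ by a total of at most $\gnorm{\clsfr-\hat{\clsfr}}$ so that its output coincides with $\hat{\clsfr}$'s, which flips the prediction, giving $0\le\pmargrob(\clsfr,x)\le\gnorm{\clsfr-\hat{\clsfr}}$ and symmetrically for $\hat{\clsfr}$, hence $|\pmargrob(\clsfr,x)-\pmargrob(\hat{\clsfr},x)|\le\gnorm{\clsfr-\hat{\clsfr}}$. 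So the discontinuity of $\lblr$ in the weights is harmless precisely because near any such flip both margins are already small. With this Lipschitz property in hand, the ramp-composed class $\{\ramp_t\circ\pmargrob(\clsfr,\cdot)\}$ admits the covering/Dudley bound with no per-label decomposition and no factor of $\classes$ in the empirical term, and the rest of your argument (symmetrization, union bound over $t$ and over the Frobenius norm bounds) goes through as in the paper.
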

A similar bound can be expressed for other quantities in~\eqref{eq:unsup_obj}, and is provided in Section~\ref{sec:all_layer_proof}. In Section~\ref{sec:end_to_end}, we plug our bounds into Theorem~\ref{thm:pop_unsup_main} and Theorem~\ref{thm:pop_denoise_main} to provide accuracy guarantees which depend on the unlabeled training set. We provide a proof overview in Section~\ref{sec:all_layer_proof}, and in Section~\ref{sec:all_layer_lb}, we provide a data-dependent lower bound on the all-layer margin that scales inversely with the Lipschitzness of the model, measured via the Jacobian and hidden layer norms on the training data. These quantities have been shown to be typically well-behaved~\citep{arora2018stronger,nagarajan2019deterministic,wei2019data}. In Section~\ref{sec:pl_exp}, we empirically show that explicitly regularizing the all-layer margin improves the performance of self-training.

\section{Denoising pseudolabels for semi-supervised learning and domain adaptation} 
\label{sec:pl_main}
We study semi-supervised learning and unsupervised domain adaptation settings where we have access to unlabeled data and a pseudolabeler $\plblr$. 
This setting requires a more complicated analysis than the unsupervised learning setting because pseudolabels may be inaccurate, and a student classifier could amplify these mistakes.
We design a population objective which measures input transformation consistency and pseudolabel accuracy. Assuming expansion and separation, we show that the minimizer of this objective will have high accuracy on \textit{ground-truth} labels.

\label{sec:pop_pl}
We assume access to pseudolabeler $\plblr(\cdot)$, obtained via training a classifier on the labeled source data in the domain adaptation setting or on the labeled data in the semi-supervised setting.
With access to pseudolabels, we can aim to recover the true labels exactly, rather than up to permutation as in Section~\ref{sec:unsup}. For $\lblr, \lblr' : \domain \rightarrow [\classes]$, define $\Ellzo(\lblr, \lblr') \triangleq \E_{\dist}[\1(\lblr(x) \ne \lblr'(x))]$ to be the disagreement between $\lblr$ and $\lblr'$. The error metric is the standard 0-1 loss on ground-truth labels:
$
\err(\lblr) \triangleq \Ellzo(\lblr, \gt)
$.
Let $\mistakes{\plblr} \triangleq \{x : \plblr(x) \ne \gt(x)\}$ denote the set of mistakenly pseudolabeled examples. We require the following assumption on expansion, which intuitively states that each subset of $\mistakes{\plblr}$ has a large enough neighborhood.
\begin{assumption}[$\dist$ expands on sets smaller than $\mistakes{\plblr}$]\label{ass:pl_expansion} 
Define $\bar{\expub} \triangleq \max_i \{\dist_i(\mistakes{\plblr}) \}$ to be the maximum fraction of incorrectly pseudolabeled examples in any class. 
We assume that $\bar{\expub} < 1/3$ and $\dist$ satisfies $(\bar{\expub}, \bar{\expmult})$-expansion for $\bar{\expmult} > 3$. We express our bounds in terms of $\expmult \triangleq \min\{1/\bar{\expub}, \bar{\expmult}\}$.  
\end{assumption}

Note that the above requirement $\expmult > 3$ is more demanding than the condition $\expmult > 1$ required in the unsupervised learning setting (Assumption~\ref{ass:unsup_exp}). 
The larger $c > 3$ accounts for the possibility that mistakes in the pseudolabels can adversely affect the learned classifier in a worst-case manner. This concern doesn't apply to unsupervised learning because pseudolabels are not used. For the toy distributions in Examples~\ref{ex:gaussian} and~\ref{ex:manifold}, we can increase the radius of the neighborhood by a factor of 3 to obtain (0.16, 6)-expansion, which is enough to satisfy the requirement in Assumption~\ref{ass:pl_expansion}.

On the other hand, Assumption~\ref{ass:pl_expansion} is less strict than Assumption~\ref{ass:unsup_exp} in the sense that expansion is only required for small sets with mass less than $\bar{a}$, the pseudolabeler's worst-case error on a class, which can be much smaller than $a=1/2$ required in Assumption~\ref{ass:unsup_exp}. Furthermore, the unsupervised objective~\eqref{eq:unsup_obj} has the constraint that the input consistency regularizer is not too large, whereas no such constraint is necessary for this setting. 
We remark that Assumption~\ref{ass:pl_expansion} can also be relaxed to directly consider expansion of subsets of incorrectly pseudolabeled examples, also with a looser requirement on the expansion factor $c$ (see Section~\ref{sec:pl_relax_exp}).
We design the following objective over classifiers $\lblr$, which fits the classifier to the pseudolabels while regularizing input consistency:
\begin{align}
	\min_{\lblr} \Ellobj(\lblr) \triangleq \frac{\expmult + 1}{\expmult - 1} \Ellzo(\lblr, \plblr) + \frac{2\expmult}{\expmult - 1} \Ellrob(\lblr) - \err(\plblr) \label{eq:pop_pl_obj}
\end{align}
The objective optimizes weighted combinations of $\Ellrob(\lblr)$, the input consistency regularizer, and $\Ellzo(\lblr, \plblr)$, the loss for fitting pseudolabels, and is related to recent successful algorithms for semi-supervised learning~\citep{sohn2020fixmatch,xie2020self}. We can show that $\Ellobj(\lblr) \ge 0$ always holds. The following lemma bounds the error of $\lblr$ in terms of the objective value. 
\begin{lemma}\label{lem:pop_denoise}
	Suppose Assumption~\ref{ass:pl_expansion} holds. Then the error of classifier $\lblr : \domain \rightarrow [\classes]$ is bounded in terms of consistency w.r.t. input transformations and accuracy on pseudolabels:
$
		\err(\lblr) \le \Ellobj(\lblr) 
$.
\end{lemma}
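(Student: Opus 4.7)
The inequality is equivalent to
\begin{align*}
\err(\lblr) + \err(\plblr) \le \tfrac{\expmult+1}{\expmult-1}\Ellzo(\lblr,\plblr) + \tfrac{2\expmult}{\expmult-1}\Ellrob(\lblr),
\end{align*}
so I would prove this symmetric form. Introduce $M = \mistakes{\lblr}$, $M_{pl} = \mistakes{\plblr}$, the ``doubly-wrong'' set $E \triangleq M \cap M_{pl}$, and partition it disjointly as $E = A \cup D$ with $A = E \cap \{\lblr = \plblr\}$ (same wrong label) and $D = E \cap \{\lblr \ne \plblr\}$. Since $M \triangle M_{pl} \subseteq \{\lblr \ne \plblr\}$, a short inclusion--exclusion gives $\err(\lblr) + \err(\plblr) = \Ellzo(\lblr,\plblr) + 2\dist(A) + \dist(D) \le \Ellzo(\lblr,\plblr) + 2\dist(E)$, reducing the goal to a suitable upper bound on $\dist(E)$.

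Next I would apply expansion class by class. Let $E_i \triangleq E \cap \{\gt = i\}$; since $E_i \subseteq M_{pl}$ we have $\dist_i(E_i) \le \bar a$, so Assumption~\ref{ass:pl_expansion} applies to $E_i$. Let $U$ denote the population set on which $\lblr$ is not robust (so $\dist(U) = \Ellrob(\lblr)$) and set $E_i^\star \triangleq E_i \setminus U$. The choice $\expmult \triangleq \min\{1/\bar a,\bar c\}$ ensures $\expmult \bar a \le 1$, so the $\min(\cdot,1)$ in expansion is inactive and we get $\dist_i(\neighbors{E_i^\star}) \ge \expmult\,\dist_i(E_i^\star)$. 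The key combinatorial claim is the inclusion
\begin{align*}
\neighbors{E_i^\star} \cap \{\gt = i\} \;\subseteq\; E_i^\star \,\cup\, (U \cap \{\gt = i\}) \,\cup\, V_i',
\end{align*}
where $V_i' \triangleq \{\gt = i,\ \plblr = i,\ \lblr \ne i\}$ collects class-$i$ examples on which $\plblr$ is correct but $\lblr$ disagrees. To prove it, take $x' \in \neighbors{E_i^\star}\cap\{\gt=i\}$ with $x' \notin U$ and pick any $x \in E_i^\star$ whose transformation set overlaps $\perturb(x')$. Robustness of $\lblr$ at both $x$ and $x'$ forces $\lblr(x') = \lblr(x) \ne i$ (via any $z \in \perturb(x) \cap \perturb(x')$). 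If additionally $\plblr(x') \ne i$, then $x' \in E_i \setminus U = E_i^\star$, contradicting $x' \notin E_i^\star$; hence $\plblr(x') = i$ and $x' \in V_i'$.

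Combining the expansion lower bound with this inclusion gives $(\expmult-1)\dist_i(E_i^\star) \le \dist_i(U \cap \{\gt=i\}) + \dist_i(V_i')$, and thus $\dist_i(E_i) \le \dist_i(E_i^\star) + \dist_i(U \cap \{\gt=i\}) \le \frac{\expmult\,\dist_i(U \cap \{\gt=i\}) + \dist_i(V_i')}{\expmult - 1}$. Weighting by class priors and summing over $i \in [\classes]$, and noting $\sum_i \dist(\gt=i)\,\dist_i(V_i') = \dist(M \setminus M_{pl})$, I obtain $\dist(E) \le \frac{\expmult\,\Ellrob(\lblr) + \dist(M \setminus M_{pl})}{\expmult - 1}$. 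Using $\dist(M \setminus M_{pl}) \le \Ellzo(\lblr,\plblr)$ (since $M \setminus M_{pl} \subseteq \{\lblr \ne \gt = \plblr\}$) and plugging into the first-step inequality, the $\Ellzo$ coefficient simplifies to $1 + \tfrac{2}{\expmult-1} = \tfrac{\expmult+1}{\expmult-1}$, recovering $\Ellobj(\lblr)$.

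The main obstacle is the set-inclusion step. It is essential that the escape set is the refined $V_i'$ and not the cruder $V_i = \{\lblr \ne \plblr\} \cap \{\gt = i\}$: the difference $V_i \setminus V_i' \subseteq M_{pl}$ is precisely the slack that prevents an extra $\Ellzo$ term and upgrades the coefficient from the naive $2\expmult/(\expmult-1)$ to the tight $(\expmult+1)/(\expmult-1)$. The remaining steps are careful bookkeeping of class priors and the non-robust set.
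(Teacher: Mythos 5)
Your proof is correct, but it takes a genuinely different route from the paper's. The paper first converts multiplicative expansion into an \emph{additive} expansion property (Lemma~\ref{lem:mult_to_add_class}), proves an intermediate result (Theorem~\ref{thm:pop_denoise}) by contradiction --- assuming the doubly-wrong robust set $\rob{\lblr}\cap\mistakes{\plblr}\cap\mistakes{\lblr}$ is large and showing the pseudolabel-fitting condition would then be violated --- and finally obtains Lemma~\ref{lem:pop_denoise} class-by-class via a case split on whether $\Ellzo^{(i)}(\lblr,\plblr)+\Ellrob^{(i)}(\lblr)\le(\expmult-1)\err_i(\plblr)$, with a tuned parameter $\beta$ interpolating between the two regimes. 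You instead symmetrize the inequality as $\err(\lblr)+\err(\plblr)\le\frac{\expmult+1}{\expmult-1}\Ellzo(\lblr,\plblr)+\frac{2\expmult}{\expmult-1}\Ellrob(\lblr)$, reduce it via the exact identity $\err(\lblr)+\err(\plblr)=\Ellzo(\lblr,\plblr)+2\dist(A)+\dist(D)$ to bounding the doubly-wrong set $E$, and then bound $\dist(E_i)$ directly by applying expansion to $E_i\setminus U$ together with the inclusion $\neighbors{E_i^\star}\cap\cluster_i\subseteq E_i^\star\cup(U\cap\cluster_i)\cup V_i'$; all steps check out, including the handling of the $\min\{\cdot,1\}$ via $\expmult\le 1/\bar\expub$ (the same device as Lemma~\ref{lem:mult_to_add_class}). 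Your argument is shorter and avoids both the contradiction and the case analysis, and it applies uniformly to every classifier without the intermediate hypothesis~\eqref{eq:pop_denoise:5}; what the paper's longer route buys is the standalone additive-expansion formulation (Theorem~\ref{thm:pop_denoise}), which is reused elsewhere (e.g.\ in the discussion of Section~\ref{sec:gan_exp}). One small quibble with your closing remark: the refinement from $V_i$ to $V_i'$ is not actually needed for the stated constants --- using the cruder $V_i$ and $\sum_i\dist(\cluster_i)\dist_i(V_i)=\Ellzo(\lblr,\plblr)$ yields the identical coefficient $1+\frac{2}{\expmult-1}=\frac{\expmult+1}{\expmult-1}$, since you bound $\dist(\mistakes{\lblr}\setminus\mistakes{\plblr})\le\Ellzo(\lblr,\plblr)$ at the end anyway. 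This does not affect correctness.
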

When expansion and separation both hold, we show that minimizing~\eqref{eq:pop_pl_obj} leads to a classifier that can \textit{denoise} the pseudolabels and improve on their ground-truth accuracy.
\begin{theorem}\label{thm:pop_denoise_main}
	Suppose Assumptions~\ref{ass:pl_expansion} and~\ref{ass:pl_separation} hold. Then for any minimizer $\widehat{\lblr}$ of~\eqref{eq:pop_pl_obj}, 
	we have
	\begin{align}
		\err(\widehat{\lblr}) \le \frac{2}{c - 1} \err(\plblr) + \frac{2c}{c - 1}\sep
	\end{align}
\end{theorem}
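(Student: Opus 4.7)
The plan is to combine Lemma~\ref{lem:pop_denoise} with a direct comparison between the minimizer and the ground-truth classifier $\gt$ evaluated on the objective $\Ellobj$. Since Lemma~\ref{lem:pop_denoise} already handles the denoising step (turning input-consistency plus pseudolabel accuracy into an error bound) under Assumption~\ref{ass:pl_expansion}, the heavy lifting is essentially done, and the theorem should reduce to a short calculation.

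First I would apply Lemma~\ref{lem:pop_denoise} to the minimizer $\widehat{\lblr}$ to get $\err(\widehat{\lblr}) \le \Ellobj(\widehat{\lblr})$. Because $\widehat{\lblr}$ is a minimizer of~\eqref{eq:pop_pl_obj}, we have $\Ellobj(\widehat{\lblr}) \le \Ellobj(\gt)$, so it suffices to upper bound $\Ellobj(\gt)$. Here I use two easy identifications: $\Ellzo(\gt, \plblr) = \err(\plblr)$ by definition of $\err$, and $\Ellrob(\gt) \le \sep$ by Assumption~\ref{ass:pl_separation}. Substituting into the definition of $\Ellobj$ yields
\begin{align*}
\Ellobj(\gt) &= \frac{\expmult+1}{\expmult-1}\,\err(\plblr) + \frac{2\expmult}{\expmult-1}\,\Ellrob(\gt) - \err(\plblr) \\
&\le \left(\frac{\expmult+1}{\expmult-1} - 1\right)\err(\plblr) + \frac{2\expmult}{\expmult-1}\,\sep \\
&= \frac{2}{\expmult-1}\,\err(\plblr) + \frac{2\expmult}{\expmult-1}\,\sep,
\end{align*}
which is exactly the desired bound. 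Chaining these three inequalities gives the theorem.

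The only genuine content lies in verifying that Lemma~\ref{lem:pop_denoise} can be invoked here (which it can, since Assumption~\ref{ass:pl_expansion} is in force), and in noticing that the constant $-\err(\plblr)$ in $\Ellobj$ is chosen precisely so that plugging in $\gt$ produces the clean coefficient $\tfrac{2}{c-1}$ on $\err(\plblr)$ rather than $\tfrac{c+1}{c-1}$. The constants in~\eqref{eq:pop_pl_obj} are therefore not arbitrary: they are calibrated so that (i) Lemma~\ref{lem:pop_denoise} produces a valid upper bound on $\err(\lblr)$ and (ii) evaluating $\Ellobj$ at $\gt$ gives the sharpest possible coefficients. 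The main obstacle is therefore not in this theorem itself but in Lemma~\ref{lem:pop_denoise}, whose proof presumably uses the expansion hypothesis on the bad set $\{x: \lblr(x) \ne \gt(x)\}$ together with the input consistency of $\lblr$ to show that such a set cannot be too large without incurring either a pseudolabel-mismatch cost or an input-consistency cost; given that lemma, the rest is a one-line comparison argument.
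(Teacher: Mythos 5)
Your proposal is correct and is essentially identical to the paper's own proof: the paper likewise derives Theorem~\ref{thm:pop_denoise_main} from Lemma~\ref{lem:pop_denoise} by comparing $\Ellobj(\widehat{\lblr})$ to $\Ellobj(\gt)$ and substituting $\Ellzo(\gt,\plblr)=\err(\plblr)$ and $\Ellrob(\gt)\le\sep$. Your arithmetic and your observation about why the $-\err(\plblr)$ term is included are both accurate.
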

We provide a proof sketch in Section~\ref{sec:proof_intuition}, and the full proof in Section~\ref{sec:pl_relax_exp}. Our result explains the perhaps surprising fact that self-training with pseudolabeling often improves over the pseudolabeler even though no additional information about true labels is provided. 
In Theorem~\ref{thm:end_to_end_pl}, we translate Theorem~\ref{thm:pop_denoise_main} into a finite-sample guarantee by using the generalization bounds in Section~\ref{sec:finite_sample}. 

At a first glance, the error bound in Theorem~\ref{thm:pop_denoise_main} appears weaker than Theorem~\ref{thm:pop_unsup_main} because of the additional dependence on $\err(\plblr)$. This discrepancy is due to weaker requirements on the expansion and the value of the input consistency regularizer. First, Section~\ref{sec:unsup} requires expansion on all sets with probability less than $1/2$, whereas Assumption~\ref{ass:pl_expansion} only requires expansion on sets with probability less than $\bar{a}$, which can be much smaller than $1/2$. Second, the error bounds in Section~\ref{sec:unsup} only apply to classifiers with small values of $\Ellrob(\lblr)$, as seen in~\eqref{eq:unsup_obj}. On the other hand, Lemma~\ref{lem:pop_denoise} gives an error bound for \textit{all} classifiers, regardless of $\Ellrob(\lblr)$. Indeed, strengthening the expansion requirement to that of Section~\ref{ass:unsup_exp} would allow us to obtain accuracy guarantees similar to Theorem~\ref{thm:pop_unsup_main} for pseudolabel-trained classifiers with low input consistency regularizer value.

\subsection{Proof sketch for Theorem~\ref{thm:pop_denoise_main}} \label{sec:proof_intuition}

We provide a proof sketch for Lemma~\ref{lem:pop_denoise} for the extreme case where the input consistency regularizer is 0 for all examples, i.e. $\lblr(x) = \lblr(x') \ \forall x \in \domain, x' \in \perturb(x)$, so $\Ellrob(\lblr) = 0$. For this proof sketch, we also make an additional restriction to the case when $\Ellzo(\lblr, \plblr) = \err(\plblr)$. 

We first introduce some general notation. For sets $U, V \subseteq \domain$, we use $U \setminus V$ to denote $\{x : x \in U, x \notin V\}$, and $\cap, \cup$ denote set intersection and union, respectively. Let $\cmplmt{U} \triangleq \domain \setminus U$ denote the complement of $U$. 

Let $\cluster_i \triangleq \{x : \gt(x) = i\}$ denote the set of examples with ground-truth label $i$. 
For $S \subseteq \domain$, we define $\neighstar{S}$ to be the neighborhood of $S$ with neighbors restricted to the same class: $\neighstar{S} \triangleq \cup_{i \in [\classes]} \left(\neighbors{S \cap \cluster_i} \cap \cluster_i\right)$. The following key claims will consider two sets: the set of correctly pseudolabeled examples on which the classifier makes mistakes, $\{x : \lblr(x) \ne \plblr(x) \textup{ and } x \notin \mistakes{\plblr}\}$, and the set of examples where both classifier and pseudolabeler disagree with the ground truth, $\mistakes{\clsfr} \cap \mistakes{\plblr}$. The claims below use the expansion property to show that $$\dist(\{x : \lblr(x) \ne \plblr(x) \textup{ and } x \notin \mistakes{\plblr}\}) > \dist(\mistakes{\clsfr} \cap \mistakes{\plblr})$$

\begin{claim} \label{claim:exp_pl_simp}
	In the setting of Theorem~\ref{thm:pop_denoise_main}, define the set $V \triangleq \mistakes{\lblr} \cap \mistakes{\plblr}$. Define $q \triangleq \frac{\err(\plblr)}{\expmult - 1}$. By expansion (Assumption~\ref{ass:pl_expansion}), if $\dist(V) > q$, then $\dist(\neighstar{V} \setminus \mistakes{\plblr}) > \dist(V)$. 
\end{claim}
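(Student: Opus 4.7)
The plan is to verify the claim in four short steps, using expansion class-by-class and then summing.

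First, I would decompose $V$ according to ground-truth class, writing $V_i \triangleq V \cap \cluster_i$, so that $V = \sqcup_i V_i$ and $\dist(V) = \sum_i \dist(\cluster_i)\dist_i(V_i)$. The point of this decomposition is that $V \subseteq \mistakes{\plblr}$ by definition, so for every class $i$ we have $\dist_i(V_i) \le \dist_i(\mistakes{\plblr}) \le \bar{a}$. This makes each $V_i$ eligible for Assumption~\ref{ass:pl_expansion}, and gives $\dist_i(\neighbors{V_i}) \ge \min\{\bar{c}\,\dist_i(V_i),\, 1\}$.

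Next I would translate this per-class bound into the global bound $\dist(\neighstar{V}) \ge c\,\dist(V)$, where $c = \min\{1/\bar{a},\bar{c}\}$ as in Assumption~\ref{ass:pl_expansion}. There are two sub-steps. The first is to remove the $\min$ with $1$: if $\bar{c}\dist_i(V_i) \le 1$ it is automatic, and otherwise one observes $c\,\dist_i(V_i) \le (1/\bar{a})\,\bar{a} = 1 \le \dist_i(\neighbors{V_i})$. The second, slightly delicate, sub-step is to pass to the \emph{restricted} neighborhood $\neighstar{V}$: since $\dist_i$ is supported on $\cluster_i$, we have $\dist_i(\neighbors{V_i}) = \dist_i(\neighbors{V_i}\cap \cluster_i)$, and by definition of $\neighstar{\cdot}$ these class-restricted pieces are precisely the components of $\neighstar{V}$, which are moreover disjoint. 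Summing $\dist(\cluster_i)\dist_i(\neighbors{V_i}\cap\cluster_i) \ge \dist(\cluster_i)\,c\,\dist_i(V_i)$ over $i$ then yields $\dist(\neighstar{V}) \ge c\,\dist(V)$.

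Finally, I would subtract the pseudolabel mistake set using the trivial inequality $\dist(\neighstar{V}\setminus \mistakes{\plblr}) \ge \dist(\neighstar{V}) - \dist(\mistakes{\plblr}) \ge c\,\dist(V) - \err(\plblr)$, and plug in the hypothesis $\dist(V) > q = \err(\plblr)/(c-1)$, which rearranges exactly to $(c-1)\dist(V) > \err(\plblr)$ and hence to $c\,\dist(V) - \err(\plblr) > \dist(V)$.

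The only real obstacle is the middle step: one must be careful that $\neighstar{V}$ is not the same object as $\neighbors{V}$, and in particular that the class-conditional expansion bound is exactly what feeds into $\neighstar{V}$ (rather than losing mass to neighbors in the wrong class). Once this bookkeeping is done, and the saturation case $\bar{c}\dist_i(V_i)>1$ is dispatched using $c \le 1/\bar{a}$, the rest is elementary.
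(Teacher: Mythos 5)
Your proof is correct, and it uses essentially the same mechanism as the paper's Lemma~\ref{lem:mult_to_add_class}: apply class-conditional multiplicative expansion to $V_i = V\cap\cluster_i$, dispatch the $\min\{\cdot,1\}$ saturation via $\expmult \le 1/\bar{\expub}$, and subtract the mass of $\mistakes{\plblr}$ before invoking $\dist(V) > \err(\plblr)/(\expmult-1)$. The only difference is cosmetic — you aggregate over classes directly and subtract $\mistakes{\plblr}$ globally, whereas the paper packages the per-class statement as $(q,\alpha)$-additive-expansion and subtracts $\mistakesi{i}{\plblr}$ class by class — and your handling of the $\neighbors{V_i}$ versus $\neighstar{V}$ bookkeeping is careful and correct.
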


A more general version of Claim~\ref{claim:exp_pl_simp} is given by Lemma~\ref{lem:mult_to_add_class} in Section~\ref{sec:proof_pop_denoise_main}. For a visualization of $V$ and $\neighstar{V} \setminus \mistakes{\plblr}$, refer to Figure~\ref{fig:proof_intuition}. 
\begin{claim} \label{claim:exp_pl_simp_2}
	Suppose the input consistency regularizer is 0 for all examples, i.e., $\forall x \in \domain, x' \in \perturb(x)$, it holds that $\lblr(x) = \lblr(x')$. Then it follows that 
	\begin{align}
		\{x : \lblr(x) \ne \plblr(x) \textup{ and } x \notin \mistakes{\plblr}\} \supseteq \neighstar{V} \setminus \mistakes{\plblr} \nonumber
	\end{align}
\end{claim}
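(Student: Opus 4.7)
The plan is to unfold the definitions of $\neighstar{V}$, $V$, and the neighborhood relation, and then combine the class-preserving property of $\neighstar{\cdot}$ with the input consistency hypothesis to chain a single string of (in)equalities. Concretely, I would pick an arbitrary $x \in \neighstar{V} \setminus \mistakes{\plblr}$ and show that $\lblr(x) \ne \plblr(x)$; the condition $x \notin \mistakes{\plblr}$ is of course immediate from the definition of the set on the left.

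Unpacking the definitions, $x \in \neighstar{V}$ means there exists some class index $i$ with $x \in \cluster_i$ and a point $x' \in V \cap \cluster_i$ with $x \in \neighbors{x'}$. First I would extract the consequences of $x' \in V = \mistakes{\lblr} \cap \mistakes{\plblr}$, namely $\lblr(x') \ne \gt(x') = i$. Next, since $x \in \cluster_i$ and $x \notin \mistakes{\plblr}$, I would conclude $\plblr(x) = \gt(x) = i$. The last ingredient is the neighbor relation: $x \in \neighbors{x'}$ unfolds to $\perturb(x) \cap \perturb(x') \ne \emptyset$, so I can pick some $z$ in the intersection. The input consistency hypothesis, applied at $x$ (so $\lblr(x) = \lblr(z)$) and at $x'$ (so $\lblr(x') = \lblr(z)$), then yields $\lblr(x) = \lblr(x')$.

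Putting the chain together gives $\lblr(x) = \lblr(x') \ne i = \plblr(x)$, which is exactly the membership condition for the set on the left-hand side. Since $x$ was arbitrary in $\neighstar{V} \setminus \mistakes{\plblr}$, the claimed containment follows.

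There is no real obstacle here: this is a pure definition-chasing argument, and the only subtlety is remembering that $\neighstar{V}$ restricts neighbors to lie in the same ground-truth class, which is precisely what lets us identify $\plblr(x)$ with $\gt(x')$ and thus compare it to $\lblr(x')$. The input consistency hypothesis is used exactly once, to transport the label of $\lblr$ from $x'$ to $x$ through the common point $z \in \perturb(x) \cap \perturb(x')$.
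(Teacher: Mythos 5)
Your proof is correct and follows essentially the same route as the paper's: take $x \in \neighstar{V} \setminus \mistakes{\plblr}$, extract a same-class neighbor $x' \in V$, transport $\lblr$'s label from $x'$ to $x$ through a common point $z \in \perturb(x) \cap \perturb(x')$ using the consistency hypothesis, and conclude $\lblr(x) = \lblr(x') \ne \gt(x) = \plblr(x)$. This matches the argument sketched in the paper's figure caption and spelled out in its more general Claim on misclassified neighbors.
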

Figure~\ref{fig:proof_intuition} outlines the proof of this claim.
Claim~\ref{claim:exp_neighbor_lb} in Section~\ref{sec:pl_proof} provides a more general version of Claim~\ref{claim:exp_pl_simp_2} in the case where $\Ellrob(\lblr) > 0$. Given the above, the proof of Lemma~\ref{lem:pop_denoise} follows by a counting argument. 
\begin{proof}[Proof sketch of Lemma~\ref{lem:pop_denoise} for simplified setting] 
Assume for the sake of contradiction that $\dist(V) > q$. We can decompose the errors of $\lblr$ on the pseudolabels as follows: 
\begin{align}
\Ellzo(\lblr, \plblr) &\ge \E[\1(\lblr(x) \ne \plblr(x) \textup{ and } x \notin \mistakes{\plblr})] + \E[\1(\lblr(x) \ne \plblr(x) \textup{ and } x \in \mistakes{\plblr})] \nonumber
\end{align}
We lower bound the first term by $\dist(V)$ by Claims~\ref{claim:exp_pl_simp} and~\ref{claim:exp_pl_simp_2}. For the latter term, we note that if $x \in \mistakes{\plblr} \setminus V$, then $\lblr(x)= \gt(x) \ne \plblr(x)$. Thus, the latter term has lower bound $\dist(\mistakes{\plblr}) - \dist(V)$. As a result, we obtain
\begin{align}
	\Ellzo(\lblr, \plblr) > \dist(V) + \dist(\mistakes{\plblr}) - \dist(V) = \err(\plblr) \nonumber
\end{align}
which contradicts our simplifying assumption that $\Ellzo(\lblr, \plblr) = \err(\plblr)$. Thus, $\lblr$ disagrees with $\gt$ at most $q$ fraction of examples in $\mistakes{\plblr}$. To complete the proof, we note that $\lblr$ also disagrees with $\gt$ on at most $q$ fraction of examples outside of $\mistakes{\plblr}$, or else $\Ellzo(\lblr, \plblr)$ would again be too high.
\end{proof}

\begin{figure}
		\centering
		\includegraphics[width=0.3\textwidth]{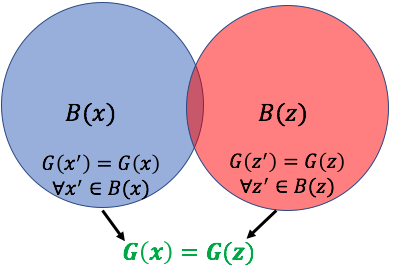}
		\includegraphics[width=0.3\textwidth]{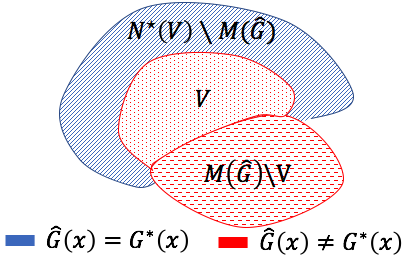}
			\caption{To prove Claim~\ref{claim:exp_pl_simp_2}, we first note that in the simplified setting, if $\perturb(x) \cap \perturb(z) \ne \emptyset$ then $\lblr(x) = \lblr(z)$ by the assumption that $\Ellrob(\lblr) = 0$ (see \textbf{left}). By the definition of $\neighstar{\cdot}$, this implies that all points $x \in \neighstar{V} \setminus \mistakes{\plblr}$ must satisfy $\lblr(x) \ne \gt(x)$, as $x$ matches the label of its neighbor in $V \subseteq \mistakes{\lblr}$. However, all points in $\domain \setminus \mistakes{\plblr}$ must satisfy $\plblr(x) = \gt(x)$, and therefore $\lblr(x) \ne \plblr(x)$. These sets are depicted on the \textbf{right}.}\label{fig:proof_intuition}
		\end{figure}

\section{Experiments}
In Section~\ref{sec:gan_exp}, we provide details for the GAN experiment in Figure~\ref{fig:intro}. We also provide empirical evidence for our theoretical intuition that self-training with input consistency regularization succeeds because the algorithm denoises incorrectly pseudolabeled examples with correctly pseudolabeled neighbors (Figure~\ref{fig:gan_correction}). In Section~\ref{sec:pl_exp}, we perform ablation studies for pseudolabeling which show that components of our theoretical objective~\eqref{eq:pop_pl_obj} do improve performance.

\section{Conclusion}
In this work, we propose an expansion assumption on the data which allows for a unified theoretical analysis of self-training for semi-supervised and unsupervised learning. 
Our assumption is realistic for real-world datasets, particularly in vision. 
Our analysis is applicable to deep neural networks and can explain why algorithms based on self-training and input consistency regularization can perform so well on unlabeled data. 
We hope that this assumption can facilitate future theoretical analyses and inspire theoretically-principled algorithms for semi-supervised and unsupervised learning. 
For example, an interesting question for future work is to extend our assumptions to analyze domain adaptation algorithms based on aligning the source and target~\citep{hoffman2018cycada}.

\section*{Acknowledgements}
We would like to thank Ananya Kumar for helpful comments and discussions. CW acknowledges support from a NSF Graduate Research Fellowship. TM is also partially supported by the Google Faculty Award, Stanford Data Science Initiative, and the Stanford Artificial Intelligence Laboratory. The authors would also like to thank the Stanford Graduate Fellowship program for funding.

\bibliography{refs}
\bibliographystyle{iclr2021_conference}

\newpage
\appendix

\section{Proofs for denoising pseudolabels}
\label{sec:pl_proof}
In this section, we will provide the proof of Theorem~\ref{thm:pop_denoise_main}. Our analysis will actually rely on a weaker \textit{additive} notion of expansion, defined below. We show that the multiplicative definition in Definition~\ref{def:mult_expansion} will imply that the additive variant holds.

\subsection{Relaxation of expansion assumption for pseudolabeling}
\label{sec:pl_relax_exp}
In this section, we provide a proof of a relaxed version of Theorem~\ref{thm:pop_denoise_main}. We will then reduce Theorem~\ref{thm:pop_denoise_main} to this relaxed version in Section~\ref{sec:proof_pop_denoise_main}.
It will be helpful to restrict the notion of neighborhood to only examples in the same ground-truth class: define $\neighstar{x} \triangleq \{x' : x' \in \neighbors{x} \textup{ and } \gt(x') = \gt(x)\}$ and $\neighstar{S} \triangleq \cup_{x \in S} \neighstar{x}$. Note that the following relation between $\neighbors{S}$ and $\neighstar{S}$ holds in general:
\begin{align}
	\neighstar{S} = \cup_{i \in [\classes]} \left(\neighbors{S \cap \cluster_i} \cap \cluster_i\right) \nonumber
\end{align}
We will define the additive notion of expansion on subsets of $\domain$ below.
\begin{definition}[$(q, \alpha)$-additive-expansion on a set $S$]
\label{def:set_expansion}
	We say that $\dist$ satisfies $(q, \alpha)$-additive-expansion on $S \subseteq \domain$ if for all $V \subseteq S$ with $\dist(V) > q$, the following holds: 
	\begin{align}
		\dist(\neighstar{V} \setminus S)= \sum_{i \in [\classes]} \dist(\neighbors{V \cap \cluster_i} \cap \cluster_i  \setminus S) > \dist(V) + \alpha \nonumber	\end{align}
\end{definition}
In other words, any sufficiently large subset of $S$ must have a sufficiently large neighborhood of examples sharing the same ground-truth label. For the remainder of this section, we will analyze this additive notion of expansion. In Section~\ref{sec:proof_pop_denoise_main}, we will reduce multiplicative expansion (Definition~\ref{def:mult_expansion}) to our additive definition above.

Now for a given classifier, define the robust set of $\lblr$, $\rob{\lblr}$, to be the set of inputs for which $\lblr$ is robust under $\perturb$-transformations: 
\begin{align}
	\rob{\lblr} = \{x : \lblr(x) = \lblr(x') \ \forall x' \in \perturb(x)\}\nonumber
\end{align}

The following theorem shows that if the classifier $\lblr$ is $\perturb$-robust and fits the pseudolabels sufficiently well, classification accuracy on \textit{true} labels will be good. 
\begin{theorem}
\label{thm:pop_denoise}
	For a given pseudolabeler $\plblr : \domain \rightarrow \{1, \ldots, K\}$, suppose that $\dist$ has $(q, \alpha)$-additive-expansion on $\mistakes{\plblr}$ for some $q, \alpha$. Suppose that $\lblr$ fits the pseudolabels with sufficient accuracy and robustness: 
	\begin{align}
	\label{eq:pop_denoise:5}
		\E_P[\1(\lblr(x)\ne \plblr(x) \textup{ or } x \notin \rob{\lblr})] \le \err(\plblr) + \alpha 
	\end{align}
	Then $\lblr$ satisfies the following error bound:
	\begin{align}
		\err(\lblr) \le 2(q + \poprloss(\lblr)) + \E_{\dist}[\1(\lblr(x) \ne \plblr(x))] - \err(\plblr) \nonumber
	\end{align}
\end{theorem}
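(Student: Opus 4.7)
The strategy is to reduce the theorem to a single key inequality, namely $\dist(V) \le q + \poprloss(\lblr)$, where $V \triangleq \mistakes{\lblr} \cap \mistakes{\plblr}$ denotes the set of examples on which both $\lblr$ and $\plblr$ disagree with the ground truth. Once this is in hand, the conclusion follows by a direct counting argument on pseudolabel disagreements: on $\mistakes{\plblr} \setminus V$ we have $\lblr = \gt \ne \plblr$, while on $\mistakes{\lblr} \setminus \mistakes{\plblr}$ we have $\lblr \ne \gt = \plblr$, and these two disjoint sets both contribute to $\E_\dist[\1(\lblr \ne \plblr)]$. This gives $\dist(\mistakes{\lblr} \setminus \mistakes{\plblr}) \le \E_\dist[\1(\lblr \ne \plblr)] - \err(\plblr) + \dist(V)$, and adding $\dist(V)$ to both sides recovers $\err(\lblr) \le 2\dist(V) + \E_\dist[\1(\lblr \ne \plblr)] - \err(\plblr)$, after which the key inequality finishes the job.

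To prove the key inequality I would argue by contradiction: assume $\dist(V) > q + \poprloss(\lblr)$, so the robust portion $V' \triangleq V \cap \rob{\lblr}$ satisfies $\dist(V') > q$. Since $V' \subseteq \mistakes{\plblr}$, the additive-expansion hypothesis yields $\dist(\neighstar{V'} \setminus \mistakes{\plblr}) > \dist(V') + \alpha$. Let $U \triangleq \{x : \lblr(x) \ne \plblr(x) \textup{ or } x \notin \rob{\lblr}\}$, so hypothesis~\eqref{eq:pop_denoise:5} reads $\dist(U) \le \err(\plblr) + \alpha$. The core step, and the main obstacle, is showing the containment $\neighstar{V'} \setminus \mistakes{\plblr} \subseteq U$: for such an $x$, either $x \notin \rob{\lblr}$, placing $x$ in $U$ trivially; or $x \in \rob{\lblr}$, in which case by definition of $\neighstar{\cdot}$ there is a same-class neighbor $z \in V'$, and because \emph{both} $x$ and $z$ are robust, any $y \in \perturb(x) \cap \perturb(z)$ transports the label, giving $\lblr(x) = \lblr(y) = \lblr(z) \ne \gt(z) = \gt(x) = \plblr(x)$, where the last equality uses $x \notin \mistakes{\plblr}$. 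Intersecting the neighbor $z$ with $\rob{\lblr}$ (i.e., working with $V'$ rather than $V$) is precisely what makes the label-transport step go through, which is why the argument has to pay a $\poprloss(\lblr)$ price.

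With the containment in hand, the contradiction follows by lower-bounding $\dist(U)$ on the two halves of the partition defined by $\mistakes{\plblr}$. Outside $\mistakes{\plblr}$, the containment gives $\dist(U \setminus \mistakes{\plblr}) > \dist(V') + \alpha$. Inside $\mistakes{\plblr}$, the set $\mistakes{\plblr} \setminus V$ lies in $U$ because $\lblr = \gt \ne \plblr$ there, and $V \setminus \rob{\lblr}$ lies in $U$ by definition; these two subsets are disjoint and contribute a combined mass of $(\err(\plblr) - \dist(V)) + (\dist(V) - \dist(V')) = \err(\plblr) - \dist(V')$. Summing the two halves yields $\dist(U) > \err(\plblr) + \alpha$, contradicting the hypothesis. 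This establishes $\dist(V) \le q + \poprloss(\lblr)$, and plugging back into the counting bound from the first paragraph completes the proof.
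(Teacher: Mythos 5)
Your proof is correct and takes essentially the same route as the paper: the heart of your argument --- applying additive expansion to the robust part $V'=V\cap\rob{\lblr}$ of the doubly-mistaken set, transporting labels through overlapping perturbation sets to show $\neighstar{V'}\setminus\mistakes{\plblr}$ lands in the set penalized by hypothesis~\eqref{eq:pop_denoise:5}, and deriving a contradiction --- is exactly the content of the paper's Lemma~\ref{lem:q_bound} together with Claims~\ref{claim:exp_neighbor_lb} and~\ref{claim:neighbors_misclass}. Your final accounting is a somewhat cleaner direct count of pseudolabel disagreements (essentially the paper's own proof sketch in Section~\ref{sec:proof_intuition} extended to nonzero $\poprloss(\lblr)$) in place of the paper's $\gM_1,\gM_2,\gM_3$ decomposition and Lemma~\ref{lem:M3_bound}, but it yields the identical bound.
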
 
To interpret this statement, suppose $\lblr$ fits the pseudolabels with error rate at most $\err(\plblr)$ and~\eqref{eq:pop_denoise:5} holds. Then $\err(\lblr) \le 2(q + \poprloss(\lblr))$, so if $\lblr$ is robust to $\perturb$-perturbations on the \textit{population} distribution, the accuracy of $\lblr$ is high.  

Towards proving Theorem~\ref{thm:pop_denoise}, we consider three disjoint subsets of $\mistakes{\lblr} \cap \rob{\lblr}$: 
\begin{align}
\gM_1 &\triangleq \{x : \lblr(x) = \plblr(x), \plblr(x) \ne \gt(x), \textup{ and } x \in \rob{\lblr}\} \nonumber\\
\gM_2 &\triangleq \{x : \lblr(x) \ne \plblr(x), \plblr(x) \ne \gt(x),\lblr(x) \ne \gt(x),   \textup{ and } x \in \rob{\lblr}\} \nonumber\\
\gM_3 &\triangleq \{x : \lblr(x) \ne \plblr(x),  \plblr(x) = \gt(x), \textup{ and } x \in \rob{\lblr}\} \nonumber
\end{align}
We can interpret these sets as follows: $\gM_1 \cup \gM_2 \subseteq \rob{\lblr} \cap \mistakes{\plblr} \cap \mistakes{\lblr}$, the set of inputs where $\plblr$ and $\lblr$ both do not fit the true label. The other set $\gM_3$ consists of inputs where $\plblr$ fits the true label, but $\lblr$ does not. The following lemma bounds the probability of $\gM_1 \cup \gM_2$.
\begin{lemma} \label{lem:q_bound}
	In the setting of Theorem~\ref{thm:pop_denoise}, we have $\dist(\rob{\lblr} \cap \mistakes{\plblr} \cap \mistakes{\lblr}) \le q$. As a result, since it holds that $\gM_1 \cup \gM_2 \subseteq \rob{\lblr} \cap \mistakes{\plblr} \cap \mistakes{\lblr}$, it immediately follows that $\dist(\gM_1 \cup \gM_2) \le q$. 
\end{lemma}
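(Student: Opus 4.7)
The plan is to prove the bound by contradiction, using the additive-expansion assumption on $\mistakes{\plblr}$ together with the pseudolabel-fitting hypothesis~\eqref{eq:pop_denoise:5}. Let $V \triangleq \rob{\lblr} \cap \mistakes{\plblr} \cap \mistakes{\lblr}$ and suppose, for contradiction, that $\dist(V) > q$. Since $V \subseteq \mistakes{\plblr}$, the $(q,\alpha)$-additive-expansion of $\dist$ on $\mistakes{\plblr}$ applies to $V$ and yields the strict inequality
\begin{align}
\dist(\neighstar{V} \setminus \mistakes{\plblr}) > \dist(V) + \alpha. \nonumber
\end{align}

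The key structural step is to show that every point in $\neighstar{V} \setminus \mistakes{\plblr}$ belongs to the ``bad'' set $B \triangleq \{x : \lblr(x) \ne \plblr(x) \textup{ or } x \notin \rob{\lblr}\}$ appearing on the left side of~\eqref{eq:pop_denoise:5}. Take $z \in \neighstar{V} \setminus \mistakes{\plblr}$. If $z \notin \rob{\lblr}$ then $z \in B$ trivially, so suppose $z \in \rob{\lblr}$. By the definition of $\neighstar{\cdot}$, there exists $x \in V$ with $z \in \neighbors{x}$ and $\gt(z) = \gt(x)$, so $\perturb(x) \cap \perturb(z) \ne \emptyset$. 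Using that both $x$ and $z$ lie in $\rob{\lblr}$, any common point $y \in \perturb(x) \cap \perturb(z)$ forces $\lblr(x) = \lblr(y) = \lblr(z)$. Since $x \in \mistakes{\lblr}$, this gives $\lblr(z) = \lblr(x) \ne \gt(x) = \gt(z)$. Combined with $z \notin \mistakes{\plblr}$, i.e. $\plblr(z) = \gt(z)$, we conclude $\lblr(z) \ne \plblr(z)$, so again $z \in B$.

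Next I would check that $\mistakes{\plblr} \setminus V$ is also contained in $B$: if $x \in \mistakes{\plblr} \setminus V$, then either $x \notin \rob{\lblr}$ (so $x \in B$), or $x \in \rob{\lblr}$ and $x \notin \mistakes{\lblr}$, meaning $\lblr(x) = \gt(x) \ne \plblr(x)$ (so again $x \in B$). The two sets $\mistakes{\plblr} \setminus V$ and $\neighstar{V} \setminus \mistakes{\plblr}$ are disjoint by construction, giving
\begin{align}
\dist(B) \;\ge\; \dist(\mistakes{\plblr} \setminus V) + \dist(\neighstar{V} \setminus \mistakes{\plblr}) \;=\; \bigl(\err(\plblr) - \dist(V)\bigr) + \dist(\neighstar{V} \setminus \mistakes{\plblr}). \nonumber
\end{align}
Substituting the expansion inequality on the right gives $\dist(B) > \err(\plblr) + \alpha$, contradicting~\eqref{eq:pop_denoise:5}. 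Hence $\dist(V) \le q$, which is the claim; the stated inclusion $\gM_1 \cup \gM_2 \subseteq \rob{\lblr} \cap \mistakes{\plblr} \cap \mistakes{\lblr}$ then yields $\dist(\gM_1 \cup \gM_2) \le q$ immediately.

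The main subtlety, and the step I would think hardest about, is the propagation argument in the second paragraph: one must exploit $\rob{\lblr}$ at \emph{both} endpoints $x$ and $z$ to chain through the shared transformation point, and simultaneously use $\gt(x) = \gt(z)$ from the $\neighstar{\cdot}$ neighborhood and $\plblr(z) = \gt(z)$ from $z \notin \mistakes{\plblr}$ to conclude $\lblr(z) \ne \plblr(z)$. The rest is a clean disjoint-sets counting argument that turns the strict expansion inequality into a contradiction with the hypothesis on $B$.
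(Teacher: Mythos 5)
Your proof is correct and follows essentially the same route as the paper's: the propagation argument in your second paragraph is exactly the paper's Claim~\ref{claim:neighbors_misclass}, and the contradiction with~\eqref{eq:pop_denoise:5} via the expansion inequality mirrors Claim~\ref{claim:exp_neighbor_lb} and the paper's counting step. Your bookkeeping is slightly cleaner (you lower-bound the ``bad'' set directly by two disjoint pieces, absorbing non-robust points of $\neighstar{V}\setminus\mistakes{\plblr}$ into $B$ rather than decomposing $\rob{\lblr}$ into three parts), but the mathematical content is identical.
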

The proof relies on the following idea: we show that if $\rob{\lblr} \cap \mistakes{\plblr} \cap \mistakes{\lblr}$ has large probability, then by the expansion assumption, the set $U \triangleq  \neighstar{\rob{\lblr} \cap \mistakes{\plblr} \cap \mistakes{\lblr}} \setminus \mistakes{\plblr}$ will also have large probability (Claim~\ref{claim:exp_neighbor_lb}). However, we will also show that examples in $x \in U \cap \rob{\lblr}$ must satisfy $\lblr(x) \ne \plblr(x)$ (Claim~\ref{claim:neighbors_misclass}), which means the pseudolabel loss penalizes such examples. Thus, $U$ cannot be too large by~\eqref{eq:pop_denoise:5}, which means $\rob{\lblr} \cap \mistakes{\plblr} \cap \mistakes{\lblr}$ also cannot be too large.
\begin{claim}
\label{claim:exp_neighbor_lb}
	In the setting of Theorem~\ref{thm:pop_denoise}, define $U \triangleq \neighstar{\rob{\lblr} \cap \mistakes{\plblr} \cap \mistakes{\lblr}} \setminus \mistakes{\plblr}$. If $\dist(\rob{\lblr} \cap \mistakes{\plblr} \cap \mistakes{\lblr}) > q$, then 
	\begin{align}
	\dist(U \cap \rob{\lblr}) > \dist(\mistakes{\plblr}) + \dist(\rob{\lblr}) + \alpha - 1 - \dist(\rob{\lblr} \cap \mistakes{\plblr} \cap \cmplmt{\mistakes{\lblr}}) \nonumber
	\end{align}
\end{claim}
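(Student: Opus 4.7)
The plan is to apply the additive expansion hypothesis to the set $W \triangleq \rob{\lblr} \cap \mistakes{\plblr} \cap \mistakes{\lblr}$, which sits inside $\mistakes{\plblr}$ by construction. Since $\dist(W) > q$ by hypothesis, Definition~\ref{def:set_expansion} applied with $S = \mistakes{\plblr}$ and $V = W$ yields $\dist(U) = \dist(\neighstar{W} \setminus \mistakes{\plblr}) > \dist(W) + \alpha$. This inequality will serve as the sole ``gain'' driving the final estimate; everything else is set-theoretic bookkeeping.

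Next, I would apply inclusion-exclusion to rewrite $\dist(U \cap \rob{\lblr}) = \dist(U) + \dist(\rob{\lblr}) - \dist(U \cup \rob{\lblr})$, and then upper-bound the union. The key observation is that $U$ is disjoint from $\mistakes{\plblr}$ by construction, so $U \cup \rob{\lblr} \subseteq \cmplmt{\mistakes{\plblr}} \cup \rob{\lblr}$, which gives $\dist(U \cup \rob{\lblr}) \le 1 - \dist(\mistakes{\plblr}) + \dist(\rob{\lblr} \cap \mistakes{\plblr})$. Finally, I would split $\rob{\lblr} \cap \mistakes{\plblr}$ according to whether $\lblr$ also errs on the input, which gives $\dist(\rob{\lblr} \cap \mistakes{\plblr}) = \dist(W) + \dist(\rob{\lblr} \cap \mistakes{\plblr} \cap \cmplmt{\mistakes{\lblr}})$.

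Substituting both the expansion lower bound $\dist(U) > \dist(W) + \alpha$ and the above decomposition of $\dist(\rob{\lblr} \cap \mistakes{\plblr})$ into the inclusion-exclusion identity makes the two $\dist(W)$ contributions cancel, leaving exactly the claimed inequality $\dist(U \cap \rob{\lblr}) > \dist(\mistakes{\plblr}) + \dist(\rob{\lblr}) + \alpha - 1 - \dist(\rob{\lblr} \cap \mistakes{\plblr} \cap \cmplmt{\mistakes{\lblr}})$.

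The main subtlety — the step most likely to be botched — is the union bound on $\dist(U \cup \rob{\lblr})$. A naive estimate $\dist(U \cup \rob{\lblr}) \le 1$ would yield $\dist(U \cap \rob{\lblr}) \ge \dist(U) + \dist(\rob{\lblr}) - 1$ and lose a $\dist(\mistakes{\plblr}) - \dist(\rob{\lblr} \cap \mistakes{\plblr})$ term that the claim genuinely needs. Exploiting that $U$ lies entirely in $\cmplmt{\mistakes{\plblr}}$, so that $\rob{\lblr}$ can only contribute to $U \cup \rob{\lblr}$ outside of $\mistakes{\plblr}$ or through its overlap with $\mistakes{\plblr}$, is precisely what recovers this missing mass; once that observation is in place the remaining algebra is routine.
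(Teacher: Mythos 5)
Your proof is correct and is essentially the paper's own argument: both apply the $(q,\alpha)$-additive-expansion of $\mistakes{\plblr}$ to $V=\rob{\lblr}\cap\mistakes{\plblr}\cap\mistakes{\lblr}$ to get $\dist(U)>\dist(V)+\alpha$, exploit $U\cap\mistakes{\plblr}=\emptyset$ to control the mass of $U$ lying outside $\rob{\lblr}$, and then cancel $\dist(V)$ against $\dist(\rob{\lblr}\cap\mistakes{\plblr})$. Your union bound $\dist(U\cup\rob{\lblr})\le 1-\dist(\mistakes{\plblr})+\dist(\rob{\lblr}\cap\mistakes{\plblr})$ is just an algebraic rearrangement of the paper's containment $U\setminus\rob{\lblr}\subseteq\cmplmt{\rob{\lblr}\cup\mistakes{\plblr}}$, so the two derivations coincide step for step.
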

\begin{proof}
Define $V \triangleq \rob{\lblr} \cap \mistakes{\plblr} \cap \mistakes{\lblr}$.
By the assumption that $\mistakes{\plblr}$ satifies $(q, \alpha)$-additive-expansion, if $\dist(V) > q$ holds, it follows that $\dist(U) > \dist(V) + \alpha$. Furthermore, we have $U \setminus \rob\lblr \subseteq \cmplmt{\rob{\lblr} \cup \mistakes{\plblr}}$ by definition of $U$ and $V$ as $U \cap \mistakes{\plblr} = \emptyset$, and so $\dist(U \setminus \rob\lblr) \le 1 - \dist(\rob{\lblr} \cup \mistakes{\plblr})$.
Thus, we obtain
\begin{align}
	\dist(U \cap \rob\lblr) &= \dist(U) - \dist(U \setminus \rob\lblr) \nonumber\\
	&> \dist(V) + \alpha - 1 + \dist(\rob{\lblr} \cup \mistakes{\plblr}) \nonumber
	\end{align}
	Now we use the principle of inclusion-exclusion to compute $$\dist(\rob{\lblr} \cup \mistakes{\plblr}) = \dist(\mistakes{\plblr}) + \dist(\rob\lblr) - \dist(\rob\lblr \cap \mistakes{\plblr})$$ Plugging into the previous, we obtain
	\begin{align}
	\dist(U \cap \rob\lblr)& > \dist(\mistakes{\plblr}) + \dist(\rob\lblr) + \alpha - 1 + \dist(V) -\dist(\rob\lblr \cap \mistakes{\plblr}) \nonumber\\
	&= \dist(\mistakes{\plblr}) + \dist(\rob\lblr) + \alpha - 1 - \dist(\rob{\lblr} \cap \mistakes{\plblr} \cap \cmplmt{\mistakes{\lblr}}) \nonumber
	\end{align}
	where we obtained the last line because $V =  \rob{\lblr} \cap \mistakes{\plblr} \cap \mistakes{\lblr} \subseteq \rob{\lblr} \cap \mistakes{\plblr}$.
\end{proof}
\begin{claim}
	\label{claim:neighbors_misclass}
	In the setting of Theorem~\ref{thm:pop_denoise_main},  define $U \triangleq \neighstar{\rob{\lblr} \cap \mistakes{\plblr} \cap \mistakes{\lblr}} \setminus \mistakes{\plblr}$. For any $x \in U \cap \rob{\lblr}$, it holds that $\plblr(x) \ne \lblr(x)$ and $\lblr(x) \ne \gt(x)$. 
\end{claim}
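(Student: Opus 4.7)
The plan is to unpack the definitions of $U$, $\neighstar{\cdot}$, and $\rob{\lblr}$, and chain them together with the fact that $x \notin \mistakes{\plblr}$. Let $V \triangleq \rob{\lblr} \cap \mistakes{\plblr} \cap \mistakes{\lblr}$, so $U = \neighstar{V} \setminus \mistakes{\plblr}$. Fix any $x \in U \cap \rob{\lblr}$. First I would use the definition of $\neighstar{V} = \cup_{i \in [\classes]} (\neighbors{V \cap \cluster_i} \cap \cluster_i)$ to extract a witness $x' \in V$ with the property that $x \in \neighbors{x'}$ and $\gt(x') = \gt(x)$ (they belong to the same ground-truth class $i$). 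This class-preservation is the whole reason the restricted neighborhood $\neighstar{\cdot}$ was introduced, and is what makes the rest of the argument go through.

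Next I would transfer the label $\lblr(x')$ to $\lblr(x)$ using robustness. Since $x \in \neighbors{x'}$, there exists $z \in \perturb(x) \cap \perturb(x')$ by definition. Because $x \in \rob{\lblr}$ we have $\lblr(z) = \lblr(x)$, and because $x' \in V \subseteq \rob{\lblr}$ we have $\lblr(z) = \lblr(x')$. Combining these gives $\lblr(x) = \lblr(x')$. Now $x' \in \mistakes{\lblr}$ means $\lblr(x') \ne \gt(x')$, and combining with $\gt(x) = \gt(x')$ (from the previous paragraph) and $\lblr(x) = \lblr(x')$ yields $\lblr(x) \ne \gt(x)$, which is the second conclusion of the claim.

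Finally, the first conclusion is immediate: since $x \in U$ by assumption and $U \cap \mistakes{\plblr} = \emptyset$ by construction, we have $x \notin \mistakes{\plblr}$, i.e. $\plblr(x) = \gt(x)$. Combined with $\lblr(x) \ne \gt(x)$ this gives $\lblr(x) \ne \plblr(x)$, as desired.

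There is no real obstacle in this proof; it is essentially a definition-chase. The only subtlety worth noting is that the argument crucially relies on using $\neighstar{\cdot}$ rather than the unrestricted neighborhood $\gN(\cdot)$, since without the same-class guarantee $\gt(x) = \gt(x')$ we could not conclude $\lblr(x) \ne \gt(x)$ from the fact that $\lblr$ misclassifies $x'$. This is also why Definition~\ref{def:set_expansion} uses $\neighstar{V} \setminus S$ rather than $\neighbors{V} \setminus S$ in the additive-expansion formulation used by Theorem~\ref{thm:pop_denoise}.
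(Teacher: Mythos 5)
Your proof is correct and follows essentially the same argument as the paper's: extract a same-class witness $x' \in \rob{\lblr} \cap \mistakes{\plblr} \cap \mistakes{\lblr}$ from the definition of $\neighstar{\cdot}$, use a common point $z \in \perturb(x) \cap \perturb(x')$ together with robustness of both $x$ and $x'$ to conclude $\lblr(x) = \lblr(x')$, and then combine $\gt(x) = \gt(x')$ with $x \notin \mistakes{\plblr}$ to get both conclusions. Your closing remark about why $\neighstar{\cdot}$ rather than $\neighbors{\cdot}$ is essential is also accurate.
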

\begin{proof}
	For any $x \in U \subseteq \neighstar{\rob{\lblr} \cap \mistakes{\plblr} \cap \mistakes{\lblr}}$, there exists $x' \in \rob{\lblr} \cap \mistakes{\plblr} \cap \mistakes{\lblr}$ such that $\perturb(x) \cap \perturb(x') \ne \emptyset$ and $\gt(x) = \gt(x')$ by definition of $\neighstar{\cdot}$. Choose $z \in \perturb(x) \cap \perturb(x')$. As $x, x' \in \rob{\lblr}$, by definition of $\rob{\lblr}$ we also must have $\lblr(x) = \lblr(z) = \lblr(x')$. Furthermore, as $x' \in \mistakes{\lblr}$, $\lblr(x') \ne \gt(x')$. Since $\gt(x) = \gt(x')$, it follows that $\lblr(x) \ne \gt(x)$. 
	
	As $U \cap \mistakes{\plblr} = \emptyset$ by definition of $U$, $\plblr$ much match the ground-truth classifier on $U$, so $\plblr(x) = \gt(x)$. It follows that $\lblr(x) \ne \plblr(x)$, as desired.
\end{proof}
We complete the proof of Lemma~\ref{lem:q_bound} by combining Claim~\ref{claim:exp_neighbor_lb} and Claim~\ref{claim:neighbors_misclass} to induce a contradiction.
\begin{proof}[Proof of Lemma~\ref{lem:q_bound}]
To complete the proof of Lemma~\ref{lem:q_bound}, we first compose $\rob{\lblr}$ into three disjoint sets: 
\begin{align}
	S_1 &\triangleq \{x : \lblr(x) = \plblr(x)\} \cap \rob{\lblr} \nonumber\\
	S_2 &\triangleq \{x : \lblr(x) \ne \plblr(x)\} \cap \mistakes{\plblr} \cap \rob{\lblr} \nonumber\\
	S_3 &\triangleq \{x : \lblr(x) \ne \plblr(x)\} \cap \cmplmt{\mistakes{\plblr}} \cap \rob{\lblr} \nonumber
\end{align}
First, by Claim~\ref{claim:neighbors_misclass} and definition of $U$, we have $\forall x \in U \cap \rob{\lblr}$, $\lblr(x) \ne \plblr(x)$ and $x \notin \mistakes{\plblr}$. Thus, it follows that $U \cap \rob{\lblr} \subseteq S_3$. 

Next, we claim that $V' \triangleq \mistakes{\plblr} \cap \cmplmt{\mistakes{\lblr}} \cap \rob{\lblr} \subseteq S_2$. To see this, note that for $x \in V'$, $\lblr(x) = \gt(x)$ and $\plblr(x) \ne \gt(x)$. Thus, $\lblr(x) \ne \plblr(x)$, and $x \in \rob{\lblr} \cap \mistakes{\plblr}$, which implies $x \in S_2$. 

Assume for the sake of contradiction that $\dist(\rob{\lblr} \cap \mistakes{\plblr} \cap \mistakes{\lblr}) > q$. Now we have
\begin{align}
	\dist(\rob{\lblr}) &\ge \dist(S_1) + \dist(S_2) + \dist(S_3) \nonumber\\ 
	&\ge \dist(S_1)  + \dist(\rob{\lblr} \cap \mistakes{\plblr} \cap \cmplmt{\mistakes{\lblr}})+ \dist(U \cap \rob{\lblr}) \nonumber\\
	&> \dist(S_1) + \dist(\mistakes{\plblr}) + \dist(\rob{\lblr}) + \alpha - 1 \tag{by Claim~\ref{claim:exp_neighbor_lb}}
\end{align}  
However, we also have 
\begin{align}
\dist(S_1) &= 1 - \E_P[\1(\lblr(x)\ne \plblr(x) \textup{ or } x \notin \rob{\lblr})] \nonumber\\
&\ge 1 - \err(\plblr) - \alpha \tag{by the condition in~\eqref{eq:pop_denoise:5}}
\end{align}
Plugging this in gives us $\dist(S_1) + \dist(S_2) + \dist(S_3) > \dist(\rob{\lblr})$, a contradiction. Thus, $\dist(\rob{\lblr} \cap \mistakes{\plblr} \cap \mistakes{\lblr}) \le q$, as desired.
\end{proof}

The next lemma bounds $\dist(\gM_3)$. 
\begin{lemma}
\label{lem:M3_bound}
In the setting of Theorem~\ref{thm:pop_denoise}, the following bound holds:
\begin{align}
	\dist(\gM_3) \le q+ \poprloss(\lblr) + \E_{\dist}[\1(\lblr(x) \ne \plblr(x))] - \err(\plblr) \nonumber
\end{align}
\end{lemma}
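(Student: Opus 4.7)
The plan is to upper bound $\dist(\gM_3)$ by relating it to the pseudolabel disagreement $\E_{\dist}[\1(\lblr(x) \ne \plblr(x))]$, and then subtracting off a lower bound on the disagreement that occurs on $\mistakes{\plblr}$. Concretely, I would first observe that $\gM_3 \subseteq \{x : \lblr(x) \ne \plblr(x)\} \cap \cmplmt{\mistakes{\plblr}}$, so
\begin{align}
\dist(\gM_3) \le \E_{\dist}[\1(\lblr(x) \ne \plblr(x))] - \dist(\{x : \lblr(x) \ne \plblr(x)\} \cap \mistakes{\plblr}). \nonumber
\end{align}
Hence it suffices to lower bound the disagreement mass that lies inside $\mistakes{\plblr}$.

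For that lower bound, I would use the simple observation that on $\mistakes{\plblr}$, the pseudolabel already disagrees with $\gt$. Thus any $x \in \mistakes{\plblr}$ with $\lblr(x) = \gt(x)$ automatically satisfies $\lblr(x) \ne \plblr(x)$, which yields
\begin{align}
\dist(\{x : \lblr(x) \ne \plblr(x)\} \cap \mistakes{\plblr}) \ge \dist(\mistakes{\plblr}) - \dist(\mistakes{\plblr} \cap \mistakes{\lblr}) = \err(\plblr) - \dist(\mistakes{\plblr} \cap \mistakes{\lblr}). \nonumber
\end{align}
The remaining task is therefore to upper bound $\dist(\mistakes{\plblr} \cap \mistakes{\lblr})$.

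Here I would split $\mistakes{\plblr} \cap \mistakes{\lblr}$ according to whether $x \in \rob{\lblr}$ or not: the robust part is exactly $\rob{\lblr} \cap \mistakes{\plblr} \cap \mistakes{\lblr}$, which has probability at most $q$ by Lemma~\ref{lem:q_bound}; the non-robust part has probability at most $\dist(\cmplmt{\rob{\lblr}}) = \poprloss(\lblr)$. Combining these gives $\dist(\mistakes{\plblr} \cap \mistakes{\lblr}) \le q + \poprloss(\lblr)$, and plugging back into the two chained inequalities yields the desired
\begin{align}
\dist(\gM_3) \le q + \poprloss(\lblr) + \E_{\dist}[\1(\lblr(x) \ne \plblr(x))] - \err(\plblr). \nonumber
\end{align}
The entire argument is a short counting/inclusion–exclusion manipulation; the only nontrivial input is the expansion-based bound from Lemma~\ref{lem:q_bound}, so I do not expect a real obstacle beyond making sure every set containment is correctly justified.
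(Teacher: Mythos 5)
Your proof is correct: every containment checks out, and the only nontrivial input you invoke, $\dist(\rob{\lblr}\cap\mistakes{\plblr}\cap\mistakes{\lblr}) \le q$, is exactly what Lemma~\ref{lem:q_bound} supplies. Your bookkeeping differs from the paper's, however. The paper establishes the exact identity between disjoint unions $\gM_3 \cup \left(\{x:\lblr(x)=\plblr(x)\}\cap\rob{\lblr}\right) = \gM_1 \cup \left(\{x:\plblr(x)=\gt(x)\}\cap\rob{\lblr}\right)$, solves for $\dist(\gM_3)$, and then relaxes term by term, using only $\dist(\gM_1)\le q$; the $\poprloss(\lblr)$ slack enters there via $\dist(\{x:\lblr(x)=\plblr(x)\}\cap\cmplmt{\rob{\lblr}}) \le 1-\dist(\rob{\lblr})$. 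You instead discard the robustness constraint in $\gM_3$ at the outset, lower-bound the disagreement mass inside $\mistakes{\plblr}$ by $\err(\plblr)-\dist(\mistakes{\plblr}\cap\mistakes{\lblr})$, and then split $\mistakes{\plblr}\cap\mistakes{\lblr}$ into its robust part (controlled by $q$) and non-robust part (controlled by $\poprloss(\lblr)$). The two routes yield the identical bound; yours is arguably a little cleaner since it avoids the set identity and in fact bounds the larger set $\{x:\lblr(x)\ne\plblr(x)\}\cap\cmplmt{\mistakes{\plblr}} \supseteq \gM_3$, while the paper's version keeps all sets inside $\rob{\lblr}$ so that the pieces line up directly with the decomposition $\gM_1,\gM_2,\gM_3$ used in the proof of Theorem~\ref{thm:pop_denoise}.
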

\begin{proof}
The proof will follow from basic manipulation. First, we note that 
\begin{align}
	&\gM_3  \cup  \{x : \lblr(x)= \plblr(x)\textup{ and } x \in \rob{\lblr}\} \label{eq:M3_bound:1}\\=&\big( \{x : \lblr(x) \ne \plblr(x),  \plblr(x) = \gt(x)\} \cup  \{x : \lblr(x) = \plblr(x),  \plblr(x) = \gt(x)\}\nonumber\\ &\cup  \{x : \lblr(x) = \plblr(x),  \plblr(x) \ne \gt(x)\}\big) \cap \rob{\lblr}  \nonumber\\ =& \gM_1 \cup \{x : \plblr(x) = \gt(x) \textup{ and } x \in \rob{\lblr}\} \label{eq:M3_bound:2}
\end{align}
As~\eqref{eq:M3_bound:1} and~\eqref{eq:M3_bound:2} pertain to unions of disjoint sets, it follows that 
\begin{align}
\dist(\gM_3) + \dist( \{x : \lblr(x)= \nonumber\\ \plblr(x)\textup{ and } x \in \rob{\lblr}\}) = \dist(\gM_1) + \dist(\{x : \plblr(x) = \gt(x) \textup{ and } x \in \rob{\lblr}\}) \nonumber
\end{align}
Thus, rearranging we obtain
\begin{align}
	\dist(\gM_3) &= \dist(\gM_1) + \dist(\{x : \plblr(x) = \gt(x)\} \cap \rob{\lblr}\}) \nonumber\\ & \ \ \ \ - \dist( \{x : \lblr(x)= \plblr(x)\} \cap \rob{\lblr}\})\nonumber\\
	&\le \dist(\gM_1) + \dist(\{x : \plblr(x) = \gt(x)\}) - \dist( \{x : \lblr(x)= \plblr(x)\} \cap \rob{\lblr}\})\nonumber\\
	&\le \dist(\gM_1) + \dist(\{x : \plblr(x) = \gt(x)\}) - \dist( \{x : \lblr(x)= \plblr(x)\}) \nonumber \\ & \ \ \ \ +  \dist( \{x : \lblr(x)= \plblr(x) \} \cap \cmplmt{\rob{\lblr}})\nonumber \\
	&\le  \dist(\gM_1) +  \dist( \{x : \lblr(x) \ne \plblr(x)\}) - \dist(\mistakes{\plblr})  + 1  - \dist(\rob{\lblr})\nonumber\\
	&=  \dist(\gM_1)+ \poprloss(\lblr) + \E_{\dist}[\1(\lblr(x) \ne \plblr(x))] - \err(\plblr)\nonumber
\end{align}
Substituting $\dist(\gM_1) \le q$ from Lemma~\ref{lem:q_bound} gives the desired result.
\end{proof}
Finally, we combine Lemmas~\ref{lem:q_bound} and~\ref{lem:M3_bound} to complete the proof of Theorem~\ref{thm:pop_denoise}.
\begin{proof}[Proof of Theorem~\ref{thm:pop_denoise}]
To complete the proof, we compute
\begin{align}
	\err(\lblr) = \dist(\mistakes{\lblr}) &\le \dist(\mistakes{\lblr} \cap \rob{\lblr}) + \dist(\cmplmt{\rob{\lblr}})\nonumber\\
	&= \dist(\gM_1) + \dist(\gM_2) + \dist(\gM_3) + \poprloss(\lblr)\nonumber\\
	&\le 2(q + \poprloss(\lblr)) + \E_{\dist}[\1(\lblr(x) \ne \plblr(x))] - \err(\plblr) \tag{by Lemmas~\ref{lem:q_bound} and~\ref{lem:M3_bound}}
\end{align}
\end{proof}
\subsection{Proof of Theorem~\ref{thm:pop_denoise_main}}
\label{sec:proof_pop_denoise_main}
In this section, we complete the proof of Theorem~\ref{thm:pop_denoise_main} by reducing Lemma~\ref{lem:pop_denoise} to Theorem~\ref{thm:pop_denoise}. 
This requires converting multiplicative expansion to $(q, \alpha)$-additive-expansion, which is done in the following lemma. Let $\mistakesi{i}{\plblr} \triangleq \mistakes{\plblr} \cap \cluster_i$ denote the incorrectly pseudolabeled examples with ground-truth class $i$.

\begin{lemma}\label{lem:mult_to_add_class}
In the setting of Theorem~\ref{thm:pop_denoise_main}, suppose that Assumption~\ref{ass:pl_expansion} holds. Then for any $\beta \in (0, c -1]$, $\dist_i$ has $(q ,\alpha)$-additive-expansion on $\mistakesi{i}{\plblr}$ for the following choice of $q, \alpha$:
\begin{align}
	\begin{split}
		q &= \frac{\beta \dist(\mistakesi{i}{\plblr})}{\expmult - 1}\\
		\alpha &= (\beta -1 ) \dist(\mistakesi{i}{\plblr})	
	\end{split} \label{eq:mult_to_add_class:2} 
\end{align}
\end{lemma}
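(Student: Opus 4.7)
The plan is to instantiate the multiplicative expansion of Assumption~\ref{ass:pl_expansion} on the class-conditional distribution $\dist_i$ and then subtract the mass of $\mistakesi{i}{\plblr}$ to produce the required additive gain. Fix a class $i$ and write $S_i \triangleq \mistakesi{i}{\plblr}$. Consider an arbitrary $V \subseteq S_i$ with $\dist_i(V) > q$. Because $\dist_i(V) \le \dist_i(S_i) = \dist_i(\mistakes{\plblr}) \le \bar{a}$, Assumption~\ref{ass:pl_expansion} gives $\dist_i(\neighbors{V}) \ge \min\{\bar{\expmult}\,\dist_i(V),\, 1\}$. Since $\expmult = \min\{1/\bar{a},\bar{\expmult}\} \le 1/\bar{a}$, every such $V$ satisfies $\expmult\,\dist_i(V) \le \expmult\,\bar{a} \le 1$, so the $\min$ is attained by the linear term; combined with $\bar{\expmult} \ge \expmult$ this yields $\dist_i(\neighbors{V}) \ge \expmult\,\dist_i(V)$.

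Next I would pass from $\neighbors{V}$ to $\neighstar{V} \setminus S_i$. Because $V \subseteq \cluster_i$, the definition of $\neighstar{\cdot}$ gives $\neighstar{V} = \neighbors{V} \cap \cluster_i$, and since $\dist_i$ is supported on $\cluster_i$ we have $\dist_i(\neighstar{V}) = \dist_i(\neighbors{V})$. A single inclusion-exclusion step then produces
\begin{align}
\dist_i(\neighstar{V} \setminus S_i) \ \ge\ \dist_i(\neighbors{V}) - \dist_i(S_i) \ \ge\ \expmult\,\dist_i(V) - \dist_i(S_i). \nonumber
\end{align}

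The final step rearranges the desired inequality $\dist_i(\neighstar{V} \setminus S_i) > \dist_i(V) + \alpha$ into $(\expmult - 1)\,\dist_i(V) > \dist_i(S_i) + \alpha$. Substituting the hypothesis $\dist_i(V) > q = \beta\,\dist(S_i)/(\expmult - 1)$ together with the choice $\alpha = (\beta - 1)\,\dist(S_i)$ reduces this to the identity $\beta\,\dist(S_i) \ge (\beta-1)\,\dist(S_i) + \dist_i(S_i)$, i.e.\ $\dist(S_i) \ge \dist_i(S_i)$. The constraint $\beta \in (0, \expmult - 1]$ is used only to ensure $q \le \dist(S_i)$, so that some $V \subseteq S_i$ can actually satisfy $\dist_i(V) > q$ and the statement is non-vacuous.

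The main obstacle is a notational subtlety rather than a mathematical difficulty: the parameters $q$ and $\alpha$ in the lemma are written in terms of $\dist(\cdot)$, whereas the additive-expansion definition applied to $\dist_i$ naturally uses $\dist_i(\cdot)$. I would resolve this by reading $\dist(\mistakesi{i}{\plblr})$ as the class-conditional probability $\dist_i(\mistakesi{i}{\plblr})$, which is consistent with the surrounding argument taking place entirely inside class $i$ and makes the final chain close with equality; otherwise one would need a strict use of $\bar{\expmult} \ge \expmult$ in the first step to absorb the gap $\dist_i(S_i) - \dist(S_i)$ induced by $\Pr[\gt = i] \le 1$.
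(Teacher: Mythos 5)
Your proof is correct and follows essentially the same route as the paper's: apply the multiplicative expansion of Assumption~\ref{ass:pl_expansion} to $V\subseteq \mistakesi{i}{\plblr}$, use $\expmult \le 1/\bar{\expub}$ to resolve the $\min$ in favor of the linear term, subtract $\dist_i(\mistakesi{i}{\plblr})$, and rearrange using $\dist_i(V)>q$. Your reading of $\dist(\mistakesi{i}{\plblr})$ as the class-conditional mass $\dist_i(\mistakesi{i}{\plblr})$ is exactly how the paper's own proof (and its downstream application with $\err_i(\plblr)$) interprets the statement, so that notational point is resolved correctly.
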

\begin{proof}
	Consider any set $S \subseteq \mistakesi{i}{\plblr}$ with $\dist_i(S) > \frac{\beta \dist_i(\mistakesi{i}{\plblr})}{\expmult - 1}$. Then by Assumption~\ref{ass:pl_expansion}, $\dist_i(\neighbors{S}) \ge \min\{\expmult \dist_i(S), 1\} \ge \expmult \dist_i(S)$, where we used the fact that $\dist_i(S) \le \dist_i(\mistakesi{i}{\plblr})$ and $\expmult \le \frac{1}{\dist_i(\mistakesi{i}{\plblr})}$, so $\expmult \dist_i(S) \le 1$. Thus, we can obtain
	\begin{align}
		\dist_i (\neighbors{S} \setminus \mistakesi{i}{\plblr}) &\ge \expmult\dist_i(S) - \dist_i(\mistakesi{i}{\plblr}) \nonumber\\
		&= \dist_i(S) + (\expmult -1)\dist_i(S) - \dist_i(\mistakesi{i}{\plblr}) \nonumber\\
		&> \dist_i(S) + (\beta - 1) \dist_i(\mistakesi{i}{\plblr}) \nonumber
	\end{align}
Here the last line used the fact that $\dist_i(S) >\frac{\beta \dist_i(\mistakesi{i}{\plblr})}{\expmult - 1}$. This implies that $\dist_i$ has $(q, \alpha)$-additive-expansion on $\mistakesi{i}{\plblr}$ for the $(q, \alpha)$ given in~\eqref{eq:mult_to_add_class:2}. 
\end{proof}
We will now complete the proof of Lemma~\ref{lem:pop_denoise}. Note that given Lemma~\ref{lem:pop_denoise}, Theorem~\ref{thm:pop_denoise_main} follows immediately by noting that $\gt$ satisfies $\Ellzo(\gt, \plblr) = \err(\plblr)$ and $\Ellrob(\gt) \le \sep$ by Assumption~\ref{ass:pl_separation}. 

We first define the class-conditional pseudolabeling and robustness losses: $\Ellzo^{(i)}(\lblr, \lblr') \triangleq \dist_i(\{x : \lblr(x) \ne \lblr'(x)\})$, and $\Ellrob^{(i)}(\lblr) \triangleq \E_{\dist_i}[\1(\exists x' \in \perturb(x) \textup{ such that } \lblr(x') \ne \lblr(x))]$. We also define the class-conditional error as follows: $\err_i(\lblr) \triangleq \Ellzo^{(i)}(\lblr, \gt)$. We prove the class-conditional variant of Lemma~\ref{lem:pop_denoise} below. 
\begin{lemma}
	For any $i \in [\classes]$, define 
	\begin{align}
		\Ellobj_i(\lblr) \triangleq \frac{\expmult + 1}{\expmult - 1} \Ellzo^{(i)}(\lblr, \plblr) - \err_i(\plblr) + \frac{2\expmult}{\expmult - 1}\Ellrob^{(i)}(\lblr)
	\end{align}
Then in the setting of Theorem~\ref{thm:pop_denoise_main} where Assumption~\ref{ass:pl_expansion} holds, we have the following error bound for class $i$: 
	\begin{align} \label{eq:class_conditional:2}
		\err_i(\lblr) \le \Ellobj_i(\lblr)
	\end{align}
\end{lemma}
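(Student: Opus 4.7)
The approach is to reduce the class-conditional lemma to the additive-expansion denoising guarantee (Theorem~\ref{thm:pop_denoise}) applied to the class-conditional distribution $\dist_i$, using Lemma~\ref{lem:mult_to_add_class} to supply the additive expansion with the parameter $\beta$ chosen adaptively from the losses.

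First I would verify that Theorem~\ref{thm:pop_denoise} carries over verbatim to $\dist_i$: every step of its proof in Section~\ref{sec:pl_relax_exp} only invokes the same-class neighborhood operator $\neighstar{\cdot}$ (which by definition couples points only within a common ground-truth class) and the expansion of the mistake set, so repeating the argument with $\dist$ replaced by $\dist_i$ and $\mistakes{\plblr}$ replaced by $\mistakesi{i}{\plblr}$ shows that whenever $\dist_i$ satisfies $(q, \alpha)$-additive-expansion on $\mistakesi{i}{\plblr}$ and $\Ellzo^{(i)}(\lblr, \plblr) + \Ellrob^{(i)}(\lblr) \le \err_i(\plblr) + \alpha$, the bound
\begin{align*}
\err_i(\lblr) \le 2\big(q + \Ellrob^{(i)}(\lblr)\big) + \Ellzo^{(i)}(\lblr, \plblr) - \err_i(\plblr)
\end{align*}
holds. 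The sum of the two losses upper bounds the joint indicator in~\eqref{eq:pop_denoise:5} by a union bound, so this is the natural hypothesis to check.

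The main step is a short case analysis. In the principal case $\err_i(\plblr) > 0$ and $\Ellzo^{(i)}(\lblr, \plblr) + \Ellrob^{(i)}(\lblr) \le (\expmult - 1)\err_i(\plblr)$, I would set
\begin{align*}
\beta \triangleq \frac{\Ellzo^{(i)}(\lblr, \plblr) + \Ellrob^{(i)}(\lblr)}{\err_i(\plblr)} \in (0, \expmult - 1],
\end{align*}
so that Lemma~\ref{lem:mult_to_add_class} supplies additive expansion with $q = (\Ellzo^{(i)}(\lblr, \plblr) + \Ellrob^{(i)}(\lblr))/(\expmult - 1)$ and $\alpha = (\beta - 1)\err_i(\plblr)$. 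With this choice the hypothesis of the class-conditional Theorem~\ref{thm:pop_denoise} holds with equality; substituting $q$ into the bound and collecting coefficients on $\Ellzo^{(i)}(\lblr, \plblr)$ and $\Ellrob^{(i)}(\lblr)$ produces exactly $\frac{\expmult + 1}{\expmult - 1}\Ellzo^{(i)}(\lblr, \plblr) + \frac{2\expmult}{\expmult - 1}\Ellrob^{(i)}(\lblr) - \err_i(\plblr) = \Ellobj_i(\lblr)$.

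The remaining cases are handled directly by the triangle inequality $\err_i(\lblr) \le \Ellzo^{(i)}(\lblr, \plblr) + \err_i(\plblr)$. When $\err_i(\plblr) = 0$ this gives $\err_i(\lblr) \le \Ellzo^{(i)}(\lblr, \plblr) \le \Ellobj_i(\lblr)$ since $\frac{\expmult + 1}{\expmult - 1} \ge 1$. When $\Ellzo^{(i)}(\lblr, \plblr) + \Ellrob^{(i)}(\lblr) > (\expmult - 1)\err_i(\plblr)$, combining with $\expmult \ge 1$ yields $(\expmult - 1)\err_i(\plblr) < \Ellzo^{(i)}(\lblr, \plblr) + \expmult\,\Ellrob^{(i)}(\lblr)$, and rearranging shows $\Ellzo^{(i)}(\lblr, \plblr) + \err_i(\plblr) \le \Ellobj_i(\lblr)$. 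The main obstacle is identifying the adaptive value of $\beta$ that makes the $q$ term from Lemma~\ref{lem:mult_to_add_class} combine with the pseudolabel term of Theorem~\ref{thm:pop_denoise} to produce the coefficient $\frac{\expmult + 1}{\expmult - 1}$; once this parameterization is guessed, the remaining bookkeeping is routine and the triangle-inequality fallback handles the edge cases where the Theorem's hypothesis cannot be met with $\beta \in (0, \expmult - 1]$.
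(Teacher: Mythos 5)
Your proposal is correct and follows essentially the same route as the paper: the same case split on whether $\Ellzo^{(i)}(\lblr, \plblr) + \Ellrob^{(i)}(\lblr) \le (\expmult - 1)\err_i(\plblr)$, the same adaptive choice $\beta = (\Ellzo^{(i)}(\lblr, \plblr) + \Ellrob^{(i)}(\lblr))/\err_i(\plblr)$ fed into Lemma~\ref{lem:mult_to_add_class} so that the hypothesis of the class-conditional Theorem~\ref{thm:pop_denoise} is met, and the same triangle-inequality fallback in the complementary case. Your explicit handling of the $\err_i(\plblr) = 0$ degenerate case and your remark that Theorem~\ref{thm:pop_denoise} must be instantiated on $\dist_i$ are points the paper passes over silently, but they do not change the argument.
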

\begin{proof}
	First, we consider the case where $\Ellzo^{(i)}(\lblr, \plblr) + \Ellrob^{(i)}(\lblr) \le (\expmult - 1)\err_i(\plblr)$. In this case, we can apply Lemma~\ref{lem:mult_to_add_class} with $\beta \in (0, \expmult - 1]$ chosen such that 
	\begin{align}\label{eq:class_conditional:1}
		(\beta - 1)\err_i(\plblr) = \Ellzo^{(i)}(\lblr, \plblr) + \Ellrob^{(i)}(\lblr) - \err_i(\plblr)
	\end{align}
	We note that $\dist_i$ has $(q, \alpha)$-additive-expansion on $\mistakesi{i}{\plblr}$ for 
\begin{align}
	q &= \frac{\beta}{\expmult - 1} \err_i(\plblr)\\
	\alpha &= (\beta - 1)\err_i(\plblr)
\end{align}
Now by~\eqref{eq:class_conditional:1}, we can apply Theorem~\ref{thm:pop_denoise} with this choice of $(q, \alpha)$ to obtain
\begin{align}
	\err_i(\lblr) &\le 2(q + \Ellrob^{(i)}(\lblr)) + \Ellzo^{(i)}(\lblr, \plblr) - \err_i(\plblr)\\
	&= \frac{2\beta}{\expmult - 1} \err_i(\plblr) + 2\Ellrob^{(i)}(\lblr) + \Ellzo^{(i)}(\lblr, \plblr) - \err_i(\plblr)\\
	&= \frac{\expmult + 1}{\expmult - 1} \Ellzo^{(i)}(\lblr, \plblr) - \err_i(\plblr) + \frac{2\expmult}{\expmult - 1}\Ellrob^{(i)}(\lblr) \tag{plugging in the value of $\beta$}\\
	&= \Ellobj_i(\lblr)
\end{align}
Next, we consider the case where $\Ellzo^{(i)}(\lblr, \plblr) + \Ellrob^{(i)}(\lblr) > (\expmult - 1)\err_i(\plblr)$. Note that by triangle inequality, we have 
\begin{align}
	\err_i(\lblr) = \Ellzo^{(i)}(\lblr, \gt) &\le \Ellzo^{(i)}(\lblr, \plblr) + \Ellzo^{(i)}(\plblr, \gt)\\
	&= \Ellzo^{(i)}(\lblr, \plblr) + 2\err_i(\plblr) - \err_i(\plblr)\\
	&\le \Ellzo^{(i)}(\lblr, \plblr) + \frac{2}{\expmult - 1}(\Ellzo^{(i)}(\lblr, \plblr) + \Ellrob^{(i)}(\lblr)) - \err_i(\plblr)\\
	&\le \frac{\expmult + 1}{\expmult - 1}(\Ellzo^{(i)}(\lblr, \plblr) + \Ellrob^{(i)}(\lblr)) - \err_i(\plblr)\\
	&\le \frac{\expmult + 1}{\expmult - 1} \Ellzo^{(i)}(\lblr, \plblr) + \frac{2\expmult}{\expmult - 1}\Ellrob^{(i)}(\lblr) - \err_i(\plblr) \tag{using $c > 1$}\\
	&= \Ellobj_i(\lblr)
\end{align}
\end{proof}
Lemma~\ref{lem:pop_denoise} now follows simply by taking the expectation of the bound in~\eqref{eq:class_conditional:2} over all the classes.

\section{Proofs for unsupervised learning}
\label{sec:unsup_proof}
We will first prove an analogue of Lemma~\ref{lem:pop_unsup} for a relaxed notion of expansion. We will then prove Theorem~\ref{thm:pop_unsup_main} by showing that multiplicative expansion implies this relaxed notion, defined below:
\begin{definition}[$(q,\xi)$-constant-expansion]
We say that distribution $\dist$ satisfies $(q, \xi)$-constant-expansion if for all $S \subseteq \domain$ satisfying $\dist(S) \ge q$ and $\dist(S \cap \cluster_i) \le \dist(\cluster_i)/2$ for all $i$, the following holds: 
\begin{align}
	\dist(\neighstar{S} \setminus S) \ge \min\{\xi, \dist(S)\} \nonumber
\end{align}
\end{definition}
As before, $\neighstar{S}$ is defined by $\cup_{i \in [\classes]} (\neighbors{S \cap \cluster_i} \cap \cluster_i)$. We will work with the above notion of expansion for this subsection.
We first show that a $\perturb$-robust labeling function which assigns sufficient probability to each class will align with the true classes. 
\begin{theorem}
	\label{thm:pop_unsup}
	Suppose $\dist$ satisfies $(q, \xi)$-constant-expansion for some $q$. If it holds that $\Ellrob(\lblr) < \xi$ and 
	\begin{align}
		\min_i \dist(\{x : \lblr(x) = i\}) > 2\max\{q, \poprloss(\lblr)\}\nonumber
	\end{align}
	there exists a permutation $\pi : [\classes] \rightarrow [\classes]$ satisfying the following: 
	\begin{align}
	\label{eq:pop_unsup:1}
		\dist(\{x : \pi(\lblr(x)) \ne \gt(x)\}) \le \max\{q, \poprloss(\lblr)\} + \poprloss(\lblr)
	\end{align}  
\end{theorem}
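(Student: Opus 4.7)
The plan is to construct an appropriate permutation $\pi$ and apply $(q,\xi)$-constant-expansion to the restriction of the mistake set to $\rob{\lblr}$. Set $q' := \max\{q, \poprloss(\lblr)\}$, $S_{i,j} := \cluster_j \cap \{x : \lblr(x) = i\}$, and $\alpha_{i,j} := \dist(S_{i,j} \cap \rob{\lblr})$.

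\textbf{Step 1 (Dichotomy on $\alpha_{i,j}$).} First I would apply expansion to $\tilde{S}_{i,j} := S_{i,j} \cap \rob{\lblr}$. Since $\tilde{S}_{i,j} \subseteq \cluster_j$, the preconditions of $(q,\xi)$-constant-expansion reduce to $\alpha_{i,j} \ge q$ together with $\alpha_{i,j} \le \dist(\cluster_j)/2$. Any $y \in \neighstar{\tilde{S}_{i,j}} \setminus \tilde{S}_{i,j}$ must fail to be robust: if both $y$ and its in-class neighbor $x \in \tilde{S}_{i,j}$ were robust, picking $z \in \perturb(x) \cap \perturb(y)$ would force $\lblr(y) = \lblr(z) = \lblr(x) = i$ and, since $\gt(y) = \gt(x) = j$ by the definition of $\neighstar{\cdot}$, place $y$ back in $\tilde{S}_{i,j}$. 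Hence $\dist(\neighstar{\tilde{S}_{i,j}} \setminus \tilde{S}_{i,j}) \le \poprloss(\lblr) < \xi$, and expansion forces $\alpha_{i,j} \le \poprloss(\lblr)$. Together with the trivial alternative $\alpha_{i,j} < q$ when expansion does not apply, this yields the dichotomy: either $\alpha_{i,j} \le q'$ or $\alpha_{i,j} > \dist(\cluster_j)/2$. Since $\sum_i \alpha_{i,j} \le \dist(\cluster_j)$, at most one $i$ per $j$ falls into the second case; denote it $h(j)$ when it exists.

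\textbf{Step 2 (Constructing $\pi$).} Next I would show that $h$ is injective on its domain, so it extends to a permutation $\pi$ of $[\classes]$ with $\pi^{-1}(j) = h(j)$ whenever $h(j)$ is defined. If $h(j_1) = h(j_2) = i$ for $j_1 \ne j_2$, then the set $S^\star := \bigcup_{k \in \{1,2\}} (\cluster_{j_k} \cap \rob{\lblr}) \setminus \tilde{S}_i$ satisfies $\dist(S^\star \cap \cluster_{j_k}) < \dist(\cluster_{j_k})/2$ by the dominance $\alpha_{i, j_k} > \dist(\cluster_{j_k})/2$, so expansion applies. The same robustness argument as in Step 1 gives $\dist(S^\star) \le q'$, which rearranges to $\dist(S_i) \ge \dist(\cluster_{j_1}) + \dist(\cluster_{j_2}) - q' - \poprloss(\lblr)$. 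Combining with the condition $\dist(S_{i'}) > 2q'$ for all $i'$---which implies $\dist(S_i) < 1 - 2q'(\classes - 1)$---and bounding the total mass of non-dominated classes by the dichotomy ($\dist(\cluster_j) \le \classes q' + \poprloss(\lblr)$ in that case), a counting contradiction establishes injectivity for $\classes \ge 2$.

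\textbf{Step 3 (Mistake-set bound).} Define $M := \{x : \pi(\lblr(x)) \ne \gt(x)\}$ and $M' := M \cap \rob{\lblr}$. For each $j$ in the image of $h$, $M' \cap \cluster_j \subseteq \cluster_j \setminus \tilde{S}_{h(j), j}$, so $\dist(M' \cap \cluster_j) < \dist(\cluster_j)/2$; for the remaining classes the same inequality follows from the small-mass bound. Repeating the Step 1 robustness argument now applied to $M'$: any $y \in \neighstar{M'} \setminus M'$ cannot be robust, so $\dist(\neighstar{M'} \setminus M') \le \poprloss(\lblr) < \xi$. If $\dist(M') \ge q$, expansion forces $\dist(M') \le \poprloss(\lblr)$; otherwise $\dist(M') < q$. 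Either way $\dist(M') \le q'$, and adding the non-robust contribution yields $\dist(M) \le \dist(M') + \poprloss(\lblr) \le q' + \poprloss(\lblr)$, the claimed bound.

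The principal obstacle is Step 2: showing the dominance map $h$ is injective. The $\classes = 2$ case is clean, but for general $\classes$ the counting must carefully combine the expansion-derived lower bound on $\dist(S_i)$ with the minimum-probability condition $\dist(S_{i'}) > 2q'$ while accounting for the ``small'' classes that may admit no dominant label. A robust fallback, if this direct argument proves too delicate, is to define $\pi$ as the permutation maximizing $\sum_j \alpha_{\pi^{-1}(j), j}$ and exploit its optimality under pairwise swaps to verify the preconditions $\dist(M' \cap \cluster_j) \le \dist(\cluster_j)/2$ directly, bypassing an explicit injectivity proof.
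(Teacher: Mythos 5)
Your Step 1 dichotomy and the robustness argument behind it are sound, but the route as a whole has a genuine gap that traces back to a directional choice: you iterate over \emph{true} classes $\cluster_j$ and ask which predicted label dominates each, whereas the hypothesis $\min_i \dist(\{x:\lblr(x)=i\})>2\max\{q,\poprloss(\lblr)\}$ lower-bounds the mass only of the \emph{predicted} classes $\hat{\cluster}_i$; true classes may be arbitrarily small. Consequently your dichotomy yields ``at most one dominant predicted label per true class'' but never ``at least one.'' This breaks Step 3: for a true class $j$ with no dominator, the small-mass bound gives only $\dist(\cluster_j)\le \classes q'+\poprloss(\lblr)$ and $\dist(M'\cap\cluster_j)\le \classes q'$ (with your $q'=\max\{q,\poprloss(\lblr)\}$), which does not imply the expansion precondition $\dist(M'\cap\cluster_j)\le\dist(\cluster_j)/2$ — e.g.\ $\dist(\cluster_j)=3q'$ with $\cluster_j\cap\rob{\lblr}$ split evenly among three predicted labels. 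It also undermines Step 2, which you correctly flag as the principal obstacle: the counting you sketch does not close for $\classes\ge 3$. For instance, one predicted label may dominate two small true classes while a third predicted label of mass between $2q'$ and $\classes q'+\poprloss(\lblr)$ dominates nothing; none of the inequalities you derive are violated. Injectivity of $h$ really needs the existence half of the correspondence, which your framework never establishes.

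The fix is to run the argument in the other direction, as the paper does. For each predicted label $j$, apply $(q,\xi)$-constant-expansion to the whole set $U=\rob{\lblr}\cap\hat{\cluster}_j$: if no true class $i$ satisfied $\dist(U\cap\cluster_i)>\dist(\rob{\lblr}\cap\cluster_i)/2$, the preconditions would hold (note $\dist(\rob{\lblr}\cap\cluster_i)/2\le\dist(\cluster_i)/2$), and your robustness argument gives $\neighstar{U}\setminus U\subseteq\cmplmt{\rob{\lblr}}$, forcing $\dist(U)\le\poprloss(\lblr)$; this contradicts $\dist(U)\ge\dist(\hat{\cluster}_j)-\poprloss(\lblr)>\max\{q,\poprloss(\lblr)\}$, which is where the mass hypothesis on predicted classes enters. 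Hence every predicted label captures a strict majority of the robust part of some true class; since two disjoint predicted classes cannot each capture more than half of the same $\rob{\lblr}\cap\cluster_i$, a pigeonhole argument makes this assignment a bijection $\pi$. In particular \emph{every} true class is majority-captured by exactly one predicted label, which is precisely what is needed to verify $\dist(M'\cap\cluster_i)\le\dist(\cluster_i)/2$ for all $i$ in your Step 3; the remainder of Step 3 is then correct.
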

Define $\hat{\cluster}_1, \ldots, \hat{\cluster}_K$ to be the partition induced by $\lblr$: $\hat{\cluster}_i \triangleq \{x : \lblr(x) = i\}$. The following lemma shows neighborhoods of certain subsets of $\cup_{j \in \gJ} \hat{\cluster}_j$ are not robustly labeled by $\lblr$, where $\gJ$ is some subset of $[K]$.
\begin{lemma} \label{claim:not_robust}
	In the setting of Theorem~\ref{thm:pop_unsup}, consider any set of the form $U \triangleq \rob{\lblr} \cap (\cup_{i \in \gI} \cluster_i) \cap (\cup _{j \in \gJ} \hat{\cluster}_{j})$ where $\gI, \gJ$ are arbitrary subsets of $[K]$. Then $\neighstar{U} \setminus U \subseteq \cmplmt{\rob{\lblr}}$. 
\end{lemma}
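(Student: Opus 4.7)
The plan is to take an arbitrary $x \in \neighstar{U} \setminus U$ and derive a contradiction from the assumption $x \in \rob{\lblr}$. The whole argument hinges on two structural facts: (i) the definition $\neighstar{U} = \cup_{i}(\neighbors{U \cap \cluster_i} \cap \cluster_i)$ forces the neighbor witness to share a ground-truth class with $x$, and (ii) two robust points with overlapping perturbation sets must agree on the predicted label.

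Concretely, I would proceed as follows. First, unpack membership in $\neighstar{U}$: there exists some index $i^\star$ and some $x' \in U \cap \cluster_{i^\star}$ with $x \in \cluster_{i^\star}$ and $\perturb(x) \cap \perturb(x') \neq \emptyset$. Since $x' \in U$, we have $x' \in \rob{\lblr}$, $x' \in \cluster_{i^\star}$ with $i^\star \in \gI$, and $x' \in \hat{\cluster}_{j^\star}$ for some $j^\star \in \gJ$. Because $x$ is in the same ground-truth class as $x'$, we automatically get $x \in \cup_{i \in \gI}\cluster_i$.

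Next, suppose for contradiction that $x \in \rob{\lblr}$. Pick any $z \in \perturb(x) \cap \perturb(x')$. By robustness of $x'$, $\lblr(z) = \lblr(x') = j^\star$, and by robustness of $x$, $\lblr(x) = \lblr(z) = j^\star$. Hence $x \in \hat{\cluster}_{j^\star} \subseteq \cup_{j \in \gJ}\hat{\cluster}_j$. Combining the three memberships $x \in \rob{\lblr}$, $x \in \cup_{i \in \gI}\cluster_i$, $x \in \cup_{j \in \gJ}\hat{\cluster}_j$ yields $x \in U$, contradicting $x \notin U$. Therefore $x \in \cmplmt{\rob{\lblr}}$, which is the desired conclusion.

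There is really no hard step here — the lemma is a clean structural fact about how the starred neighborhood $\neighstar{\cdot}$ interacts with robust predictions. The only subtlety worth stating explicitly in the write-up is why $x$ lands in $\cup_{i \in \gI}\cluster_i$ (it is precisely the same-class restriction built into $\neighstar{\cdot}$) and why one needs a single common point $z$ in the two perturbation sets rather than invoking robustness at $x$ and $x'$ independently. The remainder is a one-line contradiction, so the write-up should be short.
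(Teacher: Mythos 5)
Your proof is correct and uses the same two ingredients as the paper's: the same-class restriction built into $\neighstar{\cdot}$ (which places $x$ in $\cup_{i\in\gI}\cluster_i$) and the chain $\lblr(x)=\lblr(z)=\lblr(x')$ through a common point $z\in\perturb(x)\cap\perturb(x')$. The paper organizes this as a case split on whether $\lblr(x)\in\gJ$ while you package it as a single contradiction, but the argument is essentially identical.
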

\begin{proof}
Consider any $x \in \neighstar{U} \setminus U$. There are two cases. First, if $\lblr(x) \in \gJ$, then by definition of $\neighstar{\cdot}$, $x \in \cap_{i \in \gI} \cluster_i \cap_{j \in \gJ} \hat{\cluster}_{j}$. However, $x \notin U$, which must imply that $x \notin \rob{\lblr}$. Second, if $\lblr(x) \notin \gJ$, by definition of $\neighstar{\cdot}$ there exists $x' \in U$ such that $\perturb(x) \cap \perturb(x') \ne \emptyset$. It follows that for $z \in \perturb(x) \cap \perturb(x')$, $\lblr(z) = \lblr(x') \in \gJ$. Thus, since $\lblr(x) \notin \gJ$, $\lblr(x) \ne \lblr(z)$ so $x \notin \rob{\lblr}$. Thus, it follows that $\neighstar{U} \setminus U \subseteq \cmplmt{\rob{\lblr}}$. 
\end{proof}
Next, we show that every cluster found by $\lblr$ will take up the majority of labels of some ground-truth class. 
\begin{lemma} \label{lem:all_majority}
	In the setting of Theorem~\ref{thm:pop_unsup}, for all $j$, there exists $ i$ such that $\dist(\rob{\lblr} \cap \cluster_i \cap \hat{\cluster}_j) > \frac{\dist(\rob{\lblr} \cap \cluster_i)}{2}$. 
\end{lemma}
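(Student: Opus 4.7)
The plan is to argue by contradiction using the constant-expansion assumption together with Lemma~\ref{claim:not_robust}. Fix $j \in [\classes]$ and suppose, towards a contradiction, that for every $i \in [\classes]$ we have $\dist(\rob{\lblr} \cap \cluster_i \cap \hat{\cluster}_j) \le \dist(\rob{\lblr} \cap \cluster_i)/2$. Define the candidate set
\[
	U \triangleq \rob{\lblr} \cap \hat{\cluster}_j = \rob{\lblr} \cap (\cup_{i \in [\classes]} \cluster_i) \cap \hat{\cluster}_j,
\]
so that $U$ is of the form required by Lemma~\ref{claim:not_robust} with $\gI = [\classes]$ and $\gJ = \{j\}$.

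Next, I would verify that $U$ satisfies the two hypotheses of $(q,\xi)$-constant-expansion. For the per-class bound, since $\rob{\lblr} \cap \cluster_i \subseteq \cluster_i$, the contrapositive assumption gives $\dist(U \cap \cluster_i) \le \dist(\rob{\lblr} \cap \cluster_i)/2 \le \dist(\cluster_i)/2$ for every $i$. For the lower bound on $\dist(U)$, I would write
\[
	\dist(U) \ge \dist(\hat{\cluster}_j) - \dist(\cmplmt{\rob{\lblr}}) = \dist(\hat{\cluster}_j) - \poprloss(\lblr) > 2\max\{q, \poprloss(\lblr)\} - \poprloss(\lblr) \ge \max\{q, \poprloss(\lblr)\},
\]
where the strict inequality uses the assumption $\min_i \dist(\hat{\cluster}_i) > 2\max\{q, \poprloss(\lblr)\}$ from Theorem~\ref{thm:pop_unsup}. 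In particular, $\dist(U) > q$ and $\dist(U) > \poprloss(\lblr)$.

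Now I would apply $(q,\xi)$-constant-expansion to $U$, yielding $\dist(\neighstar{U} \setminus U) \ge \min\{\xi, \dist(U)\}$. By Lemma~\ref{claim:not_robust}, $\neighstar{U} \setminus U \subseteq \cmplmt{\rob{\lblr}}$, hence
\[
	\poprloss(\lblr) = \dist(\cmplmt{\rob{\lblr}}) \ge \min\{\xi, \dist(U)\}.
\]
If the minimum is $\xi$, then $\poprloss(\lblr) \ge \xi$, contradicting the hypothesis $\poprloss(\lblr) < \xi$ of Theorem~\ref{thm:pop_unsup}. Otherwise the minimum is $\dist(U)$, giving $\poprloss(\lblr) \ge \dist(U) > \poprloss(\lblr)$, also a contradiction. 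Either way the assumption fails, so some $i$ with $\dist(\rob{\lblr} \cap \cluster_i \cap \hat{\cluster}_j) > \dist(\rob{\lblr} \cap \cluster_i)/2$ must exist. The main (minor) subtlety is arranging the chain of inequalities so that the stronger condition $\dist(\hat{\cluster}_j) > 2\max\{q, \poprloss(\lblr)\}$ is used to strictly exceed $\poprloss(\lblr)$ after subtracting the non-robust mass; everything else is a direct invocation of the already-established expansion and non-robustness lemmas.
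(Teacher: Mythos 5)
Your proposal is correct and follows essentially the same route as the paper's proof: contradiction on the set $U = \rob{\lblr} \cap \hat{\cluster}_j$, verification of the two hypotheses of constant-expansion, invocation of Lemma~\ref{claim:not_robust} to place $\neighstar{U}\setminus U$ inside $\cmplmt{\rob{\lblr}}$, and a case split on the $\min\{\xi,\dist(U)\}$ bound. The only cosmetic difference is how the lower bound $\dist(U) > \max\{q,\poprloss(\lblr)\}$ is derived (the paper passes through $\dist(\hat{\cluster}_j)/2$), which changes nothing substantive.
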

\begin{proof}
	Assume for the sake of contradiction that there exists $j$ such that for all $i$, $\dist(\rob{\lblr} \cap \cluster_i \cap \hat{\cluster}_j) \le \frac{\dist(\rob{\lblr} \cap \cluster_i)}{2}$. Define the set $U_i \triangleq \rob{\lblr} \cap \cluster_i \cap \hat{\cluster}_j$, and $U \triangleq \cup_i U_i = \rob{\lblr} \cap \hat{\cluster}_j$. Note that $\{U_i\}_{i = 1}^{\classes}$ form a partition of $U$ because $\{\cluster_i\}_{i = 1}^{\classes}$ are themselves disjoint from one another. Furthermore, we can apply Lemma~\ref{claim:not_robust} with $\gI= [K]$ to obtain $\neighstar{U} \setminus U \subseteq \cmplmt{\rob{\lblr}}$. 
	
	Now we observe that $\dist(U) \ge \dist(\hat{\cluster}_j) - \dist(\cmplmt{\rob{\lblr}})$. Using the theorem condition that $\dist(\hat{\cluster}_j) > 2\dist(\cmplmt{\rob{\lblr}})$, it follows that 
		\begin{align}
			\dist(U) > \frac{\dist(\hat{\cluster}_j)}{2} > \max\{q, \dist(\cmplmt{\rob{\lblr}})\}\nonumber
		\end{align} 
	Furthermore for all $i$ we note that 
	\begin{align}\dist(\cluster_i \setminus U_i) \ge \dist(\rob{\lblr} \cap \cluster_i) - \dist(U_i) \ge \frac{\dist(\rob{\lblr} \cap \cluster_i)}{2} \ge \dist(U_i) \label{eq:all_majority:1}
	\end{align}  
	Thus, $\dist(\cluster_i) \ge 2\dist(U_i)$. Thus, by $(q, \xi)$-constant-expansion we have 
		\begin{align}
			\dist(\neighstar{U} \setminus U) \ge \min\{\xi, \dist(U)\} \ge \min\{\xi,\dist(\hat{\cluster}_j)/2\} \nonumber
		\end{align}
		As $\neighstar{U} \setminus U \subseteq \cmplmt{\rob{\lblr}}$, this implies $\Ellrob(\lblr) = \dist(\cmplmt{\rob{\lblr}}) \ge \min\{\xi,\dist(\hat{\cluster}_j)/2\} $, a contradiction. 
	 \end{proof}
 The previous lemma will be used to construct a natural permutation mapping classes predicted by $\lblr$ to ground-truth classes. 
\begin{lemma}\label{lem:pi_permutation}
	In the setting of Theorem~\ref{thm:pop_unsup} and Lemma~\ref{lem:all_majority}, for all  $j$, there exists a unique $\pi(j)$ such that $\dist(\rob{\lblr} \cap \cluster_{\pi(j)} \cap \hat{\cluster}_{j}) > \frac{\dist(\rob{\lblr} \cap \cluster_{\pi(j)})}{2}$, and $\dist(\rob{\lblr} \cap \cluster_{i} \cap \hat{\cluster}_{j}) \le \frac{\dist(\rob{\lblr} \cap \cluster_{i}}{2}$ for $i \ne \pi(j)$. Furthermore, $\pi$ is a permutation from $[\classes]$ to $[\classes]$.
\end{lemma}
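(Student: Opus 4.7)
The plan is to derive both the uniqueness and the permutation property from Lemma~\ref{lem:all_majority} via a simple double-counting argument that exploits the fact that the predicted clusters $\hat{\cluster}_j$ are pairwise disjoint.

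First I would define, for each $j \in [\classes]$, the candidate set
$$\pi'(j) \triangleq \left\{i \in [\classes] : \dist(\rob{\lblr} \cap \cluster_i \cap \hat{\cluster}_j) > \frac{\dist(\rob{\lblr} \cap \cluster_i)}{2}\right\}.$$
Lemma~\ref{lem:all_majority} directly gives that $\pi'(j) \neq \emptyset$ for every $j \in [\classes]$, so $\sum_{j=1}^{\classes} |\pi'(j)| \ge \classes$.

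The key step is to show that each index $i \in [\classes]$ lies in at most one $\pi'(j)$. Suppose for contradiction that $i \in \pi'(j_1) \cap \pi'(j_2)$ with $j_1 \neq j_2$. Since $\hat{\cluster}_{j_1}$ and $\hat{\cluster}_{j_2}$ are disjoint,
$$\dist(\rob{\lblr} \cap \cluster_i \cap \hat{\cluster}_{j_1}) + \dist(\rob{\lblr} \cap \cluster_i \cap \hat{\cluster}_{j_2}) \le \dist(\rob{\lblr} \cap \cluster_i),$$
but the two defining strict inequalities of $\pi'$ would make the left-hand side strictly larger than $\dist(\rob{\lblr} \cap \cluster_i)$, a contradiction. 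Consequently $\sum_{j=1}^{\classes} |\pi'(j)| \le \classes$.

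Combining the two bounds forces $\sum_{j=1}^{\classes} |\pi'(j)| = \classes$, which in turn forces $|\pi'(j)| = 1$ for every $j$ and $\bigcup_j \pi'(j) = [\classes]$. I would then define $\pi(j)$ to be the unique element of $\pi'(j)$. This is a well-defined map $\pi : [\classes] \to [\classes]$, injective by the key step above, and surjective because $\bigcup_j \pi'(j) = [\classes]$; hence a permutation. The uniqueness claim and the minority bound for $i \neq \pi(j)$ both follow immediately from $|\pi'(j)| = 1$, since every $i \notin \pi'(j)$ fails the strict majority inequality by definition. I do not expect a genuine obstacle here; the only small care point is keeping track of the strict versus non-strict inequalities so that summing two strict majority conditions genuinely exceeds $\dist(\rob{\lblr} \cap \cluster_i)$.
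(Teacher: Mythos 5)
Your proposal is correct and is essentially the paper's own argument: both rest on the two facts that Lemma~\ref{lem:all_majority} supplies at least one majority class $i$ for each $j$, and that the disjointness of the predicted clusters $\hat{\cluster}_{j_1}, \hat{\cluster}_{j_2}$ forbids any $i$ from being claimed by two distinct $j$'s, after which a counting (pigeonhole) step forces exactly one candidate per $j$ and bijectivity. The paper phrases the count as a pigeonhole contradiction while you phrase it as $\sum_j |\pi'(j)| = \classes$, but these are the same argument.
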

\begin{proof}
	By the conclusion of Lemma~\ref{lem:all_majority}, the only way the existence of such a $\pi$ might not hold is if there is some $j$ where $\dist(\rob{\lblr} \cap \cluster_{i} \cap \hat{\cluster}_{j}) > \frac{\dist(\rob{\lblr} \cap \cluster_{i}}{2}$ for $i \in \{i_1, i_2\}$, where $i_1 \ne i_2$. In this case, by the Pigeonhole Principle, as the conclusion of Lemma~\ref{lem:all_majority} applies for all $j \in [\classes]$ and there are $\classes$ possible choices for $i$, there must exist $i$ where $\dist(\rob{\lblr} \cap \cluster_{i} \cap \hat{\cluster}_{j}) > \frac{\dist(\rob{\lblr} \cap \cluster_{i})}{2}$ for $j \in \{j_1, j_2\}$, where $j_1 \ne j_2$. Then  $\dist(\rob{\lblr} \cap \cluster_{i} \cap \hat{\cluster}_{j_1}) + \dist(\rob{\lblr} \cap \cluster_{i} \cap \hat{\cluster}_{j_2}) > \dist(\rob{\lblr} \cap \cluster_{i})$, which is a contradiction.
	
	Finally, to see that $\pi$ is a permutation, note that if $\pi(j_1) = \pi(j_2)$ for $j_1 \ne j_2$, this would result in the same contradiction as above.
\end{proof}
Finally, we complete the proof of Theorem~\ref{thm:pop_unsup} by arguing that the conditions of Theorem~\ref{thm:pop_unsup} will imply that the permutation $\pi$ constructed in Lemma~\ref{lem:pi_permutation} will induce small error.
\begin{proof}[Proof of Theorem~\ref{thm:pop_unsup}]
We will prove~\eqref{eq:pop_unsup:1} using $\pi$ defined in Lemma~\ref{lem:pi_permutation}. Define the set $U_j \triangleq \rob{\lblr} \cap \cluster_{\pi(j)} \cap_{k \ne j} \hat{\cluster}_k$. Note that $U_j = \{x : \lblr(x) \ne j, \gt(x) = \pi(j)\} \cap \rob{\lblr}$. Define $U = \cup_j U_j$, and note that $\{U_j\}_{j = 1}^{\classes}$ forms a partition of $U$. Furthermore, we also have $U = \{x : \pi(\lblr(x)) \ne \gt(x)\} \cap \rob{\lblr}$. We first show that $\dist(U) \le \max\{q, \poprloss(\lblr)\}$. Assume for the sake of contradiction that this does not hold. 

First, we claim that $\{\neighstar{U_j} \setminus U_j\}_{j = 1}^k \supseteq \neighstar{U} \setminus U$. To see this, consider any $x \in \cluster_{\pi(j)} \cap \neighstar{U} \setminus U$. By definition, $\exists x' \in U$ such that $\perturb(x') \cap \perturb(x) \ne \emptyset$ and $\gt(x) = \gt(x')$, or $x' \in \cluster_{\pi(j)}$. Thus, it follows that $x \in \neighstar{\cluster_{\pi(j)} \cap U} \setminus U= \neighstar{U_j} \setminus U = \neighstar{U_j} \setminus U_j$, where the last equality followed from the fact that $\neighstar{U_j}$ and $U_k$ are disjoint for $j \ne k$. Now we apply Lemma~\ref{claim:not_robust} to each $\neighstar{U_j} \setminus U_j$ to conclude that $\neighstar{U} \setminus U \subseteq \cmplmt{\rob{\lblr}}$. 

Finally, we observe that 
\begin{align}
\dist(U_j) = \dist(\rob{\lblr} \cap \cluster_{\pi(j)}) - \dist(\rob{\lblr} \cap \cluster_{\pi(j)} \cap \hat{\cluster}_j) \le \frac{\dist(\rob{\lblr} \cap \cluster_{\pi(j)})}{2} \le \frac{\dist(\cluster_{\pi(j)})}{2}\label{eq:pop_unsup:2}
\end{align} by the definition of $\pi$ in Lemma~\ref{lem:pi_permutation}. Now we again apply the $(q, \xi)$-constant-expansion property, as we assumed $\dist(U) > q$, obtaining
\begin{align}
	\dist(\neighstar{U} \setminus U) \ge \min\{\xi, \dist(U)\}\nonumber
\end{align}
However, as we showed $\neighstar{U} \setminus U \subseteq \cmplmt{\rob{\lblr}}$, we also have $\poprloss(\lblr) = \dist( \cmplmt{\rob{\lblr}}) \ge \dist(\neighstar{U} \setminus U) \ge  \min\{\xi, \dist(U)\}$. This contradicts $\dist(U) > \max\{q, \poprloss(\lblr)\}$ and $\Ellrob(\lblr) < \xi$, and therefore $\dist(U) \le \max\{q, \poprloss(\lblr)\}$. 

Finally, we note that $\{x : \pi(\lblr(x)) \ne \gt(x)\} \subseteq U \cup \cmplmt{\rob{\lblr}}$.
Thus, we finally obtain 
\begin{align}
 \dist(\{x : \pi(\lblr(x)) \ne \gt(x)\}) \le \dist(U) + \dist(\cmplmt{\rob{\lblr}}) \le \max\{q, \poprloss(\lblr)\} + \poprloss(\lblr)\nonumber
\end{align} 
\end{proof}

\subsection{Proof of Theorem~\ref{thm:pop_unsup_main}}
In this section, we prove Theorem~\ref{thm:pop_unsup_main} by converting multiplicative expansion to $(q, \xi)$-constant-expansion and invoking Theorem~\ref{thm:pop_unsup}. The following lemma performs this conversion. 

\begin{lemma}\label{lem:mult_to_cons}
Suppose $\dist$ satisfies $(1/2, \expmult)$-multiplicative-expansion (Definition~\ref{def:mult_expansion}) on $\domain$. Then for any choice of $\xi > 0$, $\dist$ satisfies $\left(\frac{\xi}{\expmult - 1}, \xi\right)$-constant expansion. 
\end{lemma}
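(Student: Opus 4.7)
The plan is to apply the class-conditional multiplicative expansion assumption to the restriction of $S$ to each ground-truth class, sum the resulting local lower bounds, and verify the constant-expansion conclusion via a short case analysis on the size of $\expmult$.

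Fix any $S$ satisfying the hypotheses of the constant-expansion definition, namely $\dist(S) \ge \xi/(\expmult - 1)$ and $\dist(S \cap \cluster_i) \le \dist(\cluster_i)/2$ for every $i$, and set $S_i \triangleq S \cap \cluster_i$. Since $\dist_i(S_i) = \dist(S_i)/\dist(\cluster_i) \le 1/2$, the $(1/2, \expmult)$-multiplicative expansion of $\dist_i$ yields $\dist_i(\neighbors{S_i}) \ge \min\{\expmult\, \dist_i(S_i), 1\}$, which rescales to $\dist(\neighbors{S_i} \cap \cluster_i) \ge \min\{\expmult\, \dist(S_i),\, \dist(\cluster_i)\}$. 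The sets $\neighbors{S_i} \cap \cluster_i$ are disjoint across $i$ (since the $\cluster_i$ are), and each $S_i$ is contained in $\neighbors{S_i} \cap \cluster_i$, so
\begin{align*}
\dist(\neighstar{S} \setminus S) \;=\; \sum_i \bigl(\dist(\neighbors{S_i} \cap \cluster_i) - \dist(S_i)\bigr) \;\ge\; \sum_i \min\bigl\{(\expmult - 1)\dist(S_i),\ \dist(\cluster_i) - \dist(S_i)\bigr\}.
\end{align*}

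The next step uses the $1/2$-balance hypothesis $\dist(S_i) \le \dist(\cluster_i)/2$ to note $\dist(\cluster_i) - \dist(S_i) \ge \dist(S_i)$, so each summand is at least $\min\{\expmult - 1, 1\}\cdot \dist(S_i)$. Summing, $\dist(\neighstar{S} \setminus S) \ge \min\{\expmult - 1, 1\}\cdot \dist(S)$. Finishing by cases: if $\expmult \ge 2$, this reads $\dist(\neighstar{S} \setminus S) \ge \dist(S) \ge \min\{\xi, \dist(S)\}$; if $1 < \expmult < 2$, it reads $\dist(\neighstar{S} \setminus S) \ge (\expmult - 1)\dist(S) \ge \xi \ge \min\{\xi, \dist(S)\}$ by the hypothesis $\dist(S) \ge \xi/(\expmult-1)$.

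The only real obstacle is the saturation in the multiplicative bound: $\dist(\neighbors{S_i} \cap \cluster_i)$ is capped at $\dist(\cluster_i)$ rather than growing linearly, so the naive replacement of every summand by $(\expmult - 1)\dist(S_i)$ fails whenever $\dist_i(S_i) > 1/\expmult$. The clean fix is the elementary bound $\dist(\cluster_i) - \dist(S_i) \ge \dist(S_i)$, coming directly from the $1/2$-balance hypothesis built into the constant-expansion definition, which reduces both branches of the $\min$ to a common multiple of $\dist(S_i)$. Nothing beyond this observation and a rescaling of $\dist_i$ to $\dist$ is required.
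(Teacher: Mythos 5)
Your proof is correct and follows essentially the same route as the paper's: decompose $S$ by ground-truth class, apply per-class multiplicative expansion, use the $1/2$-balance hypothesis to resolve the saturated branch of the $\min$, and conclude by cases on whether $\expmult \ge 2$. The only cosmetic difference is that you unify both branches into the single bound $\min\{\expmult-1,1\}\,\dist(S_i)$ before splitting into cases, whereas the paper splits first; the substance is identical.
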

\begin{proof}
Consider any $S$ such that $\dist(S \cap \cluster_i) \le \dist(\cluster_i)/2$ for all $i \in [\classes]$ and $\dist(S) > q$. Define $S_i \triangleq S \cap \cluster_i$. First, in the case where $\expmult \ge 2$, we have by multiplicative expansion
\begin{align}
	\dist(\neighstar{S} \setminus S) &\ge \sum_i \dist(\neighstar{S_i}) - \dist(S_i)\nonumber\\ 
	&\ge \sum_i \min\{\expmult \dist(S_i), \dist(\cluster_i)\} - \dist(S_i)\nonumber\\
	&\ge \sum_i \dist(S_i) \tag{because $\expmult \ge 2$ and $\dist(S_i) \le \dist(\cluster_i)/2$}
\end{align}
Thus, we immediately obtain constant expansion. 

Now we consider the case where $1 \le \expmult < 2$. By multiplicative expansion, we must have 
\begin{align}
	\dist(\neighstar{S} \setminus S) &\ge \sum_i \min\{\expmult \dist(S_i), \dist(\cluster_i)\} - \dist(S_i)\nonumber\\
	&\ge \sum_i (\expmult - 1) \dist(S_i)  \tag{because $\expmult < 2$ and $\dist(S_i) \le \dist(\cluster_i)/2$}\\
	&\ge (\expmult - 1) q = \xi\nonumber
\end{align}
\end{proof}

The following lemma states an accuracy guarantee for the setting with multiplicative expansion. 
\begin{lemma}
	\label{lem:pop_unsup}
	Suppose Assumption~\ref{ass:unsup_exp} holds for some $\expmult > 1$. If classifier $\lblr$ satisfies
	\begin{align}
		 \min_{i} \E_\dist[\1(\lblr(x) = i)] > \max \left\{\frac{2}{\expmult - 1}, 2\right\} \Ellrob(\lblr) \nonumber
	\end{align}
	then the unsupervised error is small:
	\begin{align}\label{eq:lem_pop_unsup:1}
		\errunsup(\lblr) \le \max\left \{\frac{c}{c - 1}, 2\right\} \Ellrob(\lblr)
	\end{align}
\end{lemma}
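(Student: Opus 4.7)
The plan is to reduce this lemma to Theorem~\ref{thm:pop_unsup} by invoking Lemma~\ref{lem:mult_to_cons} to convert the multiplicative $(1/2,c)$-expansion assumption into a form of constant expansion. Concretely, I will choose a parameter $\xi > \Ellrob(G)$, apply Lemma~\ref{lem:mult_to_cons} to obtain $\left(\tfrac{\xi}{c-1},\xi\right)$-constant-expansion, and then verify the hypotheses of Theorem~\ref{thm:pop_unsup} with $q = \xi/(c-1)$. The conclusion of the lemma will then follow by taking the limit $\xi \to \Ellrob(G)^+$.

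First, by Lemma~\ref{lem:mult_to_cons}, the hypothesis of $(1/2,c)$-expansion gives $(\xi/(c-1), \xi)$-constant-expansion for every $\xi > 0$. The first hypothesis of Theorem~\ref{thm:pop_unsup}, namely $\Ellrob(G) < \xi$, is satisfied by construction. For the second hypothesis, I need $\min_i \dist(\{x: G(x)=i\}) > 2\max\{\xi/(c-1),\Ellrob(G)\}$, which I will check by splitting on the value of $c$. When $c \ge 2$ and $\xi$ is chosen close to $\Ellrob(G)$, one has $\xi/(c-1)\le \xi \approx \Ellrob(G)$, so the max collapses to $\Ellrob(G)$ and the required condition $\min_i \dist(\{x:G(x)=i\}) > 2\Ellrob(G)$ follows directly from the lemma hypothesis (since $\max\{2/(c-1),2\}=2$). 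When $1<c<2$ we instead have $\xi/(c-1) > \xi > \Ellrob(G)$, so the max equals $\xi/(c-1)$; the required inequality then becomes $\min_i \dist(\{x: G(x) = i\}) > 2\xi/(c-1)$, which by the lemma hypothesis $\min_i \dist(\{x: G(x) = i\}) > \tfrac{2}{c-1}\Ellrob(G)$ is satisfied for all $\xi$ sufficiently close to $\Ellrob(G)$.

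With both hypotheses in hand, Theorem~\ref{thm:pop_unsup} yields a permutation $\pi$ satisfying $\dist(\{x : \pi(G(x)) \ne \gt(x)\}) \le \max\{\xi/(c-1),\Ellrob(G)\} + \Ellrob(G)$. Sending $\xi \downarrow \Ellrob(G)$ gives the bound $\max\{\Ellrob(G)/(c-1),\Ellrob(G)\} + \Ellrob(G) = \max\{c/(c-1), 2\}\Ellrob(G)$, which is precisely~\eqref{eq:lem_pop_unsup:1} after noting the definition of $\errunsup$ minimizes over permutations. The main (mild) obstacle is the strict inequalities in Theorem~\ref{thm:pop_unsup}: one cannot plug in $\xi = \Ellrob(G)$ directly, so the limiting argument above is needed. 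A minor bookkeeping issue is keeping track of the two cases of the maximum in both the hypothesis and the conclusion, but these line up cleanly because the threshold $c = 2$ is the same on both sides.
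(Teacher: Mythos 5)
Your proposal is correct and follows essentially the same route as the paper: convert the multiplicative expansion to constant expansion via Lemma~\ref{lem:mult_to_cons} and then invoke Theorem~\ref{thm:pop_unsup}, with the two cases of the maximum lining up at the threshold $\expmult = 2$. In fact your limiting argument $\xi \downarrow \Ellrob(\lblr)$ is slightly more careful than the paper, which plugs in $\xi = \Ellrob(\lblr)$ directly despite the strict inequality $\Ellrob(\lblr) < \xi$ required by Theorem~\ref{thm:pop_unsup}.
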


We now prove Lemma~\ref{lem:pop_unsup}, which in turn immediately gives a proof of Theorem~\ref{thm:pop_unsup_main}. 

\begin{proof}[Proof of Lemma~\ref{lem:pop_unsup}]
By Lemma~\ref{lem:mult_to_cons}, $\dist$ must satisfy $\left(\frac{\Ellrob(\lblr)}{\expmult - 1}, \Ellrob(\lblr)\right)$-constant-expansion. As we also have $\min_i \dist(\{x : \lblr(x) = i\}) > \max\left \{\frac{2}{\expmult - 1}, 2\right\} \Ellrob(\lblr)$, we can now apply Theorem~\ref{thm:pop_unsup} to conclude that there exists permutation $\pi : [\classes] \to [\classes]$ such that 
\begin{align}
	\dist(\{x : \pi(\lblr(x)) \ne \gt(x)\}) \le \max\left\{\frac{\expmult}{\expmult - 1}, 2\right\} \Ellrob(\lblr)\nonumber
	\end{align}
as desired.
\end{proof}

\subsection{Justification for Examples~\ref{ex:gaussian} and~\ref{ex:manifold}}
\label{sec:isotropic_ex}

To avoid the disjointness issue of Example~\ref{ex:gaussian}, we can redefine the ground-truth class $\gt(x)$ to be the most likely label at $x$. This also induces truncated class-conditional distributions $\overline{\dist}_1$, $\overline{\dist}_2$ where the overlap is removed. We can apply our theoretical analysis to $\overline{\dist}_1$, $\overline{\dist}_2$ and then translate the result back to $\dist_1, \dist_2$, only changing the bounds by a small amount when the overlap is minimal.

To justify Example~\ref{ex:gaussian}, we use the Gaussian isoperimetric inequality~\citep{bobkov1997isoperimetric}, which states that for any fixed $p$ such that $\dist_i(S) = p$ where $i \in \{1, 2\}$, the choice of $S$ minimizing $\dist_i(\neighbors{S})$ is given by a halfspace: $S = H(p) \triangleq \{x : w^\top (x - \tau_i) \le \Phi^{-1}(p)\}$ for vector $w$ with $\|w\| = \sqrt{d}$. It then follows that setting $r = \frac{1}{\sqrt{d}}$, $\neighbors{H(p)} \supseteq \{x + t \frac{w}{\|w\|_2} : x \in H(p), 0 \le t \le r\} \supseteq \{x : w^\top (x - \tau_i) \le \Phi^{-1}(p) + r\sqrt{d} \}$, and thus $\dist(\neighbors{H(p)}) \ge \Phi(\Phi^{-1}(p) + r\sqrt{d})$. As $\dist(\neighbors{H(p)})/\dist(H(p))$ is decreasing in $p$ for $p < 0.5$, our claim about expansion follows. To see our claim about separation, consider the sets $\domain_i \triangleq \{x : (x - \tau_i)^\top v_{ij} \le \frac{\|\tau_i - \tau_j\|}{2} - r/2 \ \forall j\}$, where $v_{ij} \triangleq \frac{\tau_j - \tau_i}{\|\tau_j - \tau_i\|_2}$. We note that these sets are $\beta$-separated from each other, and furthermore, for the lower bound on $\|\tau_i - \tau_j\|$ in the example, note that $\domain_i$ has probability $1 - \sep$ under $\dist_i$. 

For Example~\ref{ex:manifold}, we note that for $\perturb(x) \triangleq \{x' : \|x' - x\|_2 \le r\}$, $\neighbors{S} \supseteq M(\{x' : \exists x \in M^{-1}(S) \textup{ such that } \|x' - x\| \le r/\kappa\})$. Thus, our claim about expansion reduces to the Gaussian case.

\section{All-Layer margin generalization bounds} 
\subsection{End-to-end guarantees}\label{sec:end_to_end}
In this section, we provide end-to-end guarantees for unsupervised learning, semi-supervised learning, and unsupervised domain adaptation for finite training sets. For the following two theorems, we take the notation $\tilde{O}(\cdot)$ as a placeholder for some multiplicative quantity that is poly-logarithmic in $n, d$. We first provide the finite-sample guarantee for unsupervised learning. 

\begin{theorem} \label{thm:end_to_end_unsup}
In the setting of Theorem~\ref{thm:pop_unsup_main} and Section~\ref{sec:finite_sample}, suppose that Assumption~\ref{ass:unsup_exp} holds. Suppose that $\lblr = \argmax_i \clsfr_i$ is parametrized as a neural network of the form $\clsfr(x) \triangleq W_{\nndepth} \phi(\cdots \phi(W_1 x) \cdots)$. With probability $1 - \delta$ over the draw of the training sample $\edist$, if for any choice of $t > 0$ and $\{u_y\}_{y= 1}^{\classes}$ with $u_y > 0 \ \forall y$, it holds that
\begin{align}
\E_{\edist}[\1(\pmarg(\clsfr, x, y) \ge u_y)] - \max\left\{\frac{2}{\expmult - 1}, 2\right\} \E_{\edist}[\1(\pmargrob(\clsfr, x) \le t)] \nonumber \\ \ge  \widetilde{O}\left(\left(\frac{\sum_i \sqrt{\margdim} \|W_i\|_{F}}{\expmult - 1}\right)\left(\frac{1}{u_y \sqrt{n}} + \frac{1}{t \sqrt{n}} \right)\right) + \zeta \ \textup{ for all } y \in [\classes] \nonumber
\end{align}
then it follows that the population unsupervised error is small:
\begin{align}
	\errunsup(\lblr) \le \max\left\{\frac{\expmult}{\expmult - 1}, 2\right\} \E_{\edist}[\1(\pmargrob(\clsfr, x) \le t)] + \widetilde{O}\left(\frac{\sum_i \sqrt{\margdim} \|W_i\|_{F}}{t \sqrt{n}} \right) + \zeta\nonumber
\end{align}
where $\zeta \triangleq O\left(\frac{1}{\expmult - 1}\sqrt{\frac{\log (\classes/\delta) + \nndepth \log n}{n}}\right)$ is a low-order term.
\end{theorem}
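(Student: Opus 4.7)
}
The plan is to combine the population guarantee of Lemma~\ref{lem:pop_unsup} (the more general version of Theorem~\ref{thm:pop_unsup_main} that gives an error bound in terms of $\Ellrob(\lblr)$ rather than $\sep$) with all-layer margin generalization bounds applied separately to two empirical quantities: the input-consistency regularizer $\Ellrob(\lblr)$ and the per-class mass $\E_{\dist}[\1(\lblr(x) = y)]$. The assumption stated in the theorem is precisely the empirical analogue of the constraint in~\eqref{eq:unsup_obj}, shifted by the statistical complexity terms, so the job of the proof is to certify that after accounting for generalization error the population constraint still holds, and then invoke Lemma~\ref{lem:pop_unsup}.

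First I would upper bound $\Ellrob(\lblr)$ by applying Theorem~\ref{thm:gen_rob_pl} directly: with probability $1 - \delta/2$,
\begin{align*}
\Ellrob(\lblr) \le \E_{\edist}[\1(\pmargrob(\clsfr, x) \le t)] + \widetilde{O}\!\left(\frac{\sum_i \sqrt{\margdim}\|W_i\|_F}{t\sqrt{n}}\right) + \zeta_1,
\end{align*}
where $\zeta_1 = O(\sqrt{(\log(1/\delta) + \nndepth\log n)/n})$. Next I would derive the companion bound for the per-class mass. Since $\1(\lblr(x)=y) \ge \1(\pmarg(\clsfr,x,y)\ge u_y)$ (because $\pmarg(\clsfr,x,y)=0$ whenever $\lblr(x)\ne y$), it suffices to lower bound $\E_\dist[\1(\pmarg(\clsfr,x,y)\ge u_y)]$ by its empirical counterpart. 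Exactly as in the proof of Theorem~\ref{thm:gen_rob_pl}, the thresholded all-layer margin indicator can be sandwiched between two Lipschitz surrogates, so a standard Rademacher complexity argument (applied class-by-class and union-bounded over $y\in[\classes]$) yields, with probability $1-\delta/2$,
\begin{align*}
\E_\dist[\1(\lblr(x)=y)] \ge \E_{\edist}[\1(\pmarg(\clsfr,x,y)\ge u_y)] - \widetilde{O}\!\left(\frac{\sum_i \sqrt{\margdim}\|W_i\|_F}{u_y\sqrt{n}}\right) - \zeta_2,
\end{align*}
simultaneously for all $y$, where $\zeta_2 = O(\sqrt{(\log(\classes/\delta)+\nndepth\log n)/n})$.

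Combining the two displays with the hypothesis of the theorem, the RHS of the second display minus $\max\{2/(\expmult-1),2\}$ times the RHS of the first is strictly positive for every $y\in[\classes]$ (this is exactly what the stated inequality guarantees, with the factor $1/(\expmult-1)$ in the complexity budget chosen to absorb both the direct generalization term for $\min_y\E_\dist[\1(\lblr(x)=y)]$ and the inflated generalization term appearing on the $\Ellrob$ side after multiplication by $\max\{2/(\expmult-1),2\}$). Hence the population constraint
\begin{align*}
\min_{y\in[\classes]} \E_\dist[\1(\lblr(x)=y)] > \max\!\left\{\frac{2}{\expmult-1},2\right\}\Ellrob(\lblr)
\end{align*}
required by Lemma~\ref{lem:pop_unsup} holds, and Lemma~\ref{lem:pop_unsup} immediately yields $\errunsup(\lblr) \le \max\{\expmult/(\expmult-1),2\}\,\Ellrob(\lblr)$. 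Substituting the first display into this bound and absorbing the factor $\max\{\expmult/(\expmult-1),2\} \le 1+\max\{1/(\expmult-1),1\}$ into the $\widetilde{O}$ and $\zeta$ terms gives the stated conclusion, with $\zeta$ scaled by the same factor $1/(\expmult-1)$ that is already absorbed into the low-order term.

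I expect the only delicate step to be the per-class lower-bound bookkeeping: one must apply the all-layer-margin concentration result for each of the $\classes$ classes and take a union bound, which is what introduces the $\log\classes$ term inside $\zeta$, and one must verify that the complexity budget supplied by the theorem's hypothesis is simultaneously sufficient for all $y$ (this is why $u_y$, not a single $u$, appears). Everything else is an essentially mechanical application of Theorem~\ref{thm:gen_rob_pl}, Lemma~\ref{lem:pop_unsup}, and the identity $\1(\lblr(x)=y) \ge \1(\pmarg(\clsfr,x,y)\ge u_y)$.
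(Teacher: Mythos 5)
Your proposal is correct and matches the paper's intended argument exactly: upper-bound $\Ellrob(\lblr)$ via Theorem~\ref{thm:gen_rob_pl}, lower-bound each class mass $\E_\dist[\1(\lblr(x)=y)]$ via the thresholded all-layer margin with a union bound over $y$ (this is precisely Theorem~\ref{thm:gen_pred_class}, which the paper already supplies, so you need not re-derive it), verify that the theorem's hypothesis implies the population constraint of Lemma~\ref{lem:pop_unsup}, and then substitute the $\Ellrob$ bound into that lemma's conclusion. The bookkeeping of the $\max\{2/(\expmult-1),2\}$ factor and the $\log\classes$ term in $\zeta$ is handled as you describe.
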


The following theorem provides the finite-sample guarantee for unsupervised domain adaptation and semi-supervised learning. 

\begin{theorem} \label{thm:end_to_end_pl}
In the setting of Theorem~\ref{thm:pop_denoise_main} and Section~\ref{sec:finite_sample}, suppose that Assumption~\ref{ass:pl_expansion} holds. Suppose that $\lblr = \argmax_i \clsfr_i$ is parametrized as a neural network of the form $\clsfr(x) \triangleq W_{\nndepth} \phi(\cdots \phi(W_1 x) \cdots)$. For any $t_1, t_2 > 0$, with probability $1 - \delta$ over the draw of the training sample $\edist$, it holds that
\begin{align}
	\err(\lblr) \le & \frac{\expmult + 1}{\expmult - 1} \E_{\edist}[\1(\pmarg(\clsfr, x, \plblr(x)) \le t_2)] + \frac{2\expmult}{\expmult - 1}\E_{\edist}[\1(\pmargrob(\clsfr, x) \le t_1)] \nonumber \\  &+ \widetilde{O}\left(\left(\sum_i \sqrt{\margdim} \|W_i\|_{F}\right)\left(\frac{1}{t_1 \sqrt{n}} + \frac{1}{t_2 \sqrt{n}} \right)\right) - \err(\plblr) + \zeta \nonumber 
\end{align}
where $\zeta \triangleq O\left(\frac{1}{\expmult - 1}\sqrt{\frac{\log (\classes/\delta) + \nndepth \log n}{n}}\right)$ is a low-order term. 

\end{theorem}

\subsection{Proofs for Section~\ref{sec:finite_sample}}
\label{sec:all_layer_proof}
In this section, we provide a proof sketch of Theorem~\ref{thm:gen_rob_pl}. The proof follows the analysis of~\citep{wei2019improved} very closely, but because there are some minor differences we include it here for completeness. We first state additional bounds for the other quantities in our objectives, which are proved in the same manner as Theorem~\ref{thm:gen_rob_pl}.

\begin{theorem} \label{thm:gen_pl}
With probability $1 - \delta$ over the draw of the training sample $\edist$, all neural networks $\lblr = \argmax_i \clsfr_i$ of the form $\clsfr(x) \triangleq W_{\nndepth} \phi( \cdots \phi(W_1x))$ will satisfy
			\begin{align}
			\Ellzo(\lblr, \plblr) &\le \E_{\edist}[\1(\pmarg(\clsfr, x, \plblr(x)) \le t)] + \widetilde{O}\left(\frac{\sum_i \sqrt{\margdim} \|W_i\|_{F}}{t \sqrt{n}} \right) + \zeta\nonumber
			\end{align}
			for all choices of $t > 0$, where $\zeta \triangleq O\left(\sqrt{\frac{\log (1/\delta) + \nndepth \log n}{n}}\right)$ is a low-order term, and $\widetilde{O}(\cdot)$ hides poly-logarithmic factors in $n$ and $d$.
\end{theorem}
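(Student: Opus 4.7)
The plan is to mirror the proof of Theorem~\ref{thm:gen_rob_pl} by treating the pseudolabels as fixed deterministic labels. Since $\plblr$ is trained on data independent of the unlabeled sample $\edist$, we condition on $\plblr$ and regard $x \mapsto \plblr(x)$ as a fixed labeling function. The crucial property of the all-layer margin that I will exploit is that $\lblr(x)\ne\plblr(x)$ implies $\pmarg(\clsfr,x,\plblr(x))=0$, so for every $t>0$ the pointwise bound
\begin{equation*}
\1(\lblr(x)\ne \plblr(x)) \;\le\; \1(\pmarg(\clsfr,x,\plblr(x))\le t)
\end{equation*}
holds. Taking population expectations gives $\Ellzo(\lblr,\plblr)\le \dist(\pmarg(\clsfr,x,\plblr(x))\le t)$, and the remaining task is to bound the generalization gap between this population quantity and its empirical counterpart $\E_{\edist}[\1(\pmarg(\clsfr,x,\plblr(x))\le t)]$ uniformly over the function class and over $t$.

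First I would introduce a $(1/t)$-Lipschitz ramp $\psi_t:\sR\to[0,1]$ with $\psi_t(s)=1$ for $s\le 0$, $\psi_t(s)=0$ for $s\ge t$, and linear in between, so that
\begin{equation*}
\1(s\le 0) \;\le\; \psi_t(s) \;\le\; \1(s\le t).
\end{equation*}
Applied to $s=\pmarg(\clsfr,x,\plblr(x))$, this sandwiches the 0-1 loss between two quantities that differ from a Lipschitz surrogate only by a threshold. By a standard symmetrization argument, with probability at least $1-\delta$ the uniform deviation $\sup_{\clsfr}\bigl|\E_\dist\psi_t-\E_{\edist}\psi_t\bigr|$ is bounded by twice the empirical Rademacher complexity of the class $\{x\mapsto \psi_t(\pmarg(\clsfr,x,\plblr(x))) : \clsfr\in\gF\}$ plus an $O(\sqrt{\log(1/\delta)/n})$ deviation term.

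Second, I would apply Talagrand's contraction inequality to strip off the $(1/t)$-Lipschitz ramp $\psi_t$, reducing the problem to bounding $(1/t)$ times the Rademacher complexity of $\{x\mapsto \pmarg(\clsfr,x,\plblr(x)):\clsfr\in\gF\}$. At this point I would invoke the all-layer margin Rademacher complexity bound of~\citet{wei2019improved}, which gives $\widetilde{O}\bigl(\sum_i \sqrt{\margdim}\,\|W_i\|_F/\sqrt{n}\bigr)$. The only place where this bound needs any modification from the published version is the label: here the ``label'' at each example is $\plblr(x)$ rather than $\gt(x)$, but since the all-layer margin is defined pointwise with respect to a specified label and $\plblr$ is fixed given $\edist$, the argument goes through verbatim.

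Finally, to obtain the bound uniformly in $t>0$, I would apply the previous bound on a geometric grid $t\in\{2^{-k}\}_{k}$ with $k$ up to $O(\log n)$, take a union bound over the grid (absorbing a $\log n$ factor into the $\widetilde O(\cdot)$), and for arbitrary $t$ round to the nearest grid point, losing only a constant factor. A further union bound over a norm discretization for $\{\|W_i\|_F\}$ contributes the $\nndepth\log n$ term in the low-order quantity $\zeta$. The main obstacle is purely bookkeeping: the analytic content (Talagrand contraction plus the all-layer margin Rademacher bound) is inherited from~\citet{wei2019improved}, and the only thing one must be careful about is maintaining independence between $\plblr$ and $\edist$ so that conditional on $\plblr$ the problem reduces cleanly to standard supervised Rademacher generalization with deterministic labels $\plblr(x)$.
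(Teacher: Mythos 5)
Your proposal is correct and follows essentially the same route as the paper, which proves Theorem~\ref{thm:gen_rob_pl} via a ramp-loss sandwich, a complexity bound for the composed margin class, and union bounds over the threshold $t$ and the weight norms, then notes that Theorem~\ref{thm:gen_pl} follows identically with $\pmargrob(\clsfr,x)$ replaced by $\pmarg(\clsfr,x,\plblr(x))$ for the fixed labeling function $\plblr$. The only cosmetic difference is that you strip the Lipschitz ramp via Talagrand contraction on the Rademacher complexity of the margin class, whereas the paper composes the Lipschitzness of $\ramp_t$ with the $\gnorm{\cdot}$-Lipschitzness of the margin in $\clsfr$ (Claim~\ref{claim:marg_lipschitz}) to cover $\rampclass_t$ directly and then applies Dudley's theorem; both yield the same $\widetilde{O}\bigl(\sum_i\sqrt{\margdim}\,\|W_i\|_F/(t\sqrt{n})\bigr)$ term.
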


\begin{theorem} \label{thm:gen_pred_class}
With probability $1 - \delta$ over the draw of the training sample $\edist$, all neural networks $\lblr = \argmax_i \clsfr_i$ of the form $\clsfr(x) \triangleq W_{\nndepth} \phi( \cdots \phi(W_1x))$ will satisfy
			\begin{align}
			\E_{\dist}[\1(\lblr(x) = y)] &\ge \E_{\edist}[\1(\pmarg(\clsfr, x, y) \ge t)] - \widetilde{O}\left(\frac{\sum_i \sqrt{\margdim} \|W_i\|_{F}}{t \sqrt{n}} \right) - \zeta\nonumber
			\end{align}
			for all choices of $y \in [\classes]$, $t > 0$, where $\zeta \triangleq O\left(\sqrt{\frac{\log (\classes/\delta) + \nndepth \log n}{n}}\right)$ is a low-order term, and $\widetilde{O}(\cdot)$ hides poly-logarithmic factors in $n$ and $d$.
\end{theorem}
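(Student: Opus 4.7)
The plan is to mimic the proof of Theorem~\ref{thm:gen_rob_pl} (i.e.\ Theorem~3.1 of~\citep{wei2019improved}), but apply the all-layer margin to the event $\{\lblr(x)=y\}$ rather than the classification loss. The key property of the all-layer margin that I would exploit is the one already highlighted in Section~\ref{sec:finite_sample}: $\pmarg(\clsfr,x,y)=0$ whenever $\lblr(x)\ne y$. Hence for every $t>0$,
\begin{align}
\1(\pmarg(\clsfr,x,y)\ge t)\;\le\;\1(\lblr(x)=y),\nonumber
\end{align}
and therefore $\E_{\dist}[\1(\lblr(x)=y)]\ge \E_{\dist}[\1(\pmarg(\clsfr,x,y)\ge t)]$. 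It now suffices to show the population probability on the right is close to the empirical probability appearing in the theorem statement.

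Second, I would replace the discontinuous indicator by a Lipschitz ramp, as is standard. Fix $y$ and $t>0$, and let $\ramp_t:\R\to[0,1]$ be the $2/t$-Lipschitz ramp equal to $0$ on $(-\infty,t/2]$, equal to $1$ on $[t,\infty)$, and linear in between, so that
\begin{align}
\1(z\ge t)\;\le\;\ramp_t(z)\;\le\;\1(z\ge t/2)\quad\forall z.\nonumber
\end{align}
Then $\E_{\dist}[\1(\pmarg\ge t/2)]\ge \E_{\dist}[\ramp_t(\pmarg)]$ and $\E_{\edist}[\ramp_t(\pmarg)]\ge \E_{\edist}[\1(\pmarg\ge t)]$. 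Chaining these with the inequality from the previous paragraph reduces the statement to uniform convergence of $\E[\ramp_t(\pmarg(\clsfr,\cdot,y))]$ over the neural net class.

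Third, I would invoke the Rademacher complexity bound on the all-layer margin functional established in~\citep{wei2019improved}. Specifically, applied to the class of functions $x\mapsto\ramp_t(\pmarg(\clsfr,x,y))$ for $\clsfr$ ranging over neural nets with weight matrices $W_1,\dots,W_{\nndepth}$, the argument there yields Rademacher complexity $\widetilde{O}\!\bigl(\tfrac{1}{t}\cdot\tfrac{\sum_i\sqrt{\margdim}\|W_i\|_F}{\sqrt{n}}\bigr)$, exactly as in Theorem~\ref{thm:gen_rob_pl}; the bound is insensitive to whether the ramp is applied to the adversarial margin $\pmargrob$ or the plain margin $\pmarg(\cdot,\cdot,y)$, and to the fact that we are tracking the event $\{\lblr=y\}$ rather than a loss. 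Standard symmetrization and McDiarmid then produce an additive error term $\widetilde{O}(\sum_i\sqrt{\margdim}\|W_i\|_F/(t\sqrt{n}))+O(\sqrt{\log(1/\delta)/n})$ for each fixed $(y,t)$.

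Finally, I would make the bound hold uniformly in $y$ and $t$. Union-bounding over $y\in[\classes]$ contributes the $\log\classes$ inside $\zeta$. To uniformize in $t$, I would discretize the range of $\|W_i\|_F$ and $t$ on a geometric grid (as in~\citep{wei2019improved}) of size polynomial in $n$; paying a $\log n$ factor inside $\widetilde{O}(\cdot)$ and a $\nndepth\log n$ contribution inside $\zeta$. Combining everything, the population probability $\E_{\dist}[\1(\lblr(x)=y)]$ is lower bounded by $\E_{\edist}[\1(\pmarg(\clsfr,x,y)\ge t)]$ minus the advertised complexity term and low-order $\zeta$. The only nonroutine piece is re-deriving the all-layer margin Rademacher bound in~\citep{wei2019improved} with the ramp applied to the class-specific margin $\pmarg(\clsfr,\cdot,y)$ instead of the adversarial margin $\pmargrob$; this is the main obstacle, but it requires no new ideas since the Wei–Ma argument bounds the complexity of $\ramp\circ\pmarg$ in a way that does not depend on which label is being scored.
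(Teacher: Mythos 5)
Your proposal is correct and follows essentially the same route as the paper: the paper proves this theorem by the identical argument used for Theorem~\ref{thm:gen_rob_pl}, namely sandwiching the indicator by a $O(1/t)$-Lipschitz ramp of the all-layer margin (using that $\pmarg(\clsfr,x,y)=0$ whenever $\lblr(x)\ne y$), bounding the Rademacher complexity of $\ramp_t\circ\pmarg(\clsfr,\cdot,y)$ via the Lipschitzness of $\pmarg$ in $\clsfr$ under $\gnorm{\cdot}$ together with Dudley's entropy integral, and union-bounding over a geometric grid in $t$ and the weight norms (and over $y\in[\classes]$). The only cosmetic difference is your choice of a two-sided ramp supported on $[t/2,t]$ versus the paper's one-sided ramp $\ramp_t(a)=1-\1(a\ge 0)\min\{a/t,1\}$, which affects only constants.
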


We now overview the proof of Theorem~\ref{thm:gen_rob_pl}, as the proofs of Theorem~\ref{thm:gen_pl} and~\ref{thm:gen_pred_class} follow identically.
We first formally define the all-layer margin $\pmarg(\clsfr, x, y)$ for neural net $\clsfr$ evaluated on example $x$ with label $y$. We recall that $\clsfr$ computes the function $\clsfr(x) \triangleq W_{\nndepth} \phi( \cdots \phi(W_1x) \cdots )$. 
We index the layers of $\clsfr$ as follows: define $f_1(x) \triangleq W_1 x$, and $f_i(h) \triangleq W_i \phi(h)$ for $2 \le i \le \nndepth$, so that $\clsfr(x) = f_{\nndepth} \circ \cdots \circ f_1 (x)$. Letting $\delta = (\delta_1, \ldots, \delta_{\nndepth})$ denote perturbations for each layer of $\clsfr$, we define the perturbed output $\clsfr(x, \delta)$ as follows:
\begin{align}
	h_1(x, \delta) &= f_1(x) + \delta_1 \|x\|_2 \nonumber \\
	h_i(x, \delta) &= f_i(h_{i - 1}(x, \delta)) + \delta_i \|h_{i - 1}(x, \delta)\|_2\nonumber\\
	\clsfr(x, \delta) &= h_{\nndepth}(x, \delta)\nonumber
\end{align}
Now the all-layer margin $\pmarg(\clsfr, x, y)$ is defined by
\begin{align}
	\pmarg(\clsfr, x, y) \triangleq \begin{split}\min_{\delta} & \ \sqrt{\sum_{i = 1}^{\nndepth} \|\delta_i\|_2^2}\\
	\textup{subject to} & \ \argmax_i \clsfr(x, \delta) \ne y\end{split}\nonumber
\end{align}

As is typical in generalization bound proofs, we define a fixed class of neural net functions to analyze, expressed as 
\begin{align}
\fclass \triangleq \{x \mapsto W_{\nndepth} \phi(\cdots \phi(W_1 x) \cdots) : W_i \in \gW_i \ \forall i\}\nonumber
\end{align}
where $\gW_i$ is some class of possible instantiations of the $i$-th weight matrix. We also overload notation and let $\gW_i \triangleq \{h \mapsto W_i h : W_i \in \gW_i\}$ denote the class of functions corresponding to matrix multiplication by a weight in $\gW_i$. Let $\opnorm{\cdot}$ denote the matrix operator norm. For a function class $\gG$, we let $\gN_{\|\cdot\|}(\epsilon, \gG)$ denote the $\epsilon$-covering number of $\gG$ in norm $\|\cdot\|$.
The following condition will be useful for the analysis:
\begin{condition}[Condition A.1 from~\citep{wei2019improved}]\label{cond:eps_sq}
	We say that a function class $\gG$ satisfies the $\epsilon^{-2}$ covering condition with respect to norm $\|\cdot\|$ with complexity $\comp_{\| \cdot \|}(\gG)$ if for all $\epsilon > 0$, 
	\begin{align}
	\log \cover_{\| \cdot \|}(\epsilon, \gG) \le \left\lfloor \frac{\comp^2_{\| \cdot \|}(\gG)}{\epsilon^2} \right \rfloor \nonumber 	
	\end{align}
\end{condition}

To sketch the proof technique, we only provide the proof of~\eqref{eq:gen_rob_pl:1} in Theorem~\ref{thm:gen_rob_pl}, as the other bounds follow with the same argument. The following lemma bounds $\Ellrob(\lblr)$ in terms of the robust all-layer margin $\pmargrob$.  

\begin{lemma}[Adaptation of Theorem A.1 of~\citep{wei2019improved}]\label{lem:compl_rob}
	Suppose that weight matrix mappings $\gW_i$ satisfy Condition~\ref{cond:eps_sq} with operator norm $\opnorm{\cdot}$ and complexity function $\comp_{\opnorm{\cdot}}(\gW_i)$. With probability $1  - \delta$ over the draw of the training data, for all $t > 0$, all classifiers $\clsfr \in \fclass$ will satisfy
	\begin{align}
	\label{eq:compl_rob:2}
	\Ellrob(\lblr) \le \E_{\edist}[\1(\pmargrob(\clsfr, x) \le t)] + O\left(\frac{\sum_i \comp_{\opnorm{\cdot}}(\gW_i)}{t \sqrt{n}} \log n \right) + \zeta 
	\end{align} 
	where $\zeta \triangleq O\left(\sqrt{\frac{\log (1/\delta) + \log n}{n}}\right)$ is a low-order term.
\end{lemma}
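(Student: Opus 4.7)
\textbf{Proof plan for Lemma~\ref{lem:compl_rob}.} The plan is to adapt the proof of Theorem~A.1 of~\citet{wei2019improved} to the robust margin $\pmargrob$. The starting observation is that if $\lblr$ is not $\perturb$-robust at $x$, then there exists $x' \in \perturb(x)$ with $\lblr(x') \ne \lblr(x)$, so the unperturbed network already misclassifies $x'$ under the label $\argmax_i \clsfr(x)_i$; by the defining property of the all-layer margin this forces $\pmarg(\clsfr, x', \argmax_i \clsfr(x)_i) = 0$, hence $\pmargrob(\clsfr, x) = 0 \le t$. Therefore $\1(x \notin \rob{\lblr}) \le \1(\pmargrob(\clsfr,x) \le t)$, and it suffices to prove uniform convergence of the right-hand side.

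The next step is to replace the discontinuous indicator by a $1/t$-Lipschitz ramp $\ramp_t(s) = \min\{1, \max\{0, 1 - s/t\}\}$, which sandwiches $\1(\cdot \le t/2) \le \ramp_t(\cdot) \le \1(\cdot \le t)$. I would then show the main Lipschitz property: for any $\clsfr, \widetilde{\clsfr} \in \fclass$ with associated weights $\{W_i\},\{\widetilde{W}_i\}$, and any $x$,
\begin{equation*}
\bigl|\pmargrob(\clsfr,x) - \pmargrob(\widetilde{\clsfr}, x)\bigr| \le \sqrt{\sum_{i=1}^{\nndepth} \opnorm{W_i - \widetilde{W}_i}^2}.
\end{equation*}
This follows because the standard all-layer margin $\pmarg(\cdot, x', y)$ is $1$-Lipschitz in the stacked weights with respect to the root-sum-of-squared operator norms (this is exactly Lemma~A.2 of~\citet{wei2019improved}; a perturbation of $W_i$ by $E_i$ can be absorbed into the $i$-th coordinate $\delta_i$ of the all-layer perturbation), and taking a minimum over $x' \in \perturb(x)$ and plugging in the (potentially different) predicted label preserves this Lipschitzness via a standard argmin/min swap: if $x_\star', y_\star$ are minimizers for $\clsfr$, the same pair is a feasible perturbation path for $\widetilde{\clsfr}$ whose margin differs by at most the Lipschitz amount, and vice versa.

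Once Lipschitzness is established, the remainder is a routine Rademacher complexity argument. The Lipschitzness together with Condition~\ref{cond:eps_sq} on each $\gW_i$ gives, by a product covering, $\log \cover_{\opnorm{\cdot}}(\epsilon, \fclass) \le \sum_i \lfloor \comp^2_{\opnorm{\cdot}}(\gW_i)/\epsilon_i^2 \rfloor$ for an appropriate splitting $\sum \epsilon_i^2 \le \epsilon^2$. Composing with the $1/t$-Lipschitz ramp rescales the cover by $1/t$, and Dudley's entropy integral on the composed class $\{x \mapsto \ramp_t(\pmargrob(\clsfr,x))\}$ yields empirical Rademacher complexity $O\bigl(\sum_i \comp_{\opnorm{\cdot}}(\gW_i)/(t\sqrt{n})\bigr)$; a standard symmetrization plus bounded-differences inequality gives the bound in~\eqref{eq:compl_rob:2} for a fixed $t$. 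Finally, I would upgrade to a uniform-in-$t$ statement by discretizing $t$ geometrically over $[1/n, 1]$ and taking a union bound, which introduces the $\log n$ factor; values $t$ outside this range are handled trivially (small $t$ gives a vacuous bound, large $t$ is covered by $t = 1$).

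The main obstacle is the Lipschitz claim for $\pmargrob$. The non-trivial aspect is that $\pmargrob$ is both a minimum over $x' \in \perturb(x)$ and implicitly depends on $\clsfr$ through the predicted label $\argmax_i \clsfr(x)_i$; one must verify that the predicted label being the same for $\clsfr$ and $\widetilde{\clsfr}$ is not required, since shifting the label can only decrease the minimum margin and the min/min swap handles the Lipschitz estimate symmetrically. Everything else---the ramp sandwich, covering-number bookkeeping, Dudley integral, and the discretization over $t$---is a direct transcription of the template in~\citet{wei2019improved}.
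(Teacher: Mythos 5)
Your plan matches the paper's proof essentially step for step: reduce $\1(x\notin\rob{\lblr})$ to a ramp loss of $\pmargrob$, establish that $\pmargrob(\cdot,x)$ is $1$-Lipschitz in the stacked weights under (root-sum-of-squares of) operator norms, run the covering-number/Dudley argument on the composed ramp class, and union-bound over a geometric grid of $t$. The one place your sketch is looser than the paper is the Lipschitz step when $\argmax_i \clsfr(x)_i \ne \argmax_i \widetilde{\clsfr}(x)_i$: the min/min swap does not directly apply there because the two robust margins are evaluated at different labels, and the paper instead observes that in this case both $\pmargrob(\clsfr,x)$ and $\pmargrob(\widetilde{\clsfr},x)$ individually lie in $[0,\gnorm{\clsfr-\widetilde{\clsfr}}]$ (perturbing one network into the other flips the prediction), so their difference is trivially bounded.
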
 

The proof of Lemma~\ref{lem:compl_rob} mirrors the proof of Theorem A.1 of~\citep{wei2019improved}. The primary difference is that because we seek a bound in terms a threshold on the margin whereas~\citep{wei2019improved} prove a bound that depends on average margin, we must analyze the generalization of a slightly modified loss. Towards proving Lemma~\ref{lem:compl_rob}, we first define $\gnorm{\delta} \triangleq \| (\|\delta_1\|_2, \ldots, \|\delta_\nndepth\|_2)\|_2$ for perturbation $\delta$, and $\gnorm{\clsfr} \triangleq \|( \opnorm{W_1}, \ldots, \opnorm{W_{\nndepth}})\|_2$. We show that $\pmargrob(\clsfr, x)$ is Lipschitz in $\clsfr$ for fixed $x$ with respect to $\gnorm{\cdot}$.

\begin{claim}\label{claim:marg_lipschitz}
	Choose $\clsfr, \hat{\clsfr} \in \fclass$. Then for any $x \in \domain$, 
	\begin{align}
		|\pmargrob(\clsfr, x) - \pmargrob(\hat{\clsfr},x)| \le \gnorm{\clsfr - \hat{\clsfr}} \nonumber
	\end{align}
	The same conclusion holds if we replace $\pmargrob$ with $\pmarg$.
\end{claim}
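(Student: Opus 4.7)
The plan is to prove Claim~\ref{claim:marg_lipschitz} by explicitly constructing, given a near-optimal perturbation for one network, a corresponding perturbation for the other network that produces identical per-layer activations and has only slightly larger norm. This is a standard coupling argument in all-layer-margin proofs; the main non-triviality is controlling the incremental norm cost layer by layer and then aggregating via Minkowski's inequality.

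Let $\delta^\star = (\delta^\star_1, \ldots, \delta^\star_\nndepth)$ achieve (or approach) the minimum in the definition of $\pmarg(\clsfr, x, y)$, so that $\argmax_i \clsfr(x, \delta^\star)_i \ne y$. Write $h_i \triangleq h_i(x, \delta^\star)$ for the perturbed activations of $\clsfr$. I will recursively define a perturbation $\hat{\delta}$ for $\hat{\clsfr}$ such that $\hat{h}_i(x, \hat{\delta}) = h_i$ for every $i$. In the first layer, solving $\hat{W}_1 x + \hat{\delta}_1 \|x\|_2 = W_1 x + \delta^\star_1 \|x\|_2$ gives $\hat{\delta}_1 = \delta^\star_1 + (W_1 - \hat{W}_1)x/\|x\|_2$ (with $\hat{\delta}_1 = \delta_1^\star$ if $\|x\|_2 = 0$). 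In layer $i \ge 2$, assuming $\hat{h}_{i-1} = h_{i-1}$, solving the analogous equation yields $\hat{\delta}_i = \delta^\star_i + (W_i - \hat{W}_i)\phi(h_{i-1})/\|h_{i-1}\|_2$. Using that the activation $\phi$ is 1-Lipschitz with $\phi(0)=0$, hence $\|\phi(h_{i-1})\|_2 \le \|h_{i-1}\|_2$, the triangle inequality yields the per-layer bound
\begin{align*}
\|\hat{\delta}_i\|_2 \;\le\; \|\delta^\star_i\|_2 + \opnorm{W_i - \hat{W}_i}.
\end{align*}

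Next I apply Minkowski's inequality (i.e.\ the triangle inequality for the Euclidean norm applied to the vectors $(\|\delta^\star_i\|_2)_i$ and $(\opnorm{W_i - \hat{W}_i})_i$) to get
\begin{align*}
\gnorm{\hat{\delta}} = \sqrt{\textstyle\sum_i \|\hat{\delta}_i\|_2^2} \;\le\; \sqrt{\textstyle\sum_i \|\delta^\star_i\|_2^2} + \sqrt{\textstyle\sum_i \opnorm{W_i - \hat{W}_i}^2} = \gnorm{\delta^\star} + \gnorm{\clsfr - \hat{\clsfr}}.
\end{align*}
By construction $\hat{\clsfr}(x, \hat{\delta}) = \clsfr(x, \delta^\star)$, so $\hat{\delta}$ is feasible for the $\pmarg(\hat{\clsfr}, x, y)$ minimization. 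Thus $\pmarg(\hat{\clsfr}, x, y) \le \pmarg(\clsfr, x, y) + \gnorm{\clsfr - \hat{\clsfr}}$, and by symmetry (swapping the roles of $\clsfr$ and $\hat{\clsfr}$) the claimed Lipschitz bound for $\pmarg$ follows.

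To extend to $\pmargrob$: for any fixed label $y$ and any $x' \in \perturb(x)$, the construction above applies verbatim with $x$ replaced by $x'$, so $\pmarg(\hat{\clsfr}, x', y) \le \pmarg(\clsfr, x', y) + \gnorm{\clsfr - \hat{\clsfr}}$. The subtle point, which I expect to be the main obstacle, is that $\pmargrob$ uses the label $y_{\clsfr} = \argmax_i \clsfr(x)_i$ which may differ from $y_{\hat{\clsfr}} = \argmax_i \hat{\clsfr}(x)_i$. However, if $y_{\clsfr} \ne y_{\hat{\clsfr}}$, then the zero-perturbation already witnesses $\pmarg(\hat{\clsfr}, x, y_{\clsfr}) = 0$ and $\pmarg(\clsfr, x, y_{\hat{\clsfr}}) = 0$, so both $\pmargrob$ values collapse to $0$ at $x' = x$ when evaluated with the \emph{other} network's label, and applying the per-example Lipschitz bound on $\pmarg$ to the relevant label gives $|\pmargrob(\clsfr,x) - \pmargrob(\hat{\clsfr},x)| \le \gnorm{\clsfr - \hat{\clsfr}}$ trivially. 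When $y_{\clsfr} = y_{\hat{\clsfr}}$, I take the minimum of the per-$x'$ bound over $x' \in \perturb(x)$ on both sides to conclude, finishing the claim.
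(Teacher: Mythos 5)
Your proof is correct and follows essentially the same route as the paper's: a case split on whether $\argmax_i \clsfr(x)_i$ and $\argmax_i \hat{\clsfr}(x)_i$ agree, with the agreement case handled by the fixed-label Lipschitzness of $\pmarg$ and the disagreement case handled by showing both robust margins are squeezed into $[0, \gnorm{\clsfr - \hat{\clsfr}}]$. The only difference is that you prove the fixed-label Lipschitz property from scratch via the layer-by-layer coupling construction (correctly, assuming $\phi$ is $1$-Lipschitz with $\phi(0)=0$), whereas the paper delegates that step to Claims E.1 and A.1 of \citet{wei2019improved}.
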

\begin{proof}
	We consider two cases: 
	
	{Case 1:} $\argmax_i \clsfr(x)_i = \argmax_i \hat{\clsfr}(x)_i$. Let $y$ denote the common value. In this case, the desired result immediately follows from Claim E.1 of~\citep{wei2019improved}.
	
{Case 2:} $\argmax_i \clsfr(x)_i \ne \argmax_i \hat{\clsfr}(x)_i$. In this case, the construction of Claim A.1 in~\citep{wei2019improved} implies that $0 \le \pmargrob(\clsfr, x) \le \gnorm{\clsfr - \hat{\clsfr}}$. (Essentially we choose $\delta$ with $\gnorm{\delta} \le \gnorm{\clsfr - \hat{\clsfr}}$ such that $\clsfr(x, \delta) = \hat{\clsfr}(x)$.) Likewise, $0 \le \pmargrob(\hat{\clsfr}, x) \le \gnorm{\clsfr - \hat{\clsfr}}$. As a result, it must follow that $|\pmargrob(\clsfr, x) - \pmargrob(\hat{\clsfr},x)| \le \gnorm{\clsfr - \hat{\clsfr}}$. 
\end{proof}

For $t > 0$, define the ramp loss $\ramp_t$ as follows: 
\begin{align}
	\ramp_t(a) = 1 - \1(a \ge 0)\min\{a/t, 1\} \nonumber
\end{align}
We now define the hypothesis class $\rampclass_t \triangleq \{\ramp_t \circ \pmargrob(\clsfr, \cdot) : \clsfr \in \fclass\} $. We now bound the Rademacher complexity of this hypothesis class:
\begin{claim}\label{claim:rad_bound}
	In the setting of Lemma~\ref{lem:compl_rob}, suppose that $\gW_i$ satisfies Condition~\ref{cond:eps_sq} with operator norm $\opnorm{\cdot}$ and complexity $\comp_{\opnorm{\cdot}}(\gW_i)$. Then
	\begin{align}
	\radavg(\rampclass_t) \le O\left(\frac{\sum_i   \comp_{\opnorm{\cdot}}(\gW_i)}{t \sqrt{n}} \log n\right)\nonumber
	\end{align}
\end{claim}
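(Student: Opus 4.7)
\textbf{Proof plan for Claim~\ref{claim:rad_bound}.}

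The plan is to reduce the Rademacher complexity of $\rampclass_t$ to a covering number bound on $\fclass$ in the norm $\gnorm{\cdot}$, then invoke Dudley's entropy integral. The chain of reductions has three Lipschitz layers that need to be composed cleanly.

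First I would show that $\rampclass_t$ inherits an $\epsilon^{-2}$-type covering bound. The ramp $\ramp_t$ is bounded in $[0,1]$ and $\tfrac{1}{t}$-Lipschitz, while Claim~\ref{claim:marg_lipschitz} says $\pmargrob(\cdot, x)$ is $1$-Lipschitz in $\gnorm{\cdot}$ uniformly over $x$. So any $\gnorm{\cdot}$-cover of $\fclass$ at scale $t\epsilon$ yields an $L^\infty(\edist)$-cover of $\rampclass_t$ at scale $\epsilon$: in particular,
\[
\log \cover_{L^\infty(\edist)}(\epsilon, \rampclass_t) \le \log \cover_{\gnorm{\cdot}}(t\epsilon, \fclass).
\]

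Second I would build a $\gnorm{\cdot}$-cover of $\fclass$ by combining per-layer covers. Since $\gnorm{\clsfr - \widehat{\clsfr}}^2 = \sum_i \opnorm{W_i - \widehat{W}_i}^2$, a product of $\epsilon_i$-covers of each $\gW_i$ gives an $\gnorm{\cdot}$-cover of $\fclass$ at scale $\sqrt{\sum_i \epsilon_i^2}$. Using Condition~\ref{cond:eps_sq} on each $\gW_i$ with complexity $\comp_i \triangleq \comp_{\opnorm{\cdot}}(\gW_i)$, the log covering number is at most $\sum_i \comp_i^2/\epsilon_i^2$. Optimizing the allocation subject to $\sum_i \epsilon_i^2 = \epsilon^2$ (Lagrange multipliers give $\epsilon_i^2 \propto \comp_i$) yields
\[
\log \cover_{\gnorm{\cdot}}(\epsilon, \fclass) \le \frac{\bigl(\sum_i \comp_i\bigr)^2}{\epsilon^2},
\]
so that $\log \cover_{L^\infty(\edist)}(\epsilon, \rampclass_t) \le (\sum_i \comp_i)^2 / (t^2 \epsilon^2)$.

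Finally I would apply Dudley's entropy integral. Since each function in $\rampclass_t$ takes values in $[0,1]$, the integral need only run up to $1$, and we have
\[
\radavg(\rampclass_t) \lesssim \inf_{\alpha > 0} \left\{ \alpha + \frac{1}{\sqrt{n}} \int_{\alpha}^{1} \sqrt{\log \cover_{L^\infty(\edist)}(\epsilon, \rampclass_t)}\, d\epsilon \right\} \lesssim \alpha + \frac{\sum_i \comp_i}{t\sqrt{n}} \log(1/\alpha).
\]
Setting $\alpha = 1/n$ gives the claimed bound with the $\log n$ factor.

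\textbf{Where I expect to slow down.} The Lipschitz composition and Dudley step are standard; the only genuinely delicate point is the allocation argument in step two, where I need to verify that the $\epsilon^{-2}$ covering condition composes additively across layers (producing $\sum_i \comp_i$ rather than, say, $\sqrt{\nndepth} \max_i \comp_i$). The divergent Dudley integral near $0$ also has to be truncated rather than integrated to $0$, which is the source of the $\log n$ factor and must be handled by the standard truncation trick rather than a naive chaining bound.
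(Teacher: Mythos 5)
Your proposal matches the paper's proof essentially step for step: a Lipschitz transfer of covers from $\fclass$ (in $\gnorm{\cdot}$, via Claim~\ref{claim:marg_lipschitz} and the $1/t$-Lipschitzness of $\ramp_t$) to $\rampclass_t$, followed by Dudley's entropy integral with the standard truncation; the only cosmetic differences are that you use $L^\infty(\edist)$ covers where the paper uses $\empd$ covers (yours imply theirs), and you re-derive the per-layer cover composition with the $\epsilon_i^2 \propto \comp_i$ allocation where the paper cites it as Lemma A.3 of \citet{wei2019improved}. The approach is correct.
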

As the proof of Claim~\ref{claim:rad_bound} is standard, we provide a sketch of its proof.
\begin{proof} [Proof sketch of Claim~\ref{claim:rad_bound}]
	First, by Lemma A.3 of~\citep{wei2019improved}, we obtain that $\fclass$ satisfies Condition~\ref{cond:eps_sq} with norm $\gnorm{\cdot}$ and complexity $\comp_{\gnorm{\cdot}}(\fclass) \triangleq \sum_i \comp_{\opnorm{\cdot}}(\fclass_i)$. Now let $\hat{\fclass}$ be a $t\epsilon$-cover of $\fclass$ in $\gnorm{\cdot}$. We define the $\empd$-norm of a function $f : \domain \rightarrow \R$ as follows: 
	\begin{align}
		\|f\|_{\empd} \triangleq \sqrt{\E_{\edist}[f(x)^2]}\nonumber
	\end{align}Then it is standard to show that
	\begin{align}
		\hat{\rampclass}_t \triangleq \{\ramp_t \circ \pmargrob(\hat{\clsfr}, \cdot) : \hat{\clsfr} \in \hat{\fclass}\}\nonumber
	\end{align}
	is a $\epsilon$-cover of $\rampclass_t$ in $\empd$-norm, because $\ramp_t$ is $1/t$-Lipschitz and $\pmargrob(\clsfr, x)$ is $1$-Lipschitz in $\clsfr$ for norm $\gnorm{\cdot}$ for any fixed $x$. It follows that $\log \cover_{\empd} (\epsilon, \rampclass_t) \le \left \lfloor \frac{\comp^2_{\gnorm{\cdot}}(\fclass)}{t^2 \epsilon^2} \right \rfloor$. Now we apply Dudley's Theorem:
	\begin{align}
		\radavg(\rampclass_t) &\le \inf_{\beta > 0} \left(\beta + \frac{1}{\sqrt{n}} \int_{\beta}^{\infty} \sqrt{\log \cover_{\empd} (\epsilon, \rampclass_t)} d\epsilon \right)\nonumber \\
		&\le  \inf_{\beta > 0} \left(\beta + \frac{1}{\sqrt{n}} \int_{\beta}^{\infty}   \sqrt{\left\lfloor \frac{\comp^2_{\gnorm{\cdot}}(\fclass)}{t^2 \epsilon^2} \right \rfloor} d\epsilon \right) \nonumber
	\end{align}
	A standard computation can be used to bound the quantity on the right, giving the desired result.
\end{proof}

\begin{proof}[Proof of Lemma~\ref{lem:compl_rob}]
First, by the standard relationship between Rademacher complexity and generalization, Claim~\ref{claim:rad_bound} lets us conclude that with probability $1 - \delta$, for any \textit{fixed} $t > 0$, all $\clsfr \in \fclass$ satisfy:
\begin{align}
	\E_{\dist}[\ramp_t(\pmargrob(\clsfr, x))] \le \E_{\edist}[\ramp_t(\pmargrob(\clsfr, x))] + O\left(\frac{\sum_i \comp_{\opnorm{\cdot}}(\gW_i)}{t \sqrt{n}} \log n + \sqrt{\frac{\log 1/\delta}{n}}\right)\nonumber
\end{align}
We additionally note that $\ramp_t(\pmargrob(\clsfr,x)) = 1$ when $x \notin \rob{\lblr}$, because in such cases $\pmargrob(\clsfr, x) = 0$. It follows that $1(x \notin \rob{\lblr}) \le \ramp_t(\pmargrob(\clsfr, x))$. Thus, we obtain
\begin{align}
\Ellrob(\lblr) \le \E_{\edist}[\1(\pmargrob(\clsfr,x) \le t)] + O\left(\frac{\sum_i  \comp_{\opnorm{\cdot}}(\gW_i)}{t\sqrt{n}} \log n + \sqrt{\frac{\log 1/\delta}{n}}\right)\label{eq:compl_rob:1}
\end{align}
It remains to show that~\eqref{eq:compl_rob:2} holds for \textit{all} $t$. It is now standard to perform a union bound over choices of $t$ in the form $t_j \triangleq t_{\min} 2^j$, where $t_{\min} \triangleq \frac{\sum_i  \comp_{\opnorm{\cdot}}(\gW_i) }{\sqrt{n}} \log n$ and $0 \le j \le O(\log n)$, so we only sketch the argument here. We union bound over~\eqref{eq:compl_rob:1} for $t = t_j$ with failure probability $\delta_j = \delta/2^{j + 1}$, so~\eqref{eq:compl_rob:1} will hold for all $t_1, \ldots, t_{j_{\max}}$ with probability $1 - \delta$. For any choice of $t$, there will either be $j$ such that $t/2 \le t_j \le t$, or~\eqref{eq:compl_rob:2} must trivially hold. (See Theorem C.1 of~\citep{wei2019improved} for a more detailed justification.)  As a result, there will be some $j$ such that the right hand side of~\eqref{eq:compl_rob:1} is bounded above by the right hand side of~\eqref{eq:compl_rob:2}, as desired. 
\end{proof}

\begin{proof}[Proof sketch of Theorem~\ref{thm:gen_rob_pl}]
	By Lemma B.2 of~\citep{wei2019improved}, we have $\comp_{\opnorm{\cdot}}(\{W : \|W\|_F \le a\}) =  O(\sqrt{\margdim \log \margdim} a)$. Thus, to obtain~\eqref{eq:gen_rob_pl:1}, it suffices to apply Lemma~\ref{lem:compl_rob} for all choices of $a$ using a standard union bound technique; see for example the proof of Theorem 3.1 in~\citep{wei2019improved}. To obtain the other generalization bounds, we can follow a similar argument for Lemma~\ref{lem:compl_rob} to prove its analogue for other variants of all-layer margin, and then repeat the same union bound over the weight matrix norms as before. 
\end{proof}

\subsection{Data-dependent lower bounds on all-layer margin} \label{sec:all_layer_lb}
We will now provide lower bounds on the all-layer margins used in Theorem~\ref{thm:gen_rob_pl} in the case when the activation $\phi$ has $\smooth$-Lipschitz derivative. In this section, it will be convenient to modify the indexing to count the activation as its own layer, so there are $2\nndepth - 1$ layers in total. Let $\size_{(i)}(x)$ denote the $\|\cdot\|_2$ norm of the layer preceding the $i$-th matrix multiplication, where the parenthesis in the subscript distinguishes between weight indices and layer indices (which also include the activation layers). Define $\lip_{j \ot i}(x)$ to be the Jacobian of the $j$-th layer with respect to the $i- 1$-th layer evaluated at $x$. Define $\gamma(\clsfr(x), y) \triangleq \clsfr(x)_y - \max_{i \ne y} \clsfr(x)_i$. We use the following quantity to measure stability in the layer following $W_{(i)}$:
\begin{align}
		\nnlip_{(i)}(x, y) &\triangleq \frac{\size_{(i - 1)}(x) \lip_{2\nndepth - 1 \ot 2i}(x)}{\gamma(F(x), y)} + \psi_{(i)}(x, y) \nonumber
		\end{align}	for a secondary term $\psi_{(i)}(x, y)$ given by
			\begin{align}
			\psi_{(i)}(x, y) &\triangleq \sum_{j = i}^{\nndepth - 1} \frac{\size_{(i - 1)}(x) \lip_{2j \ot 2i}(x)}{\size_{(j)}(x)}\nonumber + \sum_{1 \le j \le 2i-1 \le j' \le 2\nndepth - 1} \frac{\lip_{j' \ot 2i}(x) \lip_{2i - 2 \ot j}(x)}{\lip_{j' \ot j}(x)} \nonumber \\&+ \sum_{1 \le j \le j' \le 2\nndepth - 1} \sum_{j''=\max\{2i, j\}, j'' \textup{even}}^{j'} \frac{\smooth \lip_{j' \ot j'' + 1}(x) \lip_{j'' - 1 \ot 2i}(x) \lip_{j'' - 1 \ot j}(x)\size_{(i - 1)}(x)}{ \lip_{j' \ot j}(x)} \nonumber
			\end{align}
We now have the following lower bounds on $\pmarg(\clsfr, x, y)$ and $\pmargrob(\clsfr, x)$: 
\begin{proposition} [Lemma C.1 from~\citep{wei2019improved}]
	\label{prop:all_layer_bound} In the setting above, if $\gamma(\clsfr(x), y) > 0$, we have
	\begin{align}
		\pmarg(\clsfr, x, y) \ge \frac{1}{\|\{\nnlip_{(i)}(x, y)\}_{i = 1}^\nndepth \|_2}\nonumber
	\end{align}
	Furthermore, if $\gamma(\clsfr(x'), \argmax_i \clsfr(x)_i) > 0$ for all $x' \in \perturb(x)$, then
	\begin{align}
			\pmargrob(\clsfr, x) \ge \min_{x' \in \perturb(x)} \frac{1}{\|\{\nnlip_{(i)}(x', \argmax_i \clsfr(x)_i)\}_{i = 1}^\nndepth \|_2}\nonumber
		\end{align}
\end{proposition}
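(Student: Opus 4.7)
The plan is to prove the lower bound on $\pmarg(\clsfr, x, y)$ by showing that every perturbation $\delta = (\delta_1, \ldots, \delta_\nndepth)$ with $\gnorm{\delta} < 1/\|\{\nnlip_{(i)}(x,y)\}_{i=1}^\nndepth\|_2$ preserves the predicted label $y$ under the perturbed network $\clsfr(x, \delta)$. The robust bound then follows directly from the identity $\pmargrob(\clsfr, x) = \min_{x' \in \perturb(x)} \pmarg(\clsfr, x', \argmax_i \clsfr(x)_i)$ by applying the pointwise bound for each $x' \in \perturb(x)$ (the hypothesis $\gamma(\clsfr(x'), \argmax_i \clsfr(x)_i) > 0$ ensures the pointwise bound is in force) and taking the minimum. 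The core analytic tool is a multi-layer Taylor expansion of $\clsfr(x, \delta)$ around $\delta = 0$ that separately tracks the first-order contribution of each $\delta_i$ and the quadratic cross-terms introduced by the $\smooth$-Lipschitzness of $\phi'$.

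\textbf{Key steps.} First I would unroll the recursion $h_i(x, \delta) = f_i(h_{i-1}(x, \delta)) + \delta_i \|h_{i-1}(x,\delta)\|_2$ and isolate the principal contribution of each $\delta_i$: it enters at layer $2i$ scaled by $\|h_{i-1}(x)\|_2 = \size_{(i-1)}(x)$, then propagates to the output through the remaining $2\nndepth - 2i$ layers whose composed Jacobian at $x$ has operator norm at most $\lip_{2\nndepth-1 \ot 2i}(x)$; summing this over $i$ and dividing by $\gamma(\clsfr(x), y)$ reproduces the first term of $\nnlip_{(i)}(x, y)$. Second I would bound the Taylor remainder, which splits into three kinds of cross-terms: (i) a perturbation $\delta_i$ shifts $\|h_{j-1}(x,\delta)\|_2$ away from $\size_{(j-1)}(x)$, rescaling how a later $\delta_j$ enters and producing the $\size_{(i-1)}(x)\lip_{2j \ot 2i}(x)/\size_{(j)}(x)$ summands of $\psi_{(i)}$; (ii) an earlier $\delta_{i'}$ moves the point at which a later Jacobian $\lip_{j' \ot 2i}$ is evaluated, and by $\smooth$-Lipschitzness of $\phi'$ this produces the $\smooth$-weighted triple sum; (iii) expressing the resulting Jacobian products in the form $\lip_{j' \ot 2i}(x)\lip_{2i-2 \ot j}(x)/\lip_{j' \ot j}(x)$ uses the unperturbed composed Jacobian as a normalizer so that each summand becomes linear in a single $\|\delta_i\|$. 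Combining these gives $|\gamma(\clsfr(x,\delta),y) - \gamma(\clsfr(x),y)| \le \gamma(\clsfr(x),y)\sum_i \|\delta_i\|\,\nnlip_{(i)}(x,y)$; applying Cauchy--Schwarz in $\gnorm{\cdot}$ together with the assumed upper bound on $\gnorm{\delta}$ yields $\gamma(\clsfr(x,\delta),y) > 0$, so the predicted label remains $y$, and the definition of $\pmarg$ gives the claimed inequality.

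\textbf{Main obstacle.} The delicate step is turning the genuinely quadratic cross-interactions in (ii)--(iii) into a sum of terms each \emph{linear} in some $\|\delta_i\|$, which is what enables the final Cauchy--Schwarz. The trick is to charge every quadratic product $\|\delta_{i_1}\|\|\delta_{i_2}\|$ to one of its coordinates by using an existing Jacobian norm $\lip_{j'\ot j}(x)$ along the unperturbed path as a normalizer; bookkeeping which pair $(j, j')$ supplies this normalizer for which pair $(i_1, i_2)$ is exactly what produces the triple sum $\sum_{j \le 2i-1 \le j'} \sum_{j''}$ in $\psi_{(i)}(x,y)$. The doubled indexing convention, in which weight $i$ sits at layer $2i$ and the following activation at layer $2i+1$, further complicates the accounting. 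Since the statement is imported verbatim from Lemma~C.1 of~\citep{wei2019improved}, the plan amounts to porting that argument with careful attention to these index conventions rather than constructing a new proof from scratch.
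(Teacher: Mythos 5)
The paper itself offers no proof of this proposition --- it is imported verbatim from Lemma C.1 of \citet{wei2019improved} --- and your reconstruction follows the same strategy as that source: show that any perturbation $\delta$ flipping the prediction must satisfy $\sum_i \|\delta_i\|\,\nnlip_{(i)}(x,y) \ge 1$, hence $\gnorm{\delta} \ge 1/\|\{\nnlip_{(i)}(x,y)\}_{i=1}^{\nndepth}\|_2$ by Cauchy--Schwarz, and then obtain the robust version by applying the pointwise bound at each $x' \in \perturb(x)$ and using the definition $\pmargrob(\clsfr,x) = \min_{x'\in\perturb(x)} \pmarg(\clsfr,x',\argmax_i \clsfr(x)_i)$. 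One caution: the unconditional inequality $|\gamma(\clsfr(x,\delta),y)-\gamma(\clsfr(x),y)| \le \gamma(\clsfr(x),y)\sum_i\|\delta_i\|\,\nnlip_{(i)}(x,y)$ as you state it is stronger than what the source establishes --- the actual argument is a layer-by-layer bootstrap that controls the perturbed hidden-layer norms and Jacobians only under the standing smallness hypothesis on $\delta$, rather than a global Taylor bound --- but since you only invoke the inequality in exactly that regime, the plan is sound.
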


\section{Experiments} \label{sec:experiments}
\subsection{Empirical support for expansion property using GANs} \label{sec:gan_exp}
In this section we provide additional details regarding the GAN verification depicted in Figure~\ref{fig:intro} (left). We use 128 by 128 images sampled from a pre-trained BigGAN~\citep{brock2018large}. We categorize images into 10 superclasses chosen in the robustness library of~\citet{robustness}: dog, bird, insect, monkey, car, cat, truck, fruit, fungus, boat. These superclasses consist of all ImageNet classes which fall under the category of the superclass. To sample an image from a superclass, we uniformly sample an ImageNet class from the superclass and then sample from the GAN conditioned on this class. We sample 1000 images per superclass and train a ResNet-56~\citep{he2016deep} to predict the superclass, achieving 93.74\% validation accuracy.

Next, we approximately project GAN images onto the mislabeled set of the trained classifier. We approximate the projection as follows: we optimize an objective consisting of the $\ell_2$ distance from the original image and the negative cross entropy loss of the pretrained classifier w.r.t the superclass label. Letting $M$ denote the GAN mapping, $x$ the original image, $y$ the label, and $\clsfr$ the pre-trained classifier, the objective is as follows:
\begin{align}
\min_{z} \|x - M(z)\|_2^2 - \lambda_{\textup{ce}}\ell_{\textup{cross-ent}}(\clsfr(M(z)), y)\nonumber
\end{align}
We optimize $z$ for 2000 gradient descent steps using $\lambda_{\textup{ce}} = 10$ and a learning rate of $0.0003$, intialized with the same latent variable as was used to generate $x$. The resulting $M(z)$ is a neighbor of $x$ in the set $\mistakes{\clsfr}$, the mistakenly labeled set of $\clsfr$. 

After performing this procedure on 200 GAN images sampled from each class, we find that 20\% of these images $x$ have a neighbor $x' \in \mistakes{\clsfr}$ with $\|x - x'\|_2 \le 19.765$. Note that this corresponds to modifying each pixel by 0.024 on average for pixel values in [0, 1]. We use $\widehat{\gM}$ to denote the set of mislabeled neighbors found this way. From visual inspection, we find that the neighbors appear very visually similar to the original image, suggesting that it is appropriate to regard these images as ``neighbors''. In Figure~\ref{fig:intro}, we visualize typical examples of the neighbors found by this procedure. Thus, setting $\perturb(x) = \{x' : \|x' - x\|_2 \le \frac{19.765}{2}\}$, the set $\mistakes{\clsfr}$, which has probability 0.0626, has a relatively large neighborhood induced by $\perturb$ of probability 0.2. This supports our expansion assumption, especially the additive notion in Section~\ref{sec:pl_proof}.

Next, we use this same classifier as a pseudolabeler to perform self-training on a dataset of 10000 additional unlabeled images per superclass, where these images were sampled independently from the 200 GAN images in the previous step. We add input consistency regularization to the self-training procedure using VAT~\citep{miyato2018virtual}. After self-training, the validation accuracy of new classifier $\widetilde{\lblr}$ improves to 95.69\%. 

Furthermore, we evaluate performance of the self-trained classifier $\widetilde{\lblr}$ on a subset of $\widehat{\gM}$ with distance greater than 1 from its neighbor. We let $\widehat{\gM}'$ denote this subset. We choose to filter $\widehat{\gM}$ this way to rule out cases where the original neighbor was already misclassified. We find that $\widetilde{\lblr}$ achieves 67.27\% accuracy on examples from $\widehat{\gM}'$. 

In addition, Figure~\ref{fig:gan_correction} demonstrates that $\widetilde{\lblr}$ is more accurate on examples from $\widehat{\gM}'$ which are closer to the original neighbor used to initialize the projection. This provides evidence that input-consistency-regularized self-training is indeed correcting the mistakes of the pseudolabeler by relying on correctly-pseudolabeled neighbors for denoising, because Figure~\ref{fig:gan_correction} shows that examples which are closer to their neighbors are more likely to be denoised. Finally, we also remark that Figure~\ref{fig:gan_correction} provides evidence that the denoising mechanism does indeed generalize from the self-training dataset to the population, because neither examples in $\widehat{\gM}'$ nor their original neighbors appeared in the self-training dataset. 

\begin{figure}
\centering
	\includegraphics[width=0.4\textwidth]{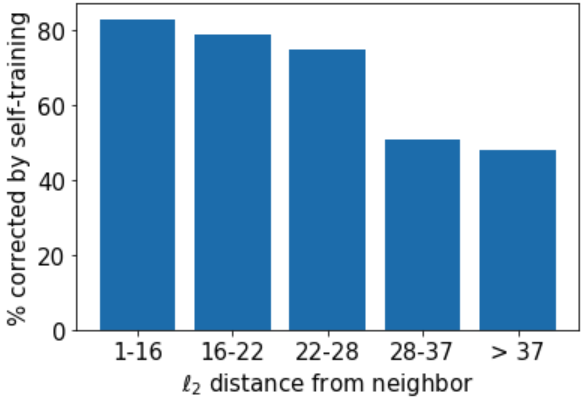}
	\caption{\textbf{Self-training corrects mistakenly labeled examples that are close to correctly labeled neighbors.} We partition examples in $\widehat{\gM}'$ (defined in Section~\ref{sec:gan_exp}) into 5 bins based on their $\ell_2$ distance from the neighbor used to initialize the projection, and plot the percentage of examples in each bin whose labels were corrected by self-training. The bins are chosen to be equally sized. The plot suggests that as a mistakenly labeled example is closer to a correctly labeled example in input space, it is more likely to be corrected by self-training. This supports our theoretical intuition that input-consistency-regularized self-training denoises pseudolabels by bootstrapping an incorrectly pseudolabeled example with its correctly pseudolabeled neighbors.} \label{fig:gan_correction}
\end{figure}
\subsection{Pseudolabeling experiments} 
\label{sec:pl_exp}
In this section, we verify that the theoretical objective in~\eqref{eq:pop_pl_obj} works as intended. We consider an unsupervised domain adaptation setting where we perform self-training using pseudolabels from the source classifier. We evaluate the following incremental steps towards optimizing the ideal objective~\eqref{eq:pop_pl_obj}, with the aim of demonstrating the improvement from adding each component of our theory:

\noindent{\bf Source:} We train a model on the labeled source dataset and directly evaluate it on the target validation set.

\noindent{\bf PL:} Using the classifier obtained above, we produce pseudolabels on the target training set and train a new classifier to fit these pseudolabels.

\noindent{\bf PL+VAT:} We consider the case when the perturbation set $\perturb(x)$ in our theory is given by an $\ell_2$ ball around $x$. We train a classifier to fit pseudolabels while regularizing adversarial robustness on the target domain using the VAT loss of~\citep{miyato2018virtual}, obtaining the following loss over classifier $\clsfr$:
\begin{align}
	\gL(\clsfr) \triangleq L_{\textup{cross-ent}}(\clsfr, \plblr) + \lambda_v L_{\textup{VAT}}(\clsfr)\nonumber
\end{align}
Note that this loss only enforces true stability on examples where $\clsfr(x)$ correctly predicts $\plblr(x)$. For pseudolabels not fit by $\clsfr$, the cross-entropy loss discourages the model from being confident, and therefore the discrete labels may still easily flip under input transformations for such examples.

\noindent{\bf PL+VAT+AMO:} Because the theoretical guarantees in Theorem~\ref{thm:pop_denoise_main} are for the population loss, we apply the AMO algorithm of~\citep{wei2019improved} in the VAT loss term to regularize the robust all-layer margin (see Section~\ref{sec:finite_sample}). This encourages robustness on the training set to generalize better. 

\noindent{\bf PL+VAT+AMO+MinEnt:} Note that PL+VAT only encourages robustness for examples which fit the pseudolabel, but an ideal classifier should not fit pseudolabels which disagree with the ground-truth. As the bound in Theorem~\ref{thm:pop_denoise_main} improves with the robustness of $\clsfr$, we aim to also encourage robustness for examples where $\clsfr$ does not match $\plblr$. To this end, we modify the loss to allow the classifier to ignore $c$ fraction of the pseudolabels and optimize min-entropy loss on these examples instead. We provide additional details on how to select the pseudolabels to ignore below.

\noindent{\bf MinEnt+VAT+AMO:} We investigate the impact of the pseudolabels by removing them from the objective. We instead rely on the following loss which simply performs entropy minimization on the target while fitting the source dataset: 
\begin{align}
	\Ellobj(\clsfr) \triangleq \lambda_s L_{\textup{cross-ent, src}}(\clsfr) + \lambda_t L_{\textup{min-ent, tgt}}(\clsfr) + \lambda_{v} L_{\textup{VAT, tgt}}(\clsfr)\nonumber
\end{align}
We include the source loss for training stability. As before, we apply the AMO algorithm in the VAT loss term to encourage robustness of the classifier to generalize.

Table~\ref{tab:pl_exp} shows the performance of these methods on six unsupervised domain adaptation benchmarks. We see that performance improves as we add additional components to the objective to match the theory. We note that the goal of these experiments is to validate our theory, not to push state-of-the-art for these datasets, which often relies on domain confusion~\citep{tzeng2014deep,ganin2016domain,tzeng2017adversarial}, which is outside the scope of our theory. For example,~\citet{shu2018dirt} achieve strong results on these benchmarks by using a domain confusion technique while optimizing VAT loss and entropy minimization on the target while training on labeled source data. Our results for MinEnt+VAT+AMO show that when the domain confusion is removed, performance suffers and is actually worse than training on the source only for all datasets except STL-10 to CIFAR-10. We provide additional experimental details below. 
We use the same dataset setup and model architecture for each dataset as~\citep{shu2018dirt}. All classifiers are optimized using SGD with cosine learning rate and weight decay of 5e-4 and target batch size of 128. The value of the learning rate is tuned on the validation set for each dataset and method in the range of values $\{0.03, 0.01, 0.003, 0.001\}$. We choose $\lambda_v$, the coefficient of the VAT loss, by tuning in the same manner in the range $\{3, 10, 30\}$. For MinEnt+VAT+AMO, we fix the best hyperparameters for PL+VAT+AMO+MinEnt and tune $\lambda_s \in \{0.25, 0.5, 1\}$ and fix $\lambda_t = 1$. We also tune the batch size for the source loss in $\{64, 128\}$. Table~\ref{tab:pl_exp} depicts accuracies on the target validation set. We use early stopping and display the best accuracy achieved during training. All displayed accuracies are on one run of the algorithm, except for the (+MinEnt) method, where we average over 3 independent runs with the same hyperparameters.

To compute the VAT loss~\citep{miyato2018virtual}, we take one step of gradient descent in image space to maximize the KL divergence between the perturbed image and the original. We then normalize this gradient to $\ell_2$ norm 1 and add it to the image to obtain the perturbed version. To incorporate the AMO algorithm of~\citep{wei2019data}, we also optimize adversarial perturbations to the three hidden layers preceding pooling layers in the DIRT-T architecture. The initial values of the perturbations are set to 0, and we jointly optimize them with the perturbation to the input using one step of gradient ascent with a learning rate of 1. 

Finally, we provide details on how we choose pseudolabels to ignore for the PL+VAT+AMO+MinEnt objective. Some care is required in this step to prevent the optimization objective from falling into bad local minima. We will maintain a model whose weights are the exponential moving average of the past model weights, $\clsfr_{\textup{ema}}$. Every gradient update, the weights of $\clsfr_{\textup{ema}}$ are updated by $W_{\textup{ema}} \leftarrow 0.999 W_{\textup{ema}} + 0.001W_{\textup{curr}}$, where $W_{\textup{curr}}$ is the current model weight after the gradient update. Our aim is to throw out $\tau_i$-fraction of pseudolabels which maximize $\ell_{\textup{cross-ent}}(\clsfr_{\textup{ema}}(x), \plblr(x))$, where $\plblr(x)$ is the pseudolabel for example $x$, and $i$ indexes the current iteration. We will increase $\tau_i$ linearly from 0 to its final value $\tau$ over the course of training. Towards this goal, we maintain an exponential moving average of the $(1 - \tau_i)$- quantile of the loss, which is updated every iteration using the $(1 - \tau_i)$-quantile of the loss $\ell_{\textup{cross-ent}}(\clsfr_{\textup{ema}}(x), \plblr(x))$ computed on the current batch. We ignore pseudolabels where this loss value is above the maintained exponential moving average for the $(1 - \tau_i)$-th loss quantile.

\begin{table}
	\centering
	\caption{Validation accuracy on the target data of various self-training methods. We see that performance improves as we add components of our theoretical objective~\eqref{eq:pop_pl_obj}.} \label{tab:pl_exp}	\label{tab:wiki-103-transformer}
	\begin{tabular}{c | c c c c c c}
		Source&  MNIST & MNIST & SVHN & SynDigits & SynSigns & STL-10\\
		Target & SVHN & MNIST-M & MNIST & SVHN & GTSRB & CIFAR-10\\
			\cline{1-7}
		Source Only  &  35.8\% &  57.3\% & 85.4\% & 86.3\% & 77.8\% & 58.7\%\\
		MinEnt + VAT + AMO & 20.6\% & 28.9\% & 83.2\% & 83.6\% & 42.8\% & 67.6\% \\
		PL Only  &  38.3\% & 60.7\% & 92.3\% & 90.6\% & 85.7\% & 62.0\%\\
		+ VAT  & 41.7\% & 79.8\% & 97.6\% & 93.4\%& 90.5\% & 62.3\%\\
		+ AMO & 42.5\% & 81.4\% & 97.9\% & 93.8\% & 93.0\% & 63.9\%\\
		+ MinEnt  & 46.8\% & 93.8\% & 98.9\% & 94.8\% & 95.4\% & 67.0\%
					\end{tabular}
\end{table}

\end{document}